\RequirePackage{iftex}
\ifPDFTeX\pdfoutput=1\fi
\documentclass[11pt]{article}
\usepackage[margin=1in]{geometry}
\usepackage[authoryear,round]{natbib}
\usepackage{amsmath,amssymb,amsthm,mathtools,bbm,microtype}
\usepackage{thm-restate}
\usepackage[colorlinks=true,linkcolor=blue,citecolor=blue,urlcolor=blue]{hyperref}
\usepackage[nameinlink,noabbrev,capitalize]{cleveref}
\newtheorem{theorem}{Theorem}[section]
\newtheorem{proposition}[theorem]{Proposition}
\newtheorem{lemma}[theorem]{Lemma}
\newtheorem{corollary}[theorem]{Corollary}

\theoremstyle{remark}

\crefname{equation}{Eq.}{Eqs.}
\Crefname{equation}{Eq.}{Eqs.}
\crefname{theorem}{Theorem}{Theorems}
\Crefname{theorem}{Theorem}{Theorems}
\crefname{lemma}{Lemma}{Lemmas}
\Crefname{lemma}{Lemma}{Lemmas}
\crefname{proposition}{Proposition}{Propositions}
\Crefname{proposition}{Proposition}{Propositions}
\crefname{corollary}{Corollary}{Corollaries}
\Crefname{corollary}{Corollary}{Corollaries}
\crefname{claim}{Claim}{Claims}
\Crefname{claim}{Claim}{Claims}
\crefname{remark}{Remark}{Remarks}
\Crefname{remark}{Remark}{Remarks}

\let\P\undefined
\DeclareMathOperator{\En}{\mathbb{E}}
\DeclareMathOperator{\P}{\mathbb{P}}
\DeclareMathOperator{\var}{Var}
\DeclareMathOperator{\cov}{Cov}

\DeclareMathOperator*{\argmin}{arg\,min}

\newcommand{\bbR}{\ensuremath{\mathbb{R}}}

\newcommand{\cF}{\ensuremath{\mathcal{F}}}
\newcommand{\cG}{\ensuremath{\mathcal{G}}}
\newcommand{\cH}{\ensuremath{\mathcal{H}}}

\newcommand{\cN}{\ensuremath{\mathcal{N}}}
\newcommand{\cX}{\ensuremath{\mathcal{X}}}

\newcommand{\ind}{\mathbbm{1}}
\newcommand{\eps}{\varepsilon}

\DeclareMathOperator{\conv}{conv}

\DeclareMathOperator{\unif}{\mathsf{unif}}

\newcommand{\reg}{\mathrm{Reg}}
\newcommand{\ewreg}{\mathrm{Reg}^{\mathrm{EW}}}
\newcommand{\fat}{\mathrm{fat}}

\newcommand{\dd}{\mathrm d}
\allowdisplaybreaks
\title{Oracle-Efficient and Parameter-Free Agnostic Smoothed Online Learning}
\author{
  Sasha Voitovych\\
  MIT\\
  {\small{\texttt{voitovyc@mit.edu}}}
  \and
  Adam Block\\
  Columbia University\\
  {\small{\texttt{adam.block@columbia.edu}}}
  \and
  Alexander Rakhlin\\
  MIT\\
  {\small{\texttt{rakhlin@mit.edu}}}
  \and
  Abhishek Shetty\\
  Georgia Tech\\
  {\small{\texttt{shetty@mit.edu}}}
}
\date{}

\begin{document}
\maketitle

\begin{abstract}
Online learning is an attractive framework in many domains because it permits well-defined learning even when data are dependent or chosen adversarially.
This generality, however, comes at a steep price, introducing significant statistical and computational barriers.
Recently, smoothed online learning has emerged as a promising framework that interpolates between the fully adversarial and fully stochastic settings by assuming that the conditional law of each covariate has density at most $1/\sigma$ with respect to some fixed base measure $\mu$, and it is known to match the statistical and computational guarantees of classical learning while still allowing for much of the flexibility of online learning.
However, existing oracle-efficient algorithms require either (i) sampling access to the base measure $\mu$ or (ii) labels that are perfectly predicted by a fixed hypothesis.
Both assumptions limit the applicability of these algorithms, in contrast to statistical learning, where empirical risk minimization (ERM) learns efficiently in the agnostic setting without any knowledge of the data distribution.
We show that neither assumption is necessary, giving the first oracle-efficient algorithm that achieves sublinear regret in the agnostic setting without knowledge of $\mu$.
Our algorithm, based on Gaussian Follow-The-Perturbed-Leader, is parameter-free: it requires no knowledge of $\mu$, the smoothing parameter $\sigma$, or the horizon $T$, and it achieves regret $\widetilde O(d\sqrt{T/\sigma})$ for binary classes of VC dimension $d$ with a single call to an ERM oracle per round, which is optimal up to a $\sqrt{d}$ factor.  En route to establishing the regret bound, we introduce several new techniques that may be of independent interest.
\end{abstract}

\section{Introduction}

Learning from independent data without a priori knowledge of the generating process is the archetypal paradigm in machine learning~\citep{valiant1984theory}: a learner that observes a training sample drawn from an \emph{unknown} distribution can generalize to new examples from the same distribution.
Its main appeal is that it removes the need to model nature's distribution, which may be highly complex and need not belong to a known parametric family.
Beyond the statistical benefits of assuming independence, there are computational advantages: near-optimal learning can be achieved with a simple algorithm, empirical risk minimization (ERM), which simply optimizes its predictions on historical data~\citep{vapnik1971uniform,blumer1989learnability}.

In many modern applications, however, the independence assumption is difficult to justify, let alone verify.
Data often arrive sequentially, and the data-generating process may even react to the learner's own predictions.
Many modern data modalities, such as text, video, and state trajectories of physical systems (e.g., in robotics and control), exhibit strong temporal dependencies.
In interactive decision making, such as bandits and reinforcement learning, the learner's own past decisions determine which data it collects~\citep{lattimore2020bandit,foster2021statistical}.
Financial markets are another example: prices are strongly dependent over time, and the predictions of trading systems influence the actions of market participants, which in turn shape the data those systems observe.
Deployed recommendation systems exhibit similar feedback loops: the system's recommendations influence users' behavior, which in turn shapes the data the system collects~\citep{chaney2018algorithmic,jiang2019degenerate}.
In such settings, guarantees premised on independence need not hold, and algorithms developed assuming independence may not learn, necessitating a theory of learning that makes minimal assumptions on how the data are generated.

Online learning~\citep{littlestone} provides a promise of such a theory: it dispenses with the independence assumption altogether and permits learning even from dependent, possibly adversarial data.
Formally, online learning is a repeated game between a learner and nature: in each round $t$, the learner observes a covariate $X_t$, predicts a label $\hat Y_t$, observes the true label $Y_t$, and suffers loss $\ell(\hat Y_t, Y_t)$.
The learner's goal is to minimize its \emph{regret} with respect to a fixed comparator class $\cF$:
\begin{equation}\label{eq:reg}
\reg_T = \sum_{t=1}^{T} \ell(\hat Y_t, Y_t) - \inf_{f \in \cF} \sum_{t=1}^T \ell(f(X_t), Y_t).
\end{equation}
Importantly, the data-generating process may be arbitrary; in particular, it may adapt to the learner's past actions.
This generality, however, comes at both a statistical and a computational price.

Learnability in online binary classification is characterized by the Littlestone dimension, which may be infinite even for simple classes such as linear threshold functions~\citep{littlestone,bendavid2009agnostic}.
The situation is further exacerbated by computational barriers.
A natural computational framework is the \emph{ERM oracle model}, designed to capture the efficacy of optimization in modern practice.
When queried, an ERM oracle for $\cF$ returns a predictor $f\in\cF$ that minimizes the empirical loss on the given dataset.
This oracle forms the basic computational primitive of statistical learning and is a practically meaningful abstraction, since ERM is routinely solved, at least approximately, with off-the-shelf tools such as stochastic gradient descent.
Ideally, then, one would like online learning algorithms to be \emph{oracle-efficient}, i.e., run efficiently when given query access to an ERM oracle.
Without further assumptions in the sequential setting, however, \citet{hazan2016computational} showed that this goal is unattainable: even for a finite comparator class, access to an ERM oracle does not make online learning computationally tractable.
In other words, in the absence of independence, generalization and learning do not reduce to optimization.

In order to circumvent these barriers, there has been substantial effort in understanding smoothed online learning, which provides a way to interpolate between the stochastic and adversarial regimes~\citep{rakhlin2011online,haghtalab2020smoothed,block2022smoothed, block2024performance, blanchard2025agnostic, blanchard2026characterizing}.
Intuitively, smoothness prevents the adversary from concentrating probability mass on small sets, while still permitting strong temporal dependencies.
Formally, letting $p_t$ denote the law of $X_t$ conditional on the history, we require that there exist a single probability measure $\mu$, referred to as the \emph{base measure}, such that in every round $t$,
\[
\left\|\frac{\dd p_t}{ \dd\mu}\right\|_\infty \leq \frac1\sigma,
\]
where the smoothness parameter $\sigma\in(0,1]$ controls the power of the adversary.
At $\sigma=1$, the covariates are drawn independently from $\mu$ in every round; as $\sigma\downarrow0$, the constraint becomes increasingly permissive.
This relaxation removes both statistical and computational barriers: an extensive literature shows that VC classes are learnable in the smoothed setting for every $\sigma>0$ and, moreover, efficiently so in the ERM-oracle model~\citep{haghtalab2020smoothed,block2022smoothed,haghtalab2022oracle}.
Further, from a statistical perspective, \cite{blanchard2026characterizing} showed that a technical generalization of the smoothed model essentially captures learnability for general sequences of distributions.

However, most oracle-efficient algorithms in this setting assume that $\mu$ \emph{is known to the learner} and rely crucially on the ability to sample from it.
In practice, this amounts to an additional modeling assumption on nature, and the choice of any particular $\mu$ is often hard to justify.
A notable exception is the work of~\citet{block2024performance}, which shows that ERM itself achieves sublinear regret in the \emph{realizable} setting, where the data are labeled by some fixed $f^\star \in \cF$, i.e., $Y_t = f^\star(X_t)$.\footnote{More generally, their result applies to well-specified regression.}
Whether efficient learning is possible in the more general agnostic setting has remained open.
This is in contradistinction to classical statistical learning theory, where a single algorithm (ERM) succeeds in both the realizable and agnostic settings while being entirely parameter-free and requiring no knowledge of the data distribution.
Thus, in an attempt to relax the independence assumptions on nature, we have lost precisely what makes statistical learning appealing: the learner never needs to model nature explicitly.

\paragraph{Main results.}
In this work, we give the first oracle-efficient algorithm that achieves sublinear regret in general agnostic smoothed online learning without the knowledge of the base measure.
Surprisingly, this guarantee is achieved by a natural algorithm: follow-the-perturbed-leader (FTPL) with Gaussian perturbations, which is entirely parameter-free.
It is well known since \cite{hannan1957approximation} that ERM can fail on agnostic data even when the covariate domain has size $1$ (which forces $\sigma = 1$).
Intuitively, this failure stems from a lack of \emph{stability}: ERM is sensitive to individual data points, and its solution can change drastically in response to a single label flip.
A standard way to stabilize ERM is to regularize it, either with an explicit potential, as in follow-the-regularized-leader (FTRL)~\citep{shalev2007primal, abernethy2008competing, mcmahan2017survey}, or with a random perturbation of the objective, as in FTPL~\citep{hannan1957approximation, kalai2005efficient,hutter2005adaptive, devroye2013prediction}.
Motivated by the algorithm in \citet{block2022smoothed}, we show that natural Gaussian perturbations suffice to achieve sublinear regret.
In particular, for binary classification, we show that FTPL with Gaussian perturbations attains
\begin{equation}\label{eq:main-regret}
\En\reg_T\le \widetilde O\left(d \sqrt{\frac T\sigma}\right)
\end{equation}
simultaneously for all $\mu$ and $\sigma$.
This guarantee has optimal dependence on $T$ and $\sigma$, and is statistically suboptimal only by a factor of $\sqrt{d}$~\citep{blanchard2025agnostic}. We also note that in the realizable case the $\sqrt{d}$ factor can be removed via an $L^\star$-regret bound (see~\cref{app:lstar}). This matches the ERM guarantee of~\citet{block2024performance} in the realizable case and is optimal up to logarithmic factors~\citep[Theorem~3]{blanchard2025agnostic}.

We now define our Gaussian FTPL learner formally.
Let $\cF \subseteq \{\pm1\}^{\cX}$ be a comparator class.
In round $t$, the algorithm samples a fresh Gaussian vector $G^{(t)}=(G_1^{(t)},\ldots,G_{t-1}^{(t)}) \sim \cN(0,I_{t-1})$ and computes
\begin{equation}\label{eq:ftpl}
\hat f_t \in \argmin_{f \in \cF} \sum_{s=1}^{t-1} \left[\ell(f(X_s),Y_s)+G_s^{(t)}f(X_s)\right],
\end{equation}
where we let $\ell(\hat y, y) = -y\hat y$ in the binary case.
We use a weighted ERM oracle (\cref{sec:prelims} gives the precise model).
This algorithm is completely parameter-free: it requires no knowledge of the base measure $\mu$, the smoothness parameter $\sigma$, or the horizon $T$.
It thus addresses precisely the limitation of prior work discussed above, achieving oracle-efficient learning adaptively for every $\mu$ and $\sigma$.

Our algorithm and guarantee extend to classes $\cF\subseteq[-1,1]^{\cX}$ and convex $1$-Lipschitz losses by adding an averaging step in each round (\cref{sec:results}).
In terms of the fat-shattering dimension $\fat_\eps(\cF)$ (\cref{sec:prelims}), the resulting bound is $\widetilde O(d\sqrt{T/\sigma})$ for parametric classes with $\fat_\eps(\cF)\lesssim d\log(1/\eps)$, and $\widetilde O_\alpha(T^{(2\alpha+1)/(2\alpha+2)}/\sqrt\sigma)$ for nonparametric classes with $\fat_\eps(\cF)\lesssim\eps^{-\alpha}$, matching the leading exponents of $T$ and $1/\sigma$ in~\citet[Theorem~4]{blanchard2025agnostic}.

\paragraph{Technical overview.}
We sketch the analysis for the binary case, which already contains the main ideas for the general real-valued setting.
A central tool in the analysis of FTPL algorithms is stability~\citep{kalai2005efficient}: intuitively, if the perturbations have controlled magnitude and the algorithm's predictions are stable, a regret bound follows.
However, we find it difficult to analyze the stability of~\cref{eq:ftpl} directly; instead, we are able to show that the predictions of~\cref{eq:ftpl} are close to predictions of an auxiliary algorithm that is itself stable.

In more detail, our proof proceeds in two steps: (i) we first compare the predictions of our FTPL forecaster to those of an auxiliary algorithm, exponential weights over a cover $\cG$ of $\cF$~\citep{vovk1990aggregating, littlestone1994weighted, freund1997decision}; (ii) we then directly bound the regret of the auxiliary algorithm.
We emphasize that the auxiliary algorithm need not be efficient, as it is used only in the analysis.

\textsc{Step 1: Comparison with the auxiliary algorithm.}
Formally, let $\tilde f_t \in \conv(\cG)$ be defined as follows:
\begin{equation}\label{eq:ew}
\tilde f_t:=\sum_{f\in\cG}w_t(f\mid G^{(t)})f, \quad \text{with} \quad w_t(f\mid G^{(t)})\propto \exp\left(-\sum_{s<t}[\ell(f(X_s),Y_s)+G_s^{(t)}f(X_s)]\right),
\end{equation}
such that the weights $w_t(f\mid G^{(t)})$ sum to one over all $f \in \cG$.
Our first technical contribution reduces the regret of $\hat f_t$ to that of $\tilde f_t$.
More precisely, letting $\ewreg$ denote the regret of the learner in~\cref{eq:ew}, we show that, as long as $\cG$ approximates $\cF$ sufficiently well (see~\cref{sec:reduction} for details),
\begin{equation}\label{eq:ew-reduction}
\En \reg_T - \En \ewreg_T \le \widetilde O \left(d \sqrt{\frac{T}{\sigma}}\right).
\end{equation}
The proof of this reduction has two steps.
First, we compare the predictions of $\hat f_t$ and $\tilde f_t$ on past covariates $X_{1:t-1}$, showing that for every $1\le t\le T$,
\begin{equation}\label{eq:historical-comparison}
\En\sum_{s = 1}^{t-1} (\En_{G^{(t)}} \hat f_t(X_s) - \En_{G^{(t)}} \tilde f_t(X_s))^2 \lesssim d^2 \log^2(2T/\sigma).
\end{equation}
This follows by expressing the prediction vectors of $\hat f_t$ and $\tilde f_t$ as gradients of certain potentials and using Stein's lemma~\citep{stein1981estimation, abernethy2014smoothing} to relate differences in predictions to differences in potentials.

Next, we use an improved version of the \emph{surprise lemma} of~\citet{block2024performance} to translate~\cref{eq:historical-comparison} into closeness of predictions on the current covariate $X_t$.
Specifically, adapting arguments from~\cite{xie2022coverage,block2024performance} to convex hulls of VC classes, we prove that for any sequence of predictable functions $h_t \in \conv(\pm\cF)$,
\begin{align*}
\En\sum_{t=1}^T|h_t(X_t)| &\lesssim \sqrt{\frac{\log(2T)}\sigma \En\sum_{t=1}^T\sum_{s<t}h_t(X_s)^2} +\sqrt{\frac{dT}\sigma}\log^2(2T/\sigma).
\end{align*}
Intuitively, this shows that the value of a predictable test function $h_t$ at a new covariate $X_t$ is controlled by its cumulative second moments on past covariates; informally, values of $|h_t(X_t)|$ that are surprisingly large relative to the history cannot occur too often.
Applied to differences $h_t=(\En_{G^{(t)}}\hat f_t-\En_{G^{(t)}}\tilde f_t)/2$ and combined with~\cref{eq:historical-comparison}, this shows that predictions $\hat f_t(X_t)$ and $\tilde f_t(X_t)$ are close on average.
This proves~\cref{eq:ew-reduction} since the loss is $1$-Lipschitz.

\textsc{Step 2: Bounding the regret of the auxiliary algorithm.}
Thus, we have reduced our problem to bounding the regret of the exponential weights algorithm in~\cref{eq:ew}.
Here, a central difficulty is that, while typical worst-case bounds on the regret of exponential weights require learning rate $\eta \propto 1/\sqrt{T}$, the algorithm described by \cref{eq:ew} uses a \emph{constant learning rate} (i.e. $\eta =1$), for which these bounds are vacuous \citep{cesa2006prediction}.
We show that despite this apparent hurdle, the Gaussian perturbations themselves provide the necessary stability to ensure low regret.
Precisely, we show that the predictions of exponential weights with Gaussian perturbations have low cumulative variance:
\[
\sum_{t=1}^T \En\var_{f \sim w_t(\cdot \mid G^{(t)})}(f(X_t)) \le \widetilde O\left( \sqrt{\frac{T(\log|\cG| + d)}{\sigma}}\right).
\]
Combining this with the standard variance-based analysis of exponential weights~\citep{cesa2007secondorder,deRooij2014follow} yields a corresponding regret bound:
\[
\En \ewreg_T \lesssim d\log(2T/\sigma) + \sum_{t=1}^T \En\var_{f \sim w_t(\cdot \mid G^{(t)})}(f(X_t)) \le \widetilde O\left( \sqrt{\frac{T(\log|\cG| + d)}{\sigma}}\right).
\]
By controlling the size of the cover (\cref{prop:cover}) and using~\cref{eq:ew-reduction}, we obtain~\cref{eq:main-regret}.

\paragraph{Conventions and notation.}
We write $\log$ for the natural logarithm.
For an integer $n$, we let $[n]$ denote the set $\{1, \ldots , n\}$. Throughout, \(C,c>0\) denote sufficiently large and sufficiently small universal constants, respectively.
We additionally use standard non-asymptotic big-$O$ notation as shorthand: we write $f \le O(g)$ or $f\lesssim g$ to mean that $f \le Cg$ for a sufficiently large universal constant $C$.
We use $\widetilde O$ to suppress logarithmic factors in $d$, $T$, and $1/\sigma$.
We write $Z_{a:b}$ for the sequence $(Z_a,Z_{a+1},\ldots,Z_b)$.
For a class $\cH$ of functions, we write $\conv(\cH)$ to denote the pointwise closure of the convex hull of $\cH$ and let $\conv(\pm\cH):=\conv(\cH\cup-\cH)$.

\section{Preliminaries and Related Work}\label{sec:prelims}

We now introduce the necessary technical definitions and notation for our analysis, before moving on to briefly discuss related work.
\paragraph{Online learning and smoothness.}
We reserve $\cX$ to refer to the space of covariates, and $d$ to refer to the VC dimension of the comparator class $\cF$.
For convenience, we assume binary labels are in $\{\pm 1\}$.
For a sequence of covariates $X_t$, labels $Y_t$, and predictions $\hat Y_t$, let $\cH_{t} = \sigma((X_s,Y_s,\hat Y_s)_{s \in [t]})$ be the $\sigma$-algebra induced by the learner's observations up to round $t$, and write $p_t=\mathrm{Law}(X_t\mid\cH_{t-1})$.
For a fixed probability measure $\mu$ and $\sigma\in(0,1]$, the covariate process is $\sigma$-smooth with respect to $\mu$ if, almost surely, for all $t$, $p_t$ is absolutely continuous w.r.t. $\mu$ and
\[
\left\|\frac{\mathrm dp_t}{\mathrm d\mu}\right\|_\infty\le\frac1\sigma.
\]

\paragraph{Complexity measures.}
Let $\cF\subseteq\{\pm1\}^{\cX}$.
We say that $\cF$ \emph{shatters} a set $\{x_1,\ldots,x_n\}\subseteq\cX$ if for every $y_{1:n}\in\{\pm1\}^n$ there exists $f\in\cF$ with $f(x_i)=y_i$ for all $i\in[n]$.
The \emph{VC dimension} of $\cF$ is the largest integer $d$ such that $\cF$ shatters some set of size $d$ (or $\infty$ if no such largest $d$ exists)~\citep{vapnik1971uniform}.

For real-valued classes $\cF\subseteq[-1,1]^{\cX}$ and a scale $\eps>0$, we say that $\cF$ \emph{$\eps$-shatters} a set $\{x_1,\ldots,x_n\}\subseteq\cX$ if there exist witnesses $y_{1:n}\in\bbR^n$ such that for every $\sigma_{1:n}\in\{\pm1\}^n$ there exists $f\in\cF$ with
\[
\sigma_i\left(f(x_i)-y_i\right)\ge\eps\qquad\text{for all } i\in[n].
\]
The \emph{fat-shattering dimension} of $\cF$ at scale $\eps$, denoted $\fat_\eps(\cF)$, is the largest $n$ such that $\cF$ $\eps$-shatters some set of size $n$~\citep{kearns1994efficient,alon1997scale}.
The map $\eps\mapsto\fat_\eps(\cF)$ is non-increasing, and for binary classes $\cF\subseteq\{\pm1\}^{\cX}$, $\fat_\eps(\cF)$ is equal to the VC dimension of the class for all $\eps\in(0,1]$.

\paragraph{ERM oracle model.}
Following~\citet{hazan2016computational} and the literature on oracle-efficient smoothed online learning~\citep{block2022smoothed,haghtalab2022oracle}, we assume access to a weighted agnostic ERM oracle.
In particular, for any $n$ covariates $X_{1:n}$, a sequence $\ell_i \colon [-1,1] \to [-1,1]$ of convex functions, each given either by label loss (that is, $\ell_i(\cdot) = \ell(\cdot, y_i)$ for some label $y_i$) or a linear function, and weights $w_i \ge0$, we assume we can compute a solution to the following optimization problem:
\[
\argmin_{f \in \cF} \sum_{i=1}^n w_i\ell_i(f(X_i)).
\]
We note that weights in our analysis satisfy $\|w\|_\infty \lesssim \sqrt{\log(T)}$ w.h.p.

\paragraph{Related work.}
Due to its robustness and fundamental nature, online learning has been extensively studied over the past decades with a relatively complete understanding of its theoretical foundations \citep{freund1997decision,bendavid2009agnostic,cesa2006prediction,shalev2012online,rakhlin2011online,block21a} and computational properties \citep{hazan2016computational,kalai2005efficient,abernethy2014smoothing,rakhlin2012relax}.
On the other hand, because of the strong statistical and computational lower bounds, more recently a number of works have explored online learning under various relaxations and assumptions to achieve tractable algorithms and improved performance guarantees \citep{rakhlin2011online}.
Of particular relevance are prior works~\citep{haghtalab2020smoothed,block2022smoothed,haghtalab2022oracle,block2023sample,block2024oracle,block2024performance} that focus on the statistical and computational advantages of smoothed online learning under the assumption that the base measure is \emph{known}, as well as applications to other domains like game theory \citep{daskalakis2023smooth}.
Other work sought to extend these results, either by taking advantage of additional structure for specific base measures \citep{block2023smoothed,block2023oracle,block2022efficient}, or considering more general loss functions \citep{bhatt2023smoothed}.

In all of the above work relevant to smoothed online learning, some knowledge of the base measure was required, which limits the applicability of the results.
A separate line of work has sought to develop methods that do not rely on knowing the base measure, thereby broadening the scope and applicability of smoothed online learning techniques \citep{wu2023online,wu2024oracle}, although the former work focuses on oracle-inefficient algorithms under substantially stronger assumptions whereas the latter provides oracle-efficient solutions assuming the features are independent.
In \citet{block2024performance}, the authors demonstrate that it is possible to achieve strong performance guarantees even without knowledge of the base measure using just empirical risk minimization, but they require the labels to be \emph{well-specified} in the sense that their conditional means can be perfectly predicted by an element of the comparator class.
On the other end of the spectrum, \citet{blanchard2025agnostic} consider the agnostic setting where no such well-specification assumption is made, but do not consider oracle-efficient algorithms.
Unlike these prior works, ours is the first to propose an oracle-efficient algorithm (similar in spirit to one in \citet{block2022smoothed}) that is capable of achieving sublinear regret under an arbitrary unknown base measure; moreover, our algorithm's regret scales optimally in both time and smoothness, losing only a $\sqrt{d}$ factor compared to the optimal rate.

\section{Main Results}\label{sec:results}

The main results of our paper are two oracle-efficient regret bounds, for the binary and the real-valued loss cases.
We begin with the binary case.
Recall that Gaussian FTPL as defined in~\cref{eq:ftpl}, in each round $t$, computes:
\[
\hat f_t \in \argmin_{f \in \cF} \sum_{s=1}^{t-1} \left[\ell(f(X_s),Y_s) + G_s^{(t)}f(X_s)\right],
\]
and predicts $\hat Y_t = \hat f_t (X_t)$.
Note that the above can be implemented using one call to the weighted ERM oracle.
For this algorithm, we obtain the following regret guarantee.

\begin{theorem}\label[theorem]{thm:main}
Consider a binary class $\cF \colon \cX \to \{\pm1\}$ of VC dimension $d\ge1$, and let $\ell(\hat y,y)=-\hat yy$ be the binary loss.
For every horizon $T\ge1$ and every covariate process that is $\sigma$-smooth with respect to a fixed $\mu$, the learner in~\eqref{eq:ftpl} satisfies
\begin{equation*}
\En\reg_T \lesssim d\log^2(2T/\sigma)\sqrt{\frac T\sigma}.
\end{equation*}
\end{theorem}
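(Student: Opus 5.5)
We follow the two-step architecture from the technical overview. Fix a horizon $T$ and a scale $\eps = \sigma/T$ (or similar). Let $\cG$ be a finite cover of $\cF$ with $\log|\cG| \lesssim d\log(2T/\sigma)$ in the sense needed for the reduction. For binary VC classes under a $\sigma$-smooth process, the cover can be taken with respect to the base measure: replace $\cF$ by an $\eps$-net in $L^1(\mu)$, of size $(C/\eps)^{O(d)}$ by Haussler's packing bound. Smoothness then gives $\En\sum_t |f(X_t)-g(X_t)| \le T\eps/\sigma$. Although $\mu$ is unknown to the learner, $\cG$ is used only in the analysis, so this is legitimate. We decompose
\[
\En\reg_T = \bigl(\En\reg_T - \En\ewreg_T\bigr) + \En\ewreg_T,
\]
where $\ewreg_T$ is the regret of the Gaussian-perturbed exponential weights predictor $\tilde f_t$ of \cref{eq:ew}, still measured against $\cF$. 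Passing from $\cF$ to $\cG$ costs only $T\eps/\sigma = O(1)$.

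\textbf{Step 1: the reduction \cref{eq:ew-reduction}.} For each $t$ we compare the conditional-mean prediction vectors on the history $X_{1:t-1}$. We write $\En_{G}\hat f_t(X_{1:t-1}) = -\nabla\Phi_{\mathrm{FTPL}}(L)$, where
\[
\Phi_{\mathrm{FTPL}}(L)=\En_G\min_{f\in\cF}\langle f(X_{1:t-1}),L+G\rangle
\]
and $L$ is the label vector. Similarly, $\En_G\tilde f_t(X_{1:t-1}) = \nabla$ of the Gaussian-smoothed log-partition $\En_G\log\sum_{g\in\cG}e^{-\langle g,L+G\rangle}$. The two potentials differ uniformly by at most $\log|\cG|$ plus a cover-approximation error. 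Stein's lemma expresses each gradient as $\En[G\,\cdot\,\text{(potential)}]$-type quantities. Controlling the squared gradient difference by differences of the (convex) potentials, using Gaussian smoothness and a convexity/interpolation argument, yields \cref{eq:historical-comparison}:
\[
\En\sum_{s<t}\bigl(\En_G\hat f_t(X_s)-\En_G\tilde f_t(X_s)\bigr)^2\lesssim d^2\log^2(2T/\sigma).
\]
We then apply the surprise lemma stated in the overview to $h_t=(\En_G\hat f_t-\En_G\tilde f_t)/2\in\conv(\pm\cF)$; this holds since $\tilde f_t\in\conv(\cG)$ and $\cG\subseteq\cF$. This gives
\[
\En\sum_t|h_t(X_t)|\lesssim\sqrt{\tfrac{\log(2T)}{\sigma}\,T\,d^2\log^2(2T/\sigma)}+\sqrt{\tfrac{dT}{\sigma}}\log^2(2T/\sigma)\lesssim d\log^2(2T/\sigma)\sqrt{T/\sigma}.
\]
The loss is linear, and $G^{(t)}$ is fresh and independent of $(X_t,Y_t)$ given the past. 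Hence the expected loss difference at round $t$ is at most $\En|h_t(X_t)|$ up to a factor of 2, which establishes \cref{eq:ew-reduction}.

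\textbf{Step 2: regret of $\tilde f_t$.} We use the second-order bound for exponential weights at learning rate $1$. The perturbation term $\sum_s G_s f(X_s)$ contributes a comparator penalty of order $\En\max_{g}|\sum_s G_s g(X_s)|\lesssim\sqrt{T\log|\cG|}$. Since the perturbation is fresh each round rather than cumulative, we handle it via a be-the-leader style argument with a fixed-$G$ coupling. This gives
\[
\En\ewreg_T\lesssim \log|\cG|+\sqrt{T\log|\cG|}+\sum_t\En\var_{w_t}(f(X_t)).
\]
The crux, and the step we expect to be the main obstacle, is bounding the cumulative variance. Our plan starts from a Stein identity: the variance of $g(X_s)$ under $w_t$ for past $s$ equals $-\partial_{G_s}\En_{w_t}[g(X_s)]$, so $\En_G\sum_{s<t}\var_{w_t}(g(X_s))$ is the Gaussian divergence of the mean map. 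Stein's lemma bounds this by $\En\langle G,\En_{w_t}g(X_{1:t-1})\rangle\le\En\max_g\langle G,g\rangle\lesssim\sqrt{t\log|\cG|}$. This is small historical variance on average over $s$. We then transfer to the fresh point $X_t$ via the surprise lemma applied to the random centered functions $g-\tilde f_t$, or more precisely via a variance version of it. The result is
\[
\sum_t\En\var(f(X_t))\lesssim\sqrt{\tfrac{\log T}{\sigma}\sum_t\sqrt{t\log|\cG|}}+\dots,
\]
which needs care to reach $\widetilde O(\sqrt{T(\log|\cG|+d)/\sigma})$. If this loses a power of $T$, we will instead bound $\sum_{s<t}\var$ by $O(\log|\cG|+d)$ uniformly in $t$, using the stability of the log-partition under perturbation.

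\textbf{Conclusion.} Plugging in $\log|\cG|\lesssim d\log(2T/\sigma)$ makes Step 2 contribute $\widetilde O(\sqrt{dT/\sigma})$, which is dominated by the Step 1 bound $d\log^2(2T/\sigma)\sqrt{T/\sigma}$. This gives the stated bound.
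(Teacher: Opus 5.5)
\textbf{Main gap: the cumulative-variance step.} The architecture matches the paper's, but the step you flag as the crux fails as proposed. Write $v_s:=\En_{G}\var_{f\sim w_t(\cdot\mid G)}(f(X_s))$.

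\begin{itemize}
\item Your Stein computation gives only the $\ell_1$ bound $\sum_{s<t}v_s\lesssim\sqrt{t\log|\cG|}$. The surprise lemma consumes $\sum_{s<t}v_s^2\le\sum_{s<t}v_s$, so this route yields a $T^{3/4}$ rate.
\item Your fallback, $\sum_{s<t}v_s=O(\log|\cG|+d)$ uniformly in $t$, is false. Take $\cG=\{x\mapsto1,\ x\mapsto-1\}$ (so $d=1$) and balanced past labels. The log-odds of the two experts is $-2\sum_{s<t}G_s\sim\cN(0,4(t-1))$, so with $w_1,w_2$ the two weights each $v_s=\En[4w_1w_2]\asymp t^{-1/2}$, and $\sum_{s<t}v_s\asymp\sqrt t$.
\end{itemize}

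The missing idea is to bound the sum of \emph{squares} directly, by weighting the Stein identity with $v$ itself. Since $v_s=-\En_G[G_s\tilde f_t(X_s)]$ and $\tilde f_t\in\conv(\cG)$ for every $G$,
\[
\sum_{s<t}v_s^2=\En_G\sum_{s<t}v_s(-G_s)\tilde f_t(X_s)\le\En_G\max_{g\in\cG}\sum_{s<t}v_s(-G_s)g(X_s)\le\sqrt{2\log|\cG|}\Bigl(\sum_{s<t}v_s^2\Bigr)^{1/2}.
\]
This is \cref{lemma:historical-variance}; in the example above it is $O(1)$ while the $\ell_1$ sum grows. To transfer it to $X_t$, write
\[
\tfrac12\En_G\var_{f\sim w_t(\cdot\mid G)}(f(x))=\En_G\sum_{f,g\in\cG}w_t(f\mid G)w_t(g\mid G)\ind\{f(x)\ne g(x)\},
\]
a predictable mixture over the disagreement class, which has VC dimension $O(d)$. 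The second-moment surprise lemma then gives $\sqrt{T\log|\cG|\log(2T)/\sigma}+\sqrt{dT/\sigma}\log^2(2T/\sigma)$.

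\textbf{Step 1 also needs repair.}
\begin{itemize}
\item \emph{Cover guarantee.} The per-pair smoothness bound $\En\sum_t|f(X_t)-g(X_t)|\le T\eps/\sigma$ controls neither the comparator switch nor $\Phi_t-\widetilde\Phi_t$. Both involve a data-dependent $f\in\cF$ on an adaptive trajectory. You need the uniform high-probability guarantee of \cref{prop:cover}, $\sup_{f\in\cF}\inf_{g\in\cG}\sum_{s\le T}\ind\{f(X_s)\ne g(X_s)\}\lesssim d\log(2T/\sigma)$. The paper obtains it by coupling $X_{1:T}$ into an i.i.d.\ $\mu$-array and applying a relative VC deviation bound to the disagreement class.
\item \emph{Potential gap is not uniform in $G$.} The gap between the maximum over $\cF$ and the maximum over $\cG$ contains the Gaussian supremum $\max\bigl|\sum_{s<t}G_s(f-g)(X_s)\bigr|$ over pairs with few disagreements. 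This is controlled only in $L^2(G)$, by $d^2\log^2(2T/\sigma)$, via a union bound over the projected difference class on the event of \cref{prop:cover}.
\item \emph{From potentials to predictions.} The right tool is Bessel's inequality $\|\En_G[G\Psi(G)]\|_2^2\le\var_G(\Psi)$ applied after Stein (\cref{lemma:gaussian-correlation}(i)), which is dimension-free. An interpolation argument on the smoothed potentials would pay their curvature, which can be of order $\sqrt t$ near ties, and again lose a power of $T$.
\end{itemize}

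Finally, the $\sqrt{T\log|\cG|}$ comparator penalty in Step 2 is unnecessary, though harmless for the rate. By Jensen, $\En_G\widetilde\Phi_{T+1}\ge-\min_{g}\sum_t\ell(g(X_t),Y_t)$, because the centered perturbation vanishes for a fixed comparator.
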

A complete proof of the above statement is given in~\cref{app:binary-proofs}.
Note that the regret bound above is obtained in a completely parameter-free manner.
In particular, the learner in~\eqref{eq:ftpl} requires no knowledge of the base measure $\mu$, the smoothness parameter $\sigma$, or the horizon $T$.

Extending the above guarantee to convex Lipschitz losses requires an additional averaging step, which, intuitively, helps reduce the variance of the predictor.
Formally, in round $t$, our learner resamples the noise $G^{(t)}$ independently $t$ times, and computes (conditionally) independent solutions $\hat f^{(i)}_t$, $i\in[t]$ to the optimization problem in~\cref{eq:ftpl}.
The prediction is then aggregated in the following way:
\begin{equation}\label{eq:algo-rv}
\hat Y_t = \frac{1}{t} \sum_{i=1}^{t} \hat f_t^{(i)}(X_t).
\end{equation}
The above can be recognized as a Monte-Carlo estimate of the expectation $\En_{G^{(t)}}\hat f_t(X_t)$.
By Jensen's inequality $\ell(\En_{G^{(t)}}\hat f_t(X_t), Y_t) \le \En_{G^{(t)}} \ell (\hat f_t(X_t), Y_t)$, so, up to the Monte Carlo error, the averaged prediction incurs no more loss than a single Gaussian FTPL prediction in conditional expectation.
\begin{restatable}{theorem}{mainrealvalued}\label[theorem]{thm:main-real-valued}
Consider a class $\cF \colon \cX \to [-1,1]$ and let $\ell(\hat y,y)$ be convex and $1$-Lipschitz in its first argument, and suppose the covariate process is $\sigma$-smooth with respect to a fixed $\mu$.
Then, the learner in~\eqref{eq:algo-rv} satisfies
\begin{equation}\label{eq:main-rate-real-valued}
\En\reg_T\lesssim \inf_{0<\eps\le1/16} \left\{\left(1+\fat_{c\eps}(\cF)+\sqrt T\eps\right) \sqrt{\frac T\sigma}\log^{3}\left(\frac{2T}{\sigma\eps}\right)\right\},
\end{equation}
where $c$ is a universal constant.
\end{restatable}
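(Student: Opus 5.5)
We follow the two-step template of the technical overview, now for real-valued $\cF$ and a convex $1$-Lipschitz loss. Fix $\eps\in(0,1/16]$. First we dispose of the Monte Carlo averaging. Conditionally on $\cH_{t-1}$ and $X_t$, the $t$ predictions $\hat f_t^{(i)}(X_t)$ are i.i.d.\ in $[-1,1]$ with mean $\En_{G^{(t)}}\hat f_t(X_t)$. The loss is $1$-Lipschitz, so the loss of $\hat Y_t$ exceeds $\ell(\En_{G^{(t)}}\hat f_t(X_t),Y_t)$ by at most $O(1/\sqrt t)$ in expectation, which sums to $O(\sqrt T)$. By convexity it then suffices to bound the regret of the ``mean'' learner $\bar f_t:=\En_{G^{(t)}}\hat f_t$. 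We linearize: let $g_t\in\partial\ell(\bar f_t(X_t),Y_t)$, so $|g_t|\le1$. Then
\[
\ell(\bar f_t(X_t),Y_t)-\ell(f(X_t),Y_t)\le g_t(\bar f_t(X_t)-f(X_t)).
\]
This reduces the problem to a linear loss with bounded, adaptively chosen coefficients. The caveat is that FTPL runs on $\ell$ itself, not on the linearization, so the comparison arguments must be done directly with $\ell$.

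Next, we build the auxiliary exponential-weights learner. Let $\cG$ be an $\eps$-cover of $\cF$ in sup norm on the relevant finite samples. The covering bound (\cref{prop:cover}) should give $\log|\cG|\lesssim \fat_{c\eps}(\cF)\log^2(2T/(\sigma\eps))$. The discretization cost of comparing against $\cG$ rather than $\cF$ is $\eps$ per round, which contributes $\eps T=\sqrt T\eps\cdot\sqrt T$; this is dominated by the $\sqrt T\eps\sqrt{T/\sigma}$ term. We define $\tilde f_t$ as in \cref{eq:ew}, with $\ell$ in the exponent, and establish the analogue of \cref{eq:historical-comparison},
\[
\En\sum_{s<t}(\bar f_t(X_s)-\En_{G^{(t)}}\tilde f_t(X_s))^2\lesssim \bigl(\fat_{c\eps}(\cF)+\sqrt T\eps\bigr)^2\,\mathrm{polylog}.
\]
The argument runs as follows. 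The vector $(\bar f_t(X_s))_{s<t}$ is the gradient in $G$ of the smoothed potential $\En\min_f[\cdots]$. The vector of exponential-weights predictions is the gradient of the smoothed log-partition function. The two potentials differ by at most $\log|\cG|+\eps t$, since softmin and min are within $\log|\cG|$ and the cover adds $\eps$ per point. Stein's lemma converts this gap in potentials into the gradient gap in $\ell_2$. To pass the historical bound to the current point $X_t$, we apply the surprise lemma with $h_t=(\bar f_t-\En\tilde f_t)/2\in\conv(\pm\cF)$, where the VC dimension is replaced by the fat-shattering dimension at scale $\eps$ plus an $\eps$ slack. This yields $\sum_t\En|\bar f_t(X_t)-\En\tilde f_t(X_t)|\lesssim(\fat_{c\eps}+\sqrt T\eps)\sqrt{T/\sigma}\,\mathrm{polylog}$.

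Finally, we bound the regret of the exponential-weights learner through the variance bound
\[
\sum_t\En\var_{w_t}(f(X_t))\lesssim\sqrt{T(\log|\cG|+\fat_{c\eps})/\sigma}\,\mathrm{polylog},
\]
combined with the second-order regret bound for exponential weights with $\eta=1$ under a convex loss; we use the mixability-type bound, which is why we use Jensen on $\tilde f_t$. Adding the three error terms and optimizing over $\eps$ gives \cref{eq:main-rate-real-valued}. The main obstacle I expect is the surprise lemma for real-valued convex hulls. The binary proof uses VC-based coupling and one-inclusion or disagreement arguments. Here we must discretize $h_t$ at scale $\eps$ and control uniform deviations over $\conv(\pm\cF)$ using fat-shattering entropy integrals, which is likely the source of the extra $\log$ factor, since $\log^3$ appears instead of $\log^2$. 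I would first try to reduce to a finite class by projecting $h_t$ onto $\conv(\pm\cG)$ and then apply the binary argument with $\log|\cG|$ in place of $d$.
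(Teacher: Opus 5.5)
You have the same architecture as the paper: a Monte Carlo cost of $O(\sqrt T)$, a comparison of $\En_{G^{(t)}}\hat f_t$ with an auxiliary exponential-weights learner via Stein's lemma and a surprise lemma, and then a second-order bound plus a Stein-based variance bound for that learner. However, your central quantitative claim does not follow from your argument.

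If $\tilde f_t$ is plain exponential weights over an $\eps$-cover $\cG$, then $\Phi_t-\widetilde\Phi_t$ carries a discretization error of order $\eps t$. Stein's lemma with Bessel's inequality then only gives
\[
\sum_{s<t}\left(\En_{G}\hat f_t(X_s)-\En_{G}\tilde f_t(X_s)\right)^2\le\En_G(\Phi_t-\widetilde\Phi_t)^2\lesssim\log^2|\cG|+\eps^2t^2,
\]
and $\eps^2t^2$ is not $O((\sqrt T\eps)^2)$; for $t\asymp T$ it is larger by a factor of $T$. Summing over $t$ and applying the surprise lemma produces an $\eps$-term of order $T\eps\sqrt{T/\sigma}$. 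The result is the bound $(1+\fat_{c\eps}(\cF)+T\eps)\sqrt{T/\sigma}\,\mathrm{polylog}$, which is weaker than the statement: for $\fat_\eps(\cF)\asymp\eps^{-\alpha}$ it is sublinear only when $\alpha<1$.

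This loss is genuine, not just slack in the estimate. Take $f_B\in\cG$ and $f_{B'}=f_B+\eps\notin\cG$ whose score exceeds that of $f_B$ by $\eps t$. Take $f_A\in\cG$ at $\ell_2$-distance $\asymp\sqrt t$ from both on the history, with score halfway between them. When $1\ll\eps t\ll\sqrt t$, with probability $\asymp\eps\sqrt t$ over $G$, FTPL selects $f_{B'}$ while exponential weights over $\cG$ puts almost all its weight on $f_A$. The squared historical gap is then really $\asymp\eps^2t^2$.

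The missing idea is the paper's cell construction with clipping (\cref{eq:phi-per-cell,eq:ew-real-valued}). Partition $\cF$ by assigned center $g_{\pi(f)}$, clip each $f$ to $\bar f=g_{\pi(f)}+\max\{-2\eps,\min\{2\eps,f-g_{\pi(f)}\}\}$, and give the center $g$ the potential $\Phi_{t,g}(G)=\sup_{f:\,g_{\pi(f)}=g}\{-\sum_{s<t}[\ell(\bar f(X_s),Y_s)+G_s\bar f(X_s)]\}$, while still predicting with the centers.

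Then $\max_g\Phi_{t,g}$ is exactly the FTPL potential with $f$ replaced by $\bar f$. Since $\bar f=f$ except on the $\lesssim\fat_{c\eps}(\cF)\log^2(CT/(\sigma\eps))$ bad covariates of \cref{prop:fat-cover}, one gets $|\Phi_t-\widetilde\Phi_t|\lesssim(1+\max_{s<t}|G_s|)\fat_{c\eps}(\cF)\log^2(CT/(\sigma\eps))$, with no $\eps t$ term. The $\eps$-error moves into the gradient instead: $|\partial_{G_s}\widetilde\Phi_t(G)+\tilde f_t(X_s)|\le2\eps$ coordinatewise. This costs only $8(t-1)\eps^2$ in the squared $\ell_2$ gap, which is what yields the $\sqrt T\eps$ term. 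Clipping also keeps each increment $\Phi_{t,g}-\Phi_{t+1,g}$ within $2\eps(1+|G_t^{(t+1)}|)$ of the center's perturbed loss. As a result, the second-order analysis and the historical variance bound survive with only additive $T\eps$ and $T\eps^2$ terms.

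Relatedly, the cover cannot be a sup-norm cover of the realized sample. Then $\tilde f_t$ would depend on future covariates, and your $h_t$ would not be predictable. It must be a fixed cover as in \cref{prop:fat-cover}, which is only $2\eps$-accurate off a few covariates. So ``the cover adds $\eps$ per point'' is unavailable, and this is precisely the situation clipping is designed for.
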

A complete proof of the above statement is given in~\cref{app:real-proofs}.
Theorem~\ref{thm:main-real-valued} gives explicit rates for two important families of function classes.
For \emph{parametric classes}, i.e., those satisfying $\fat_{\eps}(\cF)\lesssim d\log(1/\eps)$ at sufficiently small scales, we have
\[
\En\reg_T\le\widetilde O\left(d\sqrt{\frac T\sigma}\right).
\]
As in binary classification, the dependence on $d$ is suboptimal by a factor of $\sqrt d$, up to logarithmic factors~\citep{blanchard2025agnostic}.
For \emph{nonparametric classes}, i.e., those satisfying $\fat_{\eps}(\cF)\lesssim\eps^{-\alpha}$, $\alpha>0$, choosing $\eps\asymp_\alpha T^{-1/(2\alpha+2)}$ gives
\[
\En\reg_T\le\widetilde O_\alpha\left( \frac{T^{(2\alpha+1)/(2\alpha+2)}}{\sqrt\sigma}\right).
\]
For fixed $\sigma$, this is sublinear for every $\alpha>0$.
It matches the leading horizon and smoothness exponents in~\citet[Theorem~4]{blanchard2025agnostic}, which achieves this guarantee with an inefficient algorithm.
Notably, our result applies to regret as defined in~\cref{eq:reg}, while~\cite{blanchard2025agnostic} only controls pseudo-regret (where comparator is fixed before the draw of the data).
Compared with the i.i.d. setting, where the optimal regret rate scales as $T^{\frac{1}{2}\lor\frac{\alpha-1}{\alpha}}$~\citep{mendelson2003entropy}, up to logarithmic factors, our bound has a larger exponent of $T$.
Nevertheless, sublinear regret remains possible for every $\alpha>0$.
We thus present a framework for smoothed online learning that recovers the qualitative picture of statistical learnability, while retaining oracle efficiency and requiring no knowledge of the base measure.

\section{Analysis Techniques}
\label{sec:analysis}

We give an overview of the proof of~\cref{thm:main} in the binary case, as it is simpler and still contains the key ideas needed for the general case.
The complete proofs for the binary case are deferred to~\cref{app:binary-proofs}, and the proofs for the real-valued case are contained in Appendix~\ref{app:real-proofs}.
Our proof has three ingredients: a sharper surprise lemma (\cref{sec:surprise}), a reduction to exponential weights (\cref{sec:reduction}), and a regret bound for Gaussian-perturbed exponential weights (\cref{sec:ew}).

\subsection{Surprise Lemma}\label{sec:surprise}

For smoothed online learning algorithms that aim to extrapolate behavior over the past into the future, surprise lemmas have proven useful~\citep{block2024performance,blanchard2025agnostic}.
Intuitively, for an arbitrary sequence of (predictable) test functions $\{h_t\}$, a surprise lemma quantifies for how many $t$ the value $h_t(X_t)$ can be significantly larger than the past values $h_t(X_1),\ldots,h_t(X_{t-1})$.
Formally, we prove the following version of the surprise lemma, which controls expected absolute values of test functions as a function of their past second moments.
\begin{restatable}{lemma}{surpriselemma}
\label{lemma:surprise}
Let $\cF$ be a class of VC dimension $d$ and let $\{h_t\}_{t\in [T]}$ be a predictable sequence of functions with $h_t\in\conv(\pm\cF)$.
Then
\begin{equation}\label{eq:surprise}
\En\sum_{t=1}^T|h_t(X_t)| \lesssim \sqrt{\frac{\log(2T)}\sigma \En\sum_{t=1}^T\sum_{s<t}h_t(X_s)^2}+\sqrt{\frac{dT}\sigma}\log^2\left(\frac{2T}{\sigma}\right).
\end{equation}
\end{restatable}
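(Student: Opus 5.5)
We will prove \cref{lemma:surprise} by combining a coupling argument for smoothed processes with a decoupling inequality over the convex hull, following \citet{block2024performance} but tracking second moments. Since $|h_t| \le 1$, \cref{eq:surprise} reduces to bounding $\En\sum_t \En_{X\sim p_t}|h_t(X)|$; the martingale difference $|h_t(X_t)| - \En_{p_t}|h_t|$ has zero mean, so it is free in expectation. The real comparison is therefore between the conditional first moment $\En_{p_t}|h_t|$ and the empirical second moment $\sum_{s<t} h_t(X_s)^2$.

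\textbf{Coupling.} We use the standard coupling lemma for $\sigma$-smooth processes (as in \citet{haghtalab2022oracle,block2022smoothed}). For $k \asymp \log(T)/\sigma$, we couple each $X_s$ with $k$ i.i.d.\ draws $Z_{s,1},\dots,Z_{s,k}\sim\mu$. The coupling guarantees that $X_s\in\{Z_{s,j}\}_j$ with probability at least $1-T^{-2}$ and that, conditionally, the remaining $Z_{s,j}$ are i.i.d.\ from $\mu$.

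From the density bound, $\En_{p_t}|h_t| \le \sigma^{-1}\En_\mu|h_t|$ and, more to the point, $\En_{p_t} h_t^2\le \sigma^{-1}\|h_t\|_{L^2(\mu)}^2$. By Cauchy--Schwarz over $t$,
\[
\sum_t \En_{p_t}|h_t| \le \sqrt{T\sum_t \En_{p_t} h_t^2}.
\]
This alone gives the wrong form. Instead, we plan the following chain:
\[
\sum_t \En_{p_t}|h_t|\le \sqrt{T/\sigma}\,\sqrt{\textstyle\sum_t \|h_t\|^2_{L^2(\mu)}/T}.
\]
We then compare $(t-1)\|h_t\|^2_{L^2(\mu)}$ with the empirical sum $\sum_{s<t}h_t(X_s)^2$ through the ghost sample. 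Each past point $X_s$ is one of the $Z_{s,j}$, which are $\mu$-distributed, and with probability $\gtrsim\sigma$ the coupling selects each of them. Hence $\sum_{s<t}h_t(X_s)^2$ dominates, up to a factor $\sigma/\log T$ and an additive deviation term, the quantity $\sum_{s<t}\En_\mu h_t^2$.

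\textbf{Uniform deviation.} The comparison must hold uniformly over $h\in\conv(\pm\cF)$, since $h_t$ is chosen adaptively. We will use a one-sided (lower-tail) ratio-type concentration for the nonnegative class $\{h^2\}$:
\[
\En_\mu h^2 \lesssim \frac1n\sum_i h(Z_i)^2 + \frac{\mathrm{comp}\cdot\log n}{n}.
\]
Here the complexity term is controlled through $L^2$ covering numbers of $\conv(\cF)$. Applying the lower tail for squares, which needs only a small-ball/Bernstein-type estimate, avoids a factor from the full entropy, but the covering-number exponent of convex hulls ($\eps^{-2d/(d+2)}$) is not logarithmic. We therefore plan to truncate by scale: peel $h$ dyadically in $\|h\|_{L^2(\mu)}$, and below scale $\sqrt{d/T}$ simply absorb $h$ into the additive term $\sqrt{dT/\sigma}$. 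Summing the peeling over $\log(T/\sigma)$ scales produces the $\log^2$ factor.

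\textbf{Assembly and main obstacle.} Combining the pieces, the conclusion reads
\[
\sum_t\En|h_t(X_t)|\lesssim \sqrt{\tfrac{\log T}{\sigma}\En\sum_t\sum_{s<t}h_t(X_s)^2}+\sqrt{dT/\sigma}\log^2.
\]
This follows after first summing $\sum_t \En_{p_t}|h_t|$ against $t^{-1/2}$ weights via Cauchy--Schwarz, which is where $\sum_t (t-1)\|h_t\|^2/t$ appears, absorbed by the sum over $s$. I expect the main obstacle to be the uniform lower-deviation step over the convex hull with only $O(d\,\mathrm{polylog})$ complexity. If the peeling argument falls short, the fallback is to reduce to $\cF$ itself: write $h=\sum\lambda_i f_i$ and bound $\En_\mu h^2$ via a Maurey-type sparsification to $O(\log T)$ elements of $\pm\cF$. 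The sparsified class has VC-type entropy $O(d\log^2)$, at the cost of an extra logarithm.
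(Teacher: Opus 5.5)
\textbf{The gap.} The step that fails is your comparison of the realized history with $\mu$. You claim that $\sum_{s<t}h_t(X_s)^2$ dominates $(\sigma/\log T)\sum_{s<t}\En_\mu h_t^2$ up to a small deviation term. Smoothness is one-sided: it bounds $\mathrm{d}p_s/\mathrm{d}\mu$ from above by $1/\sigma$ but not from below, so the covariates may never visit a region of large $\mu$-mass. Take $\mu$ uniform on $[0,1]$, $\sigma=1/2$, every $p_s$ uniform on $[0,1/2]$, and $h_t\equiv\ind_{(1/2,1]}$ (an average of two thresholds). Then the realized sums vanish identically while $\sum_{s<t}\En_\mu h_t^2=(t-1)/2$, so no such inequality with an $o(T)$ deviation term can hold.

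The coupling does not rescue this. The selection depends on the value: $Z_{s,j}$ is accepted with probability $\sigma\,\frac{\mathrm{d}p_s}{\mathrm{d}\mu}(Z_{s,j})$, which vanishes off the support of $p_s$, so $X_{1:t-1}$ is not a $\sigma$-thinning of a $\mu$-sample. The inclusion $X_s\in\{Z_{s,j}\}_j$ only bounds realized sums \emph{above} by ghost-sample sums.

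Even granting the claim, your chain uses the density ratio twice: once in $\En_{p_t}h_t^2\le\sigma^{-1}\|h_t\|_{L^2(\mu)}^2$ and again in the factor $\sigma/\log T$. That would give a leading term $\frac{\log T}{\sigma}\sqrt{\En\sum_t\sum_{s<t}h_t(X_s)^2}$ instead of $\sqrt{\log(2T)/\sigma}$ times it. Your displayed Cauchy--Schwarz chain also drops a factor $\sqrt T$. In effect your route is the i.i.d.\ argument and is fine at $\sigma=1$, but routing through $\mu$ discards exactly the information a smoothed adversary exploits.

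\textbf{The missing idea.} Compare $p_t$ with the \emph{past conditional laws} $p_1,\dots,p_{t-1}$, i.e.\ with a tangent sequence $\bar X_s\sim p_s$, rather than with $\mu$. The paper does this via the change-of-measure argument of \citet{xie2022coverage}. Set $a_t=\sigma\,\mathrm{d}p_t/\mathrm{d}\mu\in[0,1]$. Each $x$ has a burn-in phase, lasting until $\sum_{s\le t}a_s(x)\ge1$, whose total contribution is $O(1/\sigma)$. After burn-in, Cauchy--Schwarz with weight $\sum_{s<t}a_s(x)$, together with $\sum_t a_t(x)/(1+\sum_{s<t}a_s(x))\lesssim\log(2T)$, gives
\[
\En\sum_t|h_t(X_t)|\lesssim\sqrt{\frac{\log(2T)}{\sigma}\En\sum_t\sum_{s<t}\En_{p_s}h_t^2}+\frac{\log(2T)}{\sigma},
\]
which pays $1/\sigma$ only once.

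The tangent moments are then replaced by realized ones through the uniform decoupling bound $\En\sup_{h\in\conv(\pm\cF)}\{\sum_s\En_{p_s}h^2-12\sum_s h(X_s)^2\}\lesssim d\log^3(2T/\sigma)$, applied on each prefix. Its proof has three pieces:
\begin{itemize}
\item a layer-cake decomposition over levels $\lambda\in[1/\sqrt T,1]$;
\item Maurey sparsification to $K\asymp\lambda^{-2}\log T$ elements of $\pm\cF$ at each level, which is your fallback and the right way around the convex-hull entropy;
\item \citet[Theorem~10]{block2024performance} for adapted smoothed sequences, not an i.i.d.\ ratio inequality under $\mu$.
\end{itemize}
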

The above is proven in two steps.
We first use a result from~\citet[Section~3.2, equations~(3)--(5)]{xie2022coverage}, which establishes an analogous estimate where the second moments are evaluated on a \emph{tangent} sequence $\bar X_{1:T}$ as opposed to realized covariates $X_{1:T}$, where a \emph{(decoupled) tangent sequence} $\bar X_{1:T}$ is defined by drawing independent samples $\bar X_{t} \mid \cH_{t-1} \sim p_t$ for each $t$.
Compared to the analogous statement in~\citet[Lemma~12]{block2024performance}, which involves first moments of the test functions, keeping second moments allows for a sharper transfer from past to current covariates.
The next step is to extend the norm comparison inequality of~\cite[Theorem~10]{block2024performance} to convex hulls of VC classes, for which we utilize Maurey's empirical method~\citep{pisier1981remarques}.
It is important to allow $h_{t} \in \conv(\pm \cF)$ as we wish to apply~\cref{lemma:surprise} to functions of predictions of exponential weights, which live in the convex hull of the class.
The full proof is in Appendix~\ref{app:surprise}.

\subsection{Reduction to exponential weights}\label{sec:reduction}

In this section, we show that predictions of our Gaussian FTPL algorithm are close to predictions of an auxiliary exponential weights algorithm that is run over the cover $\cG$ of $\cF$.
Prior work establishes existence of covers of controlled size that approximate $\cF$ well along a draw of a smoothed covariate trajectory, using the coupling lemma~\citep{haghtalab2024smoothed}.
\begin{restatable}{proposition}{coverguarantee}
\label{prop:cover}
Let $\cF \colon \cX \to \{\pm 1\}$ be a binary class with VC dimension $d$.
For every probability measure $\mu$, horizon $T\ge1$, and $\sigma\in(0,1]$, there exists a cover $\cG \subseteq \cF$ such that $\log |\cG| \lesssim d \log (2T/\sigma)$ and, for every covariate process that is $\sigma$-smooth with respect to $\mu$,
\[
\P\left(\sup_{f\in\cF}\inf_{g\in\cG}\sum_{s\le T} \ind\{f(X_s)\ne g(X_s)\} > C d \log(2T/\sigma) \right) \le \frac{1}{T},
\]
where $C>0$ is a universal constant.
\end{restatable}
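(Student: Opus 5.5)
The plan is to build $\cG$ from samples of the base measure $\mu$ and transfer the guarantee to the smoothed trajectory through the coupling lemma of \citet{haghtalab2024smoothed}. The learner never needs $\cG$; it only appears in the analysis. So I will construct it from $\mu$, $T$, and $\sigma$. For a number of points $n$ (polynomial in $T/\sigma$, fixed below), draw $Z_1,\dots,Z_n\sim\mu$ i.i.d. By Sauer--Shelah, $\cF$ restricted to $Z_{1:n}$ has at most $(en/d)^d$ patterns. Let $\cG_Z\subseteq\cF$ contain one representative per pattern, so that $\log|\cG_Z|\lesssim d\log(2T/\sigma)$. Then every $f\in\cF$ has some $g\in\cG_Z$ with $f=g$ on all of $Z_{1:n}$. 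It remains to show that agreeing on the $\mu$-samples forces agreement on all but $O(d\log(2T/\sigma))$ of the points $X_{1:T}$, with probability $1-1/T$ jointly over $Z$ and the process. A random $\cG_Z$ with this property, for every smoothed process, also has a good deterministic realization. The caveat is that the process may adapt; I handle this below.

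\textbf{Coupling step.} The coupling lemma says the following. For a $\sigma$-smooth process and $k=\lceil \sigma^{-1}\log(T^2)\rceil$, one can construct i.i.d. draws $\{Z_{t,j}\}_{t\le T,j\le k}\sim\mu$ and couple them with $X_{1:T}$. Under the coupling, with probability at least $1-1/T$, every $X_t$ lies in $\{Z_{t,j}\}_{j\le k}$. Take $n=Tk$. Because the $Z$'s are i.i.d. from $\mu$ and do not depend on the adversary's strategy, $\cG_Z$ can be built from them before the process runs. On the good event, the points $X_{1:T}$ are a subset of $Z_{1:n}$. Then $f=g$ on $Z_{1:n}$ gives $f(X_s)=g(X_s)$ for every $s$, and the disagreement count is $0\le Cd\log(2T/\sigma)$.

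\textbf{Main obstacle.} The delicate point is quantifier order. The proposition asks for one fixed $\cG$ that works for every smoothed process. The construction above gives a random $\cG_Z$ whose $Z$'s are coupled to the particular process, so a single deterministic $\cG$ does not follow immediately. I will get around this by using the slack $Cd\log(2T/\sigma)$. Instead of exact agreement, I will use an $\eps$-approximation argument in $L^1(\mu)$.

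First, fix $\cG$ to be a minimal $\eps$-cover of $\cF$ in $L^1(\mu)$ with $\eps=\sigma/T$. By Haussler's packing bound, $\log|\cG|\lesssim d\log(2T/\sigma)$. For each $f$, the disagreement region $\{f\ne g\}$ with the nearest $g$ has $\mu$-mass at most $\eps$. So under any $\sigma$-smooth $p_t$ it has conditional probability at most $\eps/\sigma=1/T$.

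Second, I need this bound uniformly over $f$, even though the adversary adapts. The class of disagreement sets $\{f\ne g\}$ with $g\in\cG$ has VC dimension $O(d)$. I will apply the coupling lemma again, now with a VC uniform deviation bound over the coupled i.i.d. $\mu$-samples $Z_{1:n}$. Each disagreement set has $\mu$-mass at most $\eps$, and $n\eps\approx k\sigma\lesssim\log T$. By the relative (multiplicative) VC bound, uniformly over the class, each set contains at most $O(d\log(n)+\log T)=O(d\log(2T/\sigma))$ of the $Z$'s, with probability $1-1/T$. On the coupling event the $X_s$ are among the $Z$'s, so the disagreement count along $X_{1:T}$ is bounded by the same quantity.

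Splitting the failure probability between the coupling event and the VC event, and rescaling the constants, gives the stated bound $1/T$.
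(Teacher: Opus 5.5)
Your final argument is correct and matches the paper's proof. The paper likewise uses a fixed $L^1(\mu)$ cover $\cG\subseteq\cF$ at scale $\sigma/T$, couples $X_{1:T}$ via \citet{haghtalab2024smoothed} into an i.i.d.\ $\mu$-array with $\lceil\sigma^{-1}\log(2T^2)\rceil$ draws per round, applies a multiplicative VC deviation bound for the $O(d)$-dimensional disagreement class to the pairs $(f,g_{\pi(f)})$, and finishes with a union bound over the two failure events; you were also right to abandon the random pattern cover $\cG_Z$, since the quantifier-order issue you flag is genuine.
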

Recall that in~\cref{eq:ew} we define an auxiliary exponential weights learner using the cover $\cG$ from~\cref{prop:cover}:
\begin{equation*}
\tilde f_t:=\sum_{f\in\cG}w_t(f\mid G^{(t)})f, \quad \text{with} \quad w_t(f\mid G^{(t)})\propto \exp\left(-\sum_{s<t}[\ell(f(X_s),Y_s)+G_s^{(t)}f(X_s)]\right).
\end{equation*}
We show that predictions of such an algorithm are close on average to the predictions of our Gaussian FTPL learner as in~\cref{eq:ftpl}.
\begin{restatable}[Reduction to exponential weights]{lemma}{ewreduction}\label{lemma:reduction}
Consider a binary function class $\cF\colon \cX\to \{\pm1\}$ of VC dimension $d$.
For $\hat f_t$ and $\tilde f_t$ as defined in~\cref{eq:ftpl,eq:ew}, respectively, we have the following bound:
\begin{equation}\label{eq:regret-reduction}
\En\sum_{t=1}^T |\En_{G^{(t)}} \hat f_t(X_t) - \En_{G^{(t)}} \tilde f_t(X_t)| \lesssim d\log^2(2T/\sigma)\sqrt{\frac T\sigma}.
\end{equation}
\end{restatable}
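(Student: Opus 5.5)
The plan has two parts. First, I will show that, for each fixed $t$, the predictions of $\hat f_t$ and $\tilde f_t$ on the past covariates $X_{1:t-1}$ are close in $\ell_2$, which is the historical comparison~\cref{eq:historical-comparison}. Second, I will transfer this closeness to the fresh covariate $X_t$ using \cref{lemma:surprise}.

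\textbf{Step 1: the historical comparison.} Fix $t$, and condition on $(X_s,Y_s)_{s<t}$ and on the cover $\cG\subseteq\cF$ of \cref{prop:cover}. Note that $\cG$ depends only on $\mu,T,\sigma$. For $f\in\cF$ write $v_f=(f(X_s))_{s<t}\in\{\pm1\}^{t-1}$ and $z=-Y_{1:t-1}$. Both objectives are then linear in $v_f$:
\[
\sum_{s<t}\left[\ell(f(X_s),Y_s)+G_s^{(t)}f(X_s)\right]=\langle z+G^{(t)},v_f\rangle.
\]
I introduce the two unsmoothed potentials
\[
\phi_1(\theta)=\min_{f\in\cF}\langle\theta,v_f\rangle,\qquad \phi_2(\theta)=-\log\sum_{g\in\cG}e^{-\langle\theta,v_g\rangle},
\]
and their Gaussian smoothings $\Psi_i(z)=\En_G\phi_i(z+G)$. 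Both $\phi_i$ are Lipschitz and concave. Their (a.e.) gradients are $v_{\hat f_t}$ and the vector of $\tilde f_t(X_s)$, respectively. Hence $\nabla\Psi_1(z)=(\En_G\hat f_t(X_s))_{s<t}$ and $\nabla\Psi_2(z)=(\En_G\tilde f_t(X_s))_{s<t}$.

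Stein's lemma gives $\nabla\Psi_i(z)=\En[G\,\phi_i(z+G)]$. Set $\Delta=\phi_1-\phi_2$. For any unit vector $u$, Cauchy--Schwarz gives
\[
\langle u,\nabla\Psi_1(z)-\nabla\Psi_2(z)\rangle=\En[\langle u,G\rangle\Delta(z+G)]\le\sqrt{\En\langle u,G\rangle^2}\,\|\Delta\|_\infty=\|\Delta\|_\infty .
\]
Taking the supremum over $u$, the squared $\ell_2$-distance appearing in~\cref{eq:historical-comparison} is at most $\|\Delta\|_\infty^2$.

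It remains to bound $\|\Delta\|_\infty$ uniformly in $\theta$. I split it into two comparisons.
\begin{itemize}
\item The softmin over $\cG$ differs from the hard min over $\cG$ by at most $\log|\cG|\lesssim d\log(2T/\sigma)$.
\item The min over $\cG$ versus the min over $\cF$ is where the cover enters. Since $\cG\subseteq\cF$, the min over $\cF$ is at most the min over $\cG$. Conversely, on the good event of \cref{prop:cover}, every $f$ has a $g\in\cG$ disagreeing with it on at most $Cd\log(2T/\sigma)$ points. On each such point $|\theta_s|$ multiplies the disagreement, so the error is not simply a count.
\end{itemize}

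\textbf{The main obstacle} is that second comparison: the Gaussian entries make the hard-min comparison depend on $|\theta_s|$ rather than on a bounded constant. I will handle this by applying the Stein argument to $\Delta$ restricted to the cover comparison, and bounding $\En[\langle u,G\rangle\Delta]$ with $|\Delta(z+G)|\le 2\sum_{s\in D}(1+|G_s|)$, where $D$ is a disagreement set of size $\lesssim d\log(2T/\sigma)$. By Gaussian tail bounds, $\max_s|G_s|\lesssim\sqrt{\log T}$ with high probability. Then $\sqrt{\En\Delta^2}\lesssim d\log(2T/\sigma)$ up to a possible extra $\sqrt{\log T}$ factor, which I will try to absorb. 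One absorption route is to note that the minimizer changes only on $D$, so the correct bound involves $\sum_{s\in D}|\theta_s|$ evaluated at the perturbed point, and its second moment is $O(|D|^2)$.

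On the bad event of \cref{prop:cover}, which has probability at most $1/T$, I will use the trivial bound $4(t-1)\le4T$. Its contribution to the expectation is then $O(1)$. This yields~\cref{eq:historical-comparison}, $\En\sum_{s<t}(\En_G\hat f_t(X_s)-\En_G\tilde f_t(X_s))^2\lesssim d^2\log^2(2T/\sigma)$.

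\textbf{Step 2: transfer via the surprise lemma.} Set $h_t=\tfrac12(\En_{G^{(t)}}\hat f_t-\En_{G^{(t)}}\tilde f_t)$. This is predictable, since it depends only on data before round $t$ and on the fixed cover. It also lies in $\conv(\pm\cF)$, because both averages lie in $\conv(\cF)$ (recall $\cG\subseteq\cF$). Applying \cref{lemma:surprise} and summing Step~1 over $t\le T$ gives
\[
\En\sum_t|h_t(X_t)|\lesssim\sqrt{\frac{\log(2T)}{\sigma}\,T\,d^2\log^2(2T/\sigma)}+\sqrt{\frac{dT}{\sigma}}\log^2(2T/\sigma)\lesssim d\log^2(2T/\sigma)\sqrt{T/\sigma},
\]
which is~\cref{eq:regret-reduction}.
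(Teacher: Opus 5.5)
Your argument follows the paper's route. You use the same two potentials, Danskin plus Stein to write the historical discrepancy as $\|\En_G[G\,\Delta(z+G)]\|_2^2$, Cauchy--Schwarz, and a split into the entropy cost $\log|\cG|$ and the cover cost. You then charge $O(1)$ for the bad event of \cref{prop:cover} and apply \cref{lemma:surprise} to $h_t=\tfrac12(\En_{G^{(t)}}\hat f_t-\En_{G^{(t)}}\tilde f_t)$. Since $\Delta$ grows with $|\theta_s|$ on the disagreement coordinates, the Cauchy--Schwarz step must use $\sqrt{\En_G\Delta(z+G)^2}$ rather than $\|\Delta\|_\infty$; you implicitly switch to this later.

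The step that is not right as written is your proposed absorption of the extra $\sqrt{\log T}$. The disagreement set $D$ belongs to the $G$-dependent minimizer $f^\star$ and its cover partner, so $D=D(G)$ is selected after seeing $G$. Hence $\En_G\bigl(\sum_{s\in D(G)}|G_s|\bigr)^2=O(|D|^2)$ does not follow: an adaptively chosen set of size $k$ can collect the $k$ largest $|G_s|$, which gives order $k^2\log(t/k)$.

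What rescues the bound is that $D(G)$ ranges over a finite family. It is the disagreement set of some pair $(f,g)\in\cF\times\cG$ with at most $Cd\log(2T/\sigma)$ disagreements, and Sauer--Shelah on the difference class gives only $e^{O(d\log(2T/\sigma))}$ such pairs. The paper uses exactly this structure. It keeps the sign and bounds the cover cost by $2|D|+\max_{(f,g)}\bigl|\sum_{s<t}G_s(f(X_s)-g(X_s))\bigr|$. Each term is a centered Gaussian of variance at most $4Cd\log(2T/\sigma)$, so a union bound gives second moment $\lesssim d\log(2T/\sigma)\cdot d\log(2t)$.

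That said, you do not need the absorption for this lemma. Your crude bound $|\Delta|\lesssim d\log(2T/\sigma)(1+\max_{s<t}|G_s|)$ already gives $\En_G\Delta^2\lesssim d^2\log^2(2T/\sigma)\log(2T)$. Plugging this into \cref{lemma:surprise} gives a first term of order $d\sqrt{T/\sigma}\,\log(2T)\log(2T/\sigma)\le d\log^2(2T/\sigma)\sqrt{T/\sigma}$. This works because the target bound has a spare $\sqrt{\log}$ in that term, and it is the route the paper itself takes in the real-valued case. So your proof of \cref{lemma:reduction} goes through. Only your intermediate claim that \cref{eq:historical-comparison} holds with $d^2\log^2(2T/\sigma)$ needs the union-bound argument.
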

Since binary loss $\ell(\hat y, y) = -\hat y y$ is Lipschitz, the above immediately implies
\begin{equation*}
\En\reg_T-\En\ewreg_T \lesssim d\log^2(2T/\sigma)\sqrt{\frac T\sigma}.
\end{equation*}
We prove~\cref{eq:regret-reduction} in two steps.
First, we show that $\hat f_t$ and $\tilde f_t$ agree on \emph{past covariates} (when averaged over their Gaussian randomness).
Then, using predictability of $\hat f_t$ and $\tilde f_t$, we leverage the surprise lemma (\cref{lemma:surprise}) to transfer that guarantee into closeness on current covariates.
In particular, we obtain the following estimate.
\begin{restatable}{lemma}{empiricalprediction}\label{lemma:empirical-prediction}
For every $t\le T$,
\begin{equation*}
\En\sum_{s<t} \left(\En_{G^{(t)}} \hat f_t(X_s)-\En_{G^{(t)}}\tilde f_t(X_s)\right)^2 \lesssim d^2\log^2(2T/\sigma).
\end{equation*}
\end{restatable}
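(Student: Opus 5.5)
The plan is to write both averaged prediction vectors on the past covariates as gradients of Gaussian-smoothed potentials, and then use Stein's lemma to turn the difference of predictions into a difference of potentials. Fix $t$, condition on the history $\cH_{t-1}$, and put $n=t-1$. For $f\in\cF$ let $L_f=\sum_{s\le n}\ell(f(X_s),Y_s)$ and $v_f=(f(X_s))_{s\le n}\in\{\pm1\}^n$. Define, for $\theta\in\bbR^n$,
\begin{align*}
\Phi_1(\theta)&=\En_G\max_{f\in\cF}\bigl[-L_f-\langle\theta+G,v_f\rangle\bigr],\\
\Phi_2(\theta)&=\En_G\log\sum_{g\in\cG}\exp\bigl(-L_g-\langle\theta+G,v_g\rangle\bigr).
\end{align*}
By the envelope theorem (the maximizer is a.s.\ unique), $\nabla\Phi_1(0)=-\En_G v_{\hat f_t}$. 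Differentiating the log-partition function gives $\nabla\Phi_2(0)=-\En_G\sum_g w_t(g\mid G)v_g=-(\En_G\tilde f_t(X_s))_{s\le n}$. Next let $\Psi_1(G)=\max_{f\in\cF}[-L_f-\langle G,v_f\rangle]$, let $\Psi_2(G)$ be the corresponding log-sum-exp over $\cG$, and set $D=\Psi_1-\Psi_2$. Since $\Phi_i(\theta)=\En\Psi_i(\theta+G)$, Stein's lemma gives $\nabla\Phi_i(0)=\En[G\,\Psi_i(G)]$. Hence the vector $\Delta$ of differences satisfies $\Delta=-\En_G[G(D(G)-c)]$ for any constant $c$. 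Duality and Cauchy--Schwarz then yield
\[
\sum_{s<t}\Delta_s^2=\|\Delta\|_2^2=\sup_{\|u\|_2\le1}\bigl(\En[\langle u,G\rangle(D-c)]\bigr)^2\le \En_G(D-c)^2 .
\]
So it suffices to show $\En(D-c)^2\lesssim d^2\log^2(2T/\sigma)$ after also averaging over the data.

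To bound $D$, I split it as $D=\bigl(\max_{\cF}-\max_{\cG}\bigr)+\bigl(\max_{\cG}-\mathrm{LSE}_{\cG}\bigr)$. The second term lies in $[-\log|\cG|,0]$, so by \cref{prop:cover} it is $O(d\log(2T/\sigma))$ deterministically. The first term is nonnegative because $\cG\subseteq\cF$. For the upper bound, let $f^\star$ be the maximizer over $\cF$ and $g$ its nearest element of $\cG$, which disagrees with it on a set $S$ of size at most $k:=Cd\log(2T/\sigma)$ on the good event of \cref{prop:cover}. Then
\[
\max_{\cF}-\max_{\cG}\le L_g-L_{f^\star}+\langle G,v_g-v_{f^\star}\rangle\le 2k+2\sum_{s\in S}|G_s|\le 2k+2\sum_{j\le k}|G|_{(j)},
\]
where $|G|_{(j)}$ are the order statistics of the $|G_s|$ in decreasing order. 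On the bad event, which has probability at most $1/T$, I use the crude bound $|D|\lesssim n(1+\max_s|G_s|)+\log|\cG|$. Squaring, this contributes $O(T^2\log T)/T$ only after multiplying by the probability. That is too large as stated, so I would instead apply the bound with failure probability $1/T^{3}$, which costs only a constant factor inside the logarithm in \cref{prop:cover}. Finally, $G$ is drawn independently of the data, so the good event does not interact with $G$.

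The main obstacle is the Gaussian part $\sum_{j\le k}|G|_{(j)}$. Its second moment is of order $k^2\log(en/k)$, which would give $d^2\log^3$ rather than the claimed $d^2\log^2$. To remove the extra logarithm, my first idea is to exploit the freedom in $c$ and bound a variance rather than a second moment. By the Gaussian Poincaré inequality, $\var(\max_{\cF}-\max_{\cG})\le\En\|v_{f^\star(G)}-v_{g^\star(G)}\|_2^2$, where $g^\star$ is the maximizer over $\cG$. I would then try to argue that this is $O(k)$, either because the two maximizers are themselves near each other or by a convexity (Sudakov--Fernique-type) comparison. Failing that, a fallback is to bound $\En_G[\langle u,G\rangle\sum_{s\in S}|G_s|]$ directly. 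The point is that the inner product with a unit vector $u$ only picks up correlation on the coordinates where $u$ has mass, which may avoid paying the full $\sqrt{\log n}$ factor. If neither works, the argument still gives the lemma with one extra logarithm.
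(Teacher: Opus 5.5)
\paragraph{Where the argument falls short.} Your reduction is the paper's. You write both Gaussian-averaged prediction vectors as gradients of smoothed potentials, then apply Stein's lemma and Cauchy--Schwarz (the paper phrases this as Bessel's inequality) to get $\sum_{s<t}\Delta_s^2\le\En_G D^2$. You then split $D$ into an entropy term in $[-\log|\cG|,0]$ and the cover-approximation term $\max_{\cF}-\max_{\cG}\ge0$. The gap is in the Gaussian part of that last term. The bound $\langle G,v_g-v_{f^\star}\rangle\le 2\sum_{s\in S}|G_s|\le 2\sum_{j\le k}|G|_{(j)}$ throws away both the signs and the combinatorial structure of the class. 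In effect it is a union bound over all $\binom{n}{k}2^k$ signed patterns supported on $k$ coordinates. Their log-cardinality $k\log(en/k)\approx d\log(2T/\sigma)\log(en/k)$ is exactly where your extra logarithm comes from. As written, your argument proves only $d^2\log^3(2T/\sigma)$, not the stated bound, and none of the proposed remedies is carried out. The Poincar\'e route in particular would need the maximizers over $\cF$ and over $\cG$ to agree on all but $O(k)$ points. Nothing guarantees this: hypotheses that disagree on many points can nearly tie in perturbed score.

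\paragraph{The missing step.} Keep the signed sum and bound
\[
\langle G,v_g-v_{f^\star}\rangle\le\max\Bigl\{\langle G,v_{g'}-v_f\rangle:\ f\in\cF,\ g'\in\cG,\ \textstyle\sum_{s\le n}\ind\{f(X_s)\ne g'(X_s)\}\le k\Bigr\}.
\]
Each term is a centered Gaussian with variance $4\sum_{s\le n}\ind\{f(X_s)\ne g'(X_s)\}\le 4k$. By Sauer--Shelah there are at most $\bigl|\cF|_{X_{1:n}}\bigr|^2\le (Cn)^{2d}$ distinct difference vectors $v_{g'}-v_f$. The standard Gaussian maximal inequality then gives
\[
\En_G\max{}^2\lesssim k\cdot d\log(2n)\lesssim d^2\log^2(2T/\sigma).
\]
Hence $\En_G D^2\lesssim d^2\log^2(2T/\sigma)$ on the good event, with no need to use the centering constant $c$. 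This is exactly how the paper proceeds.

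\paragraph{The failure event.} You do not need to strengthen \cref{prop:cover} to failure probability $1/T^3$. Each $|\Delta_s|\le 2$, so the left-hand side is deterministically at most $4(t-1)$. The history-measurable failure event has probability at most $1/T$, so it contributes $O(1)$ directly.
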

It shows that the Gaussian-averaged predictions $\En_{G^{(t)}}\hat f_t$ and $\En_{G^{(t)}}\tilde f_t$ closely agree on the past data points in expected squared norm.
\begin{proof}[Proof sketch of~\cref{lemma:empirical-prediction}]
The case $t=1$ is immediate.
For $t\ge2$, fix the history $(X_{1:t-1}, Y_{1:t-1})$, write $G=G^{(t)}$, and introduce two potentials:
\[
\Phi_t(G)=\max_{f\in\cF}\sum_{s<t}(Y_s-G_s)f(X_s),
\]
\[
\widetilde\Phi_t(G) =\log\sum_{f\in\cG}\exp\left(\sum_{s<t}(Y_s-G_s)f(X_s)\right) =\max_{w\in\Delta(\cG)} \left[\En_{f\sim w} \sum_{s<t}(Y_s-G_s)f(X_s) + H(w)\right],
\]
where $H(w)=\sum_{f\in\cG}w(f)\log(1/w(f))$ denotes entropy.
Then, almost surely,\footnote{In particular, it suffices that any two maximizers $f$ and $f'$ of either potential give the same predictions on $X_{1:t-1}$, which is true almost surely by absolute continuity of the Gaussians.} we have $\hat f_t(X_s)=-\partial_{G_s}\Phi_t(G)$ and $\tilde f_t(X_s)=-\partial_{G_s}\widetilde\Phi_t(G)$ for every $s<t$.
These identities follow from Danskin's theorem for the finite maximum \citep[Appendix~B]{bertsekas2016nonlinear}, and using the fact that $w_t$ is the maximizer in the variational formula for $\tilde\Phi_t$; in the context of online learning, this idea also appears in~\cite{abernethy2014smoothing}.
Thus,
\begin{align*}
\sum_{s<t}(\En_G\hat f_t(X_s)-\En_G\tilde f_t(X_s))^2 &=\left\|\En_G\nabla_G(\Phi_t(G)-\widetilde\Phi_t(G))\right\|_2^2 \\
&\overset{(a)}{=}\left\|\En_G[G(\Phi_t(G)-\widetilde\Phi_t(G))]\right\|_2^2 \\
&=\left\|\En_G\left[G\left(\Phi_t(G)-\widetilde\Phi_t(G) -\En_G[\Phi_t(G)-\widetilde\Phi_t(G)]\right)\right]\right\|_2^2 \\
&=\sup_{\|u\|_2=1}\cov_G(u^\top G,\Phi_t-\widetilde\Phi_t)^2 \\
&\overset{(b)}{\le}\sup_{\|u\|_2=1} \En_G[(u^\top G)^2]\, \var_G(\Phi_t-\widetilde\Phi_t) \\
&=\var_G(\Phi_t-\widetilde\Phi_t),
\end{align*}
where (a) uses Stein's lemma (\cref{lemma:stein}) and (b) uses the Cauchy--Schwarz inequality.
It remains to control $\Phi_t-\widetilde\Phi_t$.
The two potentials differ in that (1) $\Phi_t$ maximizes over $\cF$ rather than $\cG$, which~\cref{prop:cover} controls, and (2) $\widetilde\Phi_t$ adds an entropy term, which costs at most $\log|\cG|$.
Combining these, we obtain:
\[
\var_G(\Phi_t-\widetilde\Phi_t)\le\En_G|\widetilde\Phi_t-\Phi_t|^2 \lesssim d^2\log^2(2T/\sigma)+\log^2|\cG|\lesssim d^2\log^2(2T/\sigma),
\]
as desired.
\end{proof}

An application of the surprise lemma (\cref{lemma:surprise}) to~\cref{lemma:empirical-prediction} finishes the proof of~\cref{lemma:reduction}.
The full proof is in Appendix~\ref{app:reduction}.

\subsection{Regret of Gaussian-perturbed exponential weights}\label{sec:ew}

It remains to bound the regret of the exponential weights algorithm.
In particular, we show the following upper bound.

\begin{restatable}{lemma}{ewregret}\label{lemma:ew}
The regret of the exponential weights algorithm in~\eqref{eq:ew} satisfies
\begin{equation}\label{eq:ew-rate}
\En\ewreg_T \lesssim \sqrt{\frac{dT}\sigma}\log^2(2T/\sigma).
\end{equation}
\end{restatable}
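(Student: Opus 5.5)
The plan is a second-order (variance-based) analysis of exponential weights with learning rate one, where the fresh Gaussian perturbations supply the stability. Write $L_t(f)=\sum_{s\le t}\ell(f(X_s),Y_s)$. Let $Z_t(L,G)=\sum_{f\in\cG}\exp(-L(f)-\sum_{s}G_sf(X_s))$, with the Gaussian sum running over however many coordinates $G$ has. Let $w_t=w_t(\cdot\mid G^{(t)})$.

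\emph{Mix-loss step.} Since $\ell(\tilde f_t(X_t),Y_t)=\En_{f\sim w_t}[-Y_tf(X_t)]$ and $|Y_tf(X_t)|\le1$, the bound $\log\En e^{Z}\le\En Z+C\var Z$ for $|Z|\le1$ gives
\[
\ell(\tilde f_t(X_t),Y_t)\le m_t+C\var_{f\sim w_t}(f(X_t)),
\qquad
m_t:=-\log\frac{Z_t(L_t,G^{(t)})}{Z_t(L_{t-1},G^{(t)})}.
\]
The mix losses almost telescope. The only mismatch is that round $t+1$ uses $G^{(t+1)}$, which has the extra coordinate $G_t$. Because $G^{(t+1)}_{1:t-1}\overset{d}{=}G^{(t)}$ and both are independent of the data, we may couple them in expectation. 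We then obtain
\[
\En\sum_t m_t\le \En[-\log Z_T(L_T,G_{1:T})]+\log|\cG|+\sum_t \En\bigl[\log Z(L_t,G_{1:t})-\log Z(L_t,G_{1:t-1})\bigr].
\]
Each bridge term is the effect of adding an independent $\cN(0,1)$ shift in the direction $(f(X_t))_f$. By a second-order Gaussian interpolation (equivalently, Stein's lemma, \cref{lemma:stein}, applied twice), it is at most $\tfrac12\En\var_{f\sim w_{t+1}'}(f(X_t))$, the variance under the perturbed weights at the new point. So it is again a variance term of the same kind.

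\emph{Comparator.} We have $-\log Z_T\le\min_{g\in\cG}[L_T(g)+\sum_sG_sg(X_s)]$, so $\En[-\log Z_T]\le\En\min_gL_T(g)+\En\sup_{g\in\cG}|\sum_sG_sg(X_s)|$. The last term is at most $\sqrt{2T\log(2|\cG|)}\lesssim\sqrt{dT\log(2T/\sigma)}$. Passing from $\cG$ to $\cF$ costs $Cd\log(2T/\sigma)+2T\cdot T^{-1}$ by \cref{prop:cover}. Altogether,
\[
\En\ewreg_T\lesssim \sqrt{dT}\log(2T/\sigma)+\sum_t\En\var_{f\sim w_t}(f(X_t)),
\]
and it remains to bound the cumulative variance at the current covariate.

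\emph{Variance step (main obstacle).} The plan has two parts: control the variance on past covariates via Stein, then transfer it to $X_t$ via \cref{lemma:surprise}.

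For the past covariates, $\sum_{s<t}\var_{w_t}(f(X_s))=\operatorname{tr}\nabla^2_G\log Z_t=-\sum_{s<t}\partial_{G_s}\tilde f_t(X_s)$. By Stein's lemma its expectation equals $-\En[G\cdot\tilde f_t(X_{1:t-1})]$, which is at most $\En\sup_{g\in\cG}|G\cdot g|\lesssim\sqrt{t\log|\cG|}$.

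For the transfer, draw $f,f'\sim w_t$ independently together with $G^{(t)}$. This is predictable randomness, so $h_t=(f-f')/2\in\conv(\pm\cF)$ and $|h_t|=h_t^2$ (binary). Moreover $\En\var_{w_t}(f(X_t))=2\En|h_t(X_t)|$, and \cref{lemma:surprise} applies to such randomized predictable choices after conditioning on the auxiliary randomness.

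The difficulty is that plugging in the crude past bound $\sqrt{t\log|\cG|}$ and summing over $t$ gives only $T^{3/4}$. I would therefore first try to sharpen the per-round past variance. One route is to telescope in $t$. Stein gives $\En[G_t\tilde f_{t+1}(X_t)]=\En\var_{w_{t+1}}(f(X_t))$, so the increments of $\En\sup_g G_{1:t}\cdot g$ are themselves current-point variances. This suggests a self-bounding inequality of the form
\[
V\lesssim\sqrt{\frac{\log(2T)}{\sigma}\,\bigl(V+d\log(2T/\sigma)\bigr)\,T^{\,\cdot}}+\sqrt{\frac{dT}{\sigma}}\log^2(2T/\sigma),
\qquad V:=\sum_t\En\var_{w_t}(f(X_t)).
\]
Here $T^{\,\cdot}$ stands for whatever power of $T$ the telescoping argument produces. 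The target is an inequality that solves to $V\lesssim\sqrt{dT/\sigma}\log^2(2T/\sigma)$. Making the past-variance accounting tight enough for this quadratic to close is the step I expect to be hardest.
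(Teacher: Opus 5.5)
Your mix-loss/telescoping and comparator steps can be made to work, with two small repairs. First, the comparator needs no $\sqrt{T\log|\cG|}$ term. Since $G^{(T+1)}$ is centered and independent of the data, Jensen gives $\En_G\log Z_T\ge-\min_{g\in\cG}L_T(g)$ directly. Second, and more importantly, your bridge terms are variances under weights that already contain $\ell(f(X_t),Y_t)$ and a partial perturbation at $X_t$. They are therefore not predictable and cannot be fed to \cref{lemma:surprise} as stated. You must first compare them to $\var_{w_t}(f(X_t))$ through the likelihood ratio $e^{O(1+|G_t|)}$ and then average out the independent $G_t$. Alternatively, as the paper does, put $G^{(t+1)}_tf(X_t)$ into the round-$t$ loss and take a single Bernstein step with range $2(1+|G^{(t+1)}_t|)$, so that every variance is under $w_t$. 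The genuine gap is the variance step, which you leave open. Your own accounting gives only $T^{3/4}$, and a self-bounding inequality with an unspecified power of $T$ is not yet an argument.

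You lose for two reasons. For fixed $t$, write $v_s:=\En_{G^{(t)}}\var_{w_t}(f(X_s))$ for $s<t$. Testing Stein's identity against the all-ones direction only gives the $\ell_1$ bound $\sum_{s<t}v_s\lesssim\sqrt{t\log|\cG|}$. Then, by randomizing the test function as $h_t=(f-f')/2$, you make the historical term of \cref{lemma:surprise} equal to $\En\sum_{s<t}|h_t(X_s)|=\frac12\En\sum_{s<t}v_s$. That is the same $\ell_1$ quantity, so the second-moment advantage of the lemma is discarded.

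The missing idea is to average before testing and to control the squared $\ell_2$ norm. Take the function $h_t(x)=\frac12\En_{G^{(t)}}\var_{w_t}(f(x))=\En_{G^{(t)}}\sum_{f,g\in\cG}w_t(f)w_t(g)\ind\{f(x)\neq g(x)\}$, which is deterministic given the past. It is a predictable element of the convex hull of the disagreement class $\{\ind\{f\neq g\}:f,g\in\cF\}$, which has VC dimension $O(d)$. Its historical term is $\frac14\sum_{s<t}v_s^2$, and this is bounded uniformly in $t$. Set $\alpha_s:=\En_G[G_s\tilde f_t(X_s)]=-v_s$; this follows from Stein, since $\partial_{G_s}\tilde f_t(X_s)=-\var_{w_t}(f(X_s))$. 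Linearity over $\tilde f_t\in\conv(\cG)$ and a Gaussian maximal inequality then give $\|\alpha\|_2^2=\En_G\sum_{s<t}\alpha_sG_s\tilde f_t(X_s)\le\En_G\max_{g\in\cG}\sum_{s<t}\alpha_sG_sg(X_s)\le\sqrt{2\log|\cG|}\,\|\alpha\|_2$. Hence $\sum_{s<t}v_s^2\le2\log|\cG|$, which is \cref{lemma:historical-variance} via \cref{lemma:gaussian-correlation}(ii). Summing over $t$ gives $T\log|\cG|$ in place of your $T^{3/2}\sqrt{\log|\cG|}$. \Cref{lemma:surprise} then yields $\sum_t\En\var_{w_t}(f(X_t))\lesssim\sqrt{T\log(2|\cG|)\log(2T)/\sigma}+\sqrt{dT/\sigma}\log^2(2T/\sigma)$. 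With $\log|\cG|\lesssim d\log(2T/\sigma)$, this gives \eqref{eq:ew-rate}, and no self-bounding is needed.
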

The proof has three steps.
First, we adapt the variance-based analysis of exponential weights~\citep[Lemmas~3 and~4]{cesa2007secondorder}, using the Bernstein bound of~\citet[Lemma~4]{deRooij2014follow} to handle the Gaussian-perturbed losses.
This results in the following claim.
\begin{restatable}{proposition}{ewvariance}\label{prop:ew-variance}
Let $\cG\subseteq\cF$ be a nonempty finite class, and let $\ell(\hat y,y)=-y\hat y$ be the binary loss.
For every data sequence, the predictor $\tilde f_t$ defined in~\eqref{eq:ew} satisfies the following inequality,
\begin{equation}\label{eq:ew-variance-regret}
\sum_{t=1}^T\En_{G^{(t)}} \ell(\tilde f_t(X_t),Y_t) -\min_{f\in\cG}\sum_{t=1}^T\ell(f(X_t),Y_t) \lesssim \log|\cG|+ \sum_{t=1}^T\En_{G^{(t)}} \var_{f\sim w_t(\cdot\mid G^{(t)})}(f(X_t)).
\end{equation}
\end{restatable}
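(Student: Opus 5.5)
We prove \cref{prop:ew-variance} with a potential-function argument for exponential weights run on the perturbed losses, followed by a Gaussian averaging step that removes the perturbation. Fix the data sequence. For a noise vector $g=(g_1,\ldots,g_T)$, define the cumulative perturbed loss $L_t^g(f)=\sum_{s\le t}[\ell(f(X_s),Y_s)+g_sf(X_s)]$ and the potential $\Psi_t(g)=-\log\sum_{f\in\cG}e^{-L_t^g(f)}$. Since $G^{(t)}$ is a fresh $\cN(0,I_{t-1})$ vector, we may couple it with the first $t-1$ coordinates of a single $G\sim\cN(0,I_T)$. This coupling leaves each round's marginal law, and hence each summand of \eqref{eq:ew-variance-regret}, unchanged. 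It lets us telescope $\Psi_T(G)-\Psi_0(G)$. We have $\Psi_0=-\log|\cG|$ and $\Psi_T(G)\le \min_f L_T^G(f)$.

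We write $w_t=w_t(\cdot\mid G)$ and denote by $Z_t(f)=-Y_tf(X_t)+G_tf(X_t)=(G_t-Y_t)f(X_t)$ the round-$t$ perturbed loss. Each increment is
\[
\Psi_t-\Psi_{t-1}=-\log\En_{f\sim w_t}e^{-Z_t(f)}.
\]
The key observation is that $G_t$ is independent of $w_t$, which depends only on $G_{1:t-1}$ and the data. We therefore take $\En_{G_t}$ of the increment conditional on $G_{1:t-1}$ and compare it to the mixture loss $\En_{w_t}Z_t=(G_t-Y_t)\tilde f_t(X_t)$. The Gaussian part $G_t\tilde f_t(X_t)$ has zero conditional mean, so only the true loss $-Y_t\tilde f_t(X_t)$ survives. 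We need a Bernstein-type bound of the form
\[
\En_{G_t}\Bigl[\En_{w_t}Z_t+\log\En_{w_t}e^{-Z_t}\Bigr]\lesssim \var_{f\sim w_t}(f(X_t)).
\]
Here $Z_t-\En_{w_t}Z_t=(G_t-Y_t)(f(X_t)-\tilde f_t(X_t))$. The log-MGF of this centered variable under $w_t$ is bounded by a cumulant bound such as \citet[Lemma~4]{deRooij2014follow}: $\log\En e^{-(Z-\En Z)}\le (e^{b}-1-b)\var(Z)/b^2$ when $|Z-\En Z|\le b$. The range is $b\lesssim 1+|G_t|$ and the variance is $(G_t-Y_t)^2\var_{w_t}(f(X_t))$.

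Taking $\En_{G_t}$ then gives a factor $\En[(G_t-Y_t)^2 e^{C|G_t|}]=O(1)$, which yields the displayed bound with a universal constant. Summing over $t$ and taking expectations over $G$ gives
\[
\sum_t\En(-Y_t\tilde f_t(X_t))\le \En\min_f L_T^G(f)+\log|\cG|+C\sum_t\En\var_{w_t}(f(X_t)).
\]
Finally, $\En\min_fL_T^G(f)\le\min_f\En L_T^G(f)=\min_f\sum_t\ell(f(X_t),Y_t)$, since the Gaussians are centered and the minimum of expectations dominates the expectation of the minimum. The left-hand side equals $\sum_t\En_{G^{(t)}}\ell(\tilde f_t(X_t),Y_t)$ by linearity of the binary loss, which proves \eqref{eq:ew-variance-regret}.

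The main obstacle is the Bernstein step, because $Z_t$ is unbounded: the increment involves $e^{-Z_t}$, and a naive bound could blow up. The fix is to condition on $G_t$ first. Under $w_t$, with $G_t$ fixed, the centered loss is bounded by $2(1+|G_t|)$, so the de Rooij et al. lemma applies pointwise in $G_t$. The resulting factor $e^{O(|G_t|)}$ is integrable against the Gaussian density. This gives the constant in $\lesssim$ without any dependence on $T$.
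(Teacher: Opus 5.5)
Your proposal is correct and follows essentially the same route as the paper. Both arguments telescope the log-partition potential, apply the de Rooij et al.\ Bernstein bound pointwise in the fresh coordinate $G_t$ with range $2(1+|G_t|)$, integrate out the resulting $e^{O(|G_t|)}$ factor to get a universal constant, and use Jensen for the comparator term. The only cosmetic difference is that you couple all $G^{(t)}$ into one Gaussian vector and telescope pathwise, whereas the paper telescopes in expectation using that $G^{(t+1)}_{1:t-1}$ has the same law as $G^{(t)}$; for a fixed data sequence these are equivalent.
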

By~\cref{prop:cover}, the comparator losses over the entire class $\cF$ and over the cover $\cG$ are close, so it remains to bound the cumulative variance on the right-hand side of~\eqref{eq:ew-variance-regret}.
We now show that the Gaussian perturbations alone keep these variances small on the covariates from the past, even though the learning rate is constant.
\begin{restatable}{lemma}{historicalvariance}\label{lemma:historical-variance}
Let $\cG$ be a nonempty finite class of functions $\cX\to[-1,1]$ and let $\ell$ be any loss.
For every $t\le T$ and every fixed history $(X_s,Y_s)_{s<t}$, the weights in~\eqref{eq:ew} satisfy
\begin{equation}\label{eq:empirical-variance}
\sum_{s<t}\left( \En_{G^{(t)}}\var_{f\sim w_t(\cdot\mid G^{(t)})}(f(X_s))\right)^2 \lesssim \log(2|\cG|).
\end{equation}
\end{restatable}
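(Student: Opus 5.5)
The plan is to identify the averaged variances with diagonal entries of a Jacobian and apply Stein's lemma (\cref{lemma:stein}) in the same spirit as the proof sketch of \cref{lemma:empirical-prediction}. Unlike that lemma, though, I would not pass through the variance of a potential, since $\widetilde\Phi_t$ is not Lipschitz with a good constant in $G$. Instead I would use a self-bounding argument against a Gaussian maximum over the finite class $\cG$.

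\textbf{Step 1 (variance as a derivative).} Fix $t\ge2$ and the history, and write $G=G^{(t)}$ and $w=w_t(\cdot\mid G)$. Define the mean vector $m_s(G)=\En_{f\sim w}f(X_s)$ for $s<t$. Since $w(f)\propto\exp(-\sum_{r<t}[\ell(f(X_r),Y_r)+G_rf(X_r)])$, a direct differentiation of the softmax gives
\[
\partial_{G_s}m_s(G)=-\var_{f\sim w}(f(X_s)).
\]
Hence $a_s:=\En_G\var_{f\sim w}(f(X_s))=-\En_G[\partial_{G_s}m_s(G)]$. By Stein's lemma applied coordinatewise (justified because $m_s$ is smooth and bounded by $1$, and its derivatives are bounded), this becomes
\[
a_s=-\En_G[G_s\,m_s(G)].
\]
Note that $a_s\ge0$.

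\textbf{Step 2 (self-bounding).} Multiply by $a_s$ and sum over $s<t$:
\[
\sum_{s<t}a_s^2=-\En_G\Big[\sum_{s<t}a_sG_s\,m_s(G)\Big]=-\En_G\,\En_{f\sim w}\Big[\sum_{s<t}a_sG_sf(X_s)\Big]\le\En_G\max_{f\in\cG}\Big|\sum_{s<t}a_sG_sf(X_s)\Big|.
\]
The vector $a$ is deterministic given the history, since it is already averaged over $G$. Therefore each $Z_f:=\sum_s a_sG_sf(X_s)$ is a centered Gaussian with variance $\sum_s a_s^2f(X_s)^2\le\|a\|_2^2$, using $|f|\le1$. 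The standard maximal inequality over the $2|\cG|$ Gaussians $\pm Z_f$ then gives
\[
\En\max_f|Z_f|\le\sqrt{2\log(2|\cG|)}\,\|a\|_2 .
\]
This yields $\|a\|_2^2\le\sqrt{2\log(2|\cG|)}\,\|a\|_2$, and hence $\sum_{s<t}a_s^2\le2\log(2|\cG|)$, which is \eqref{eq:empirical-variance}. The case $t=1$ is trivial because the sum is empty.

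\textbf{Main obstacle.} The main obstacle is conceptual rather than computational: the bound must not depend on $t$, whereas naive approaches do. For example, bounding $\var_G(\widetilde\Phi_t)$ via Gaussian Poincaré only gives a bound of order $t$. The key is to pair the Stein identity with the deterministic weights $a_s$ themselves, so that the resulting quantity is a Gaussian process over $\cG$ with variance at most $\|a\|_2^2$. The remaining details are routine: differentiability of $m_s$ and integrability for Stein's lemma. Both are immediate because $\cG$ is finite and all quantities are bounded smooth functions of $G$. The argument uses nothing about $\ell$ beyond the fact that it enters the weights only through a $G$-independent term, which is consistent with the statement allowing any loss.
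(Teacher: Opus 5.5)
Your proposal is correct and is essentially the paper's own proof. The paper uses the softmax derivative identity (\cref{lemma:log-partition}) and Stein's lemma to get $\En_G\var_{f\sim w_t}(f(X_s))=-\En_G[G_s\tilde f_t(X_s)]$, and then applies exactly your self-bounding argument, a Gaussian maximum over $\cG$ with deterministic coefficients; the paper packages that step as \cref{lemma:gaussian-correlation}(ii).
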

The proof, given in Appendix~\ref{app:ew-proof}, combines Stein's lemma~(\cref{lemma:stein}) with a Gaussian maximal inequality.
Finally, we transfer this bound to the current covariates.
For binary predictions, the variance can be written as
\begin{equation*}
\frac12\En_{G^{(t)}}\var_{f\sim w_t(\cdot\mid G^{(t)})}(f(x)) =\En_{G^{(t)}}\sum_{f,g\in\cG}w_t(f\mid G^{(t)})w_t(g\mid G^{(t)}) \ind\{f(x)\ne g(x)\}.
\end{equation*}
This is a predictable convex mixture over the disagreement class $\{\ind\{f\ne g\}:f,g\in\cF\}$, whose VC dimension is $O(d)$ (see Appendix~\ref{app:ew-proof}).
Applying~\cref{lemma:surprise} to this mixture and summing~\eqref{eq:empirical-variance} over $t\le T$ gives
\[
\sum_{t=1}^T\En \var_{f\sim w_t(\cdot\mid G^{(t)})}(f(X_t)) \lesssim \sqrt{\frac{T\log|\cG|\log(2T)}{\sigma}} +\sqrt{\frac{(d+1)T}{\sigma}} \log^2(2T/\sigma).
\]
Substituting into~\eqref{eq:ew-variance-regret} and adding the cover approximation cost gives~\cref{lemma:ew}.
Finally,~\cref{thm:main} follows by combining~\cref{lemma:reduction,lemma:ew}.

\section{Conclusion and Discussion}

Our results show oracle-efficient learning from smooth data is possible even when the data are agnostic and the base measure unknown.
Moreover, the algorithm that we propose is close to the best that could be hoped for: entirely parameter free, requiring no knowledge of the base measure, the smoothness parameter, or the horizon, and conceptually simple.
Modulo the remaining $\sqrt d$ gap between that which we have shown is achievable by our algorithm and the statistical optimum, our work essentially closes the question of oracle-efficient learning from smooth data.

\paragraph{Concurrent work.}
In independent and concurrent work, \citet{buzaglo2026oracle} establish an oracle-efficient regret bound of $\widetilde O(\sqrt{dT})$ for binary classification with i.i.d. covariates and adversarial labels (which corresponds to the $\sigma=1$ case of our setting).
They analyze the same form of Gaussian FTPL, with a horizon-dependent perturbation scale.
Their analysis controls the mass of disagreement regions of nearly leading hypotheses (that is, the sets of covariates on which at least two near-leaders disagree) through a generalization argument that relies heavily on the i.i.d. assumption. General $\sigma$-smoothed processes require additional care, since it still permits intricate dependencies between covariates, which prevents such a generalization argument to extend directly.
A natural attempt would be to apply our surprise lemma (\cref{lemma:surprise}) to the disagreement indicators.
However, the family of disagreement regions can have infinite VC dimension even when the underlying hypothesis class has VC dimension one (e.g., for the ``star class''~\citep{hanneke2024star}).

\section*{Statement on the use of AI}

In this work, we used generative AI tools to help develop theoretical arguments, formulate and refine mathematical claims and hypotheses, propose proof ingredients, write and revise proofs, and interpret theoretical results.
Over July and August of 2026, the authors established a $\widetilde O(T^{3/4}d^{1/4}/\sqrt{\sigma})$ regret bound for an adaptation of the algorithm of~\cite{wu2024oracle}, extending its original guarantee for i.i.d. covariates ($\sigma=1$) to the general $\sigma$-smooth setting.
The proof analyzed the stability of Rademacher potentials with perturbations on subsampled past covariates, using an information-theoretic decoupling argument to control the effect of dependencies among covariates. The proof was developed and refined through iterative work between the authors and GPT 5.6 Sol.
On September 6, 2026, GPT 6 Astra subsequently proposed the modification to Gaussian FTPL used in the present manuscript.
The corresponding proof was completed through further iterations with the model.
We additionally used GPT 6 Astra, GPT 5.6 Sol, and Claude Opus 5.5 to identify and compare related literature, draft and edit manuscript text, improve exposition and organization, and edit LaTeX.
All mathematical claims in the present manuscript were independently verified by the authors.
The authors take responsibility for the final manuscript, including all text, claims, and proofs produced with AI assistance.

\section*{Acknowledgments}

The authors acknowledge support from the NSF through awards DMS-2031883 and PHY-2019786, the DARPA AIQ program, and AFOSR FA9550-25-1-0375. SV acknowledges support from Amazon AI Research Innovation Fellowship.

\bibliographystyle{plainnat}
\bibliography{refs}
\clearpage

\appendix

\section{Additional preliminaries}\label{app:additional-preliminaries}

We collect preliminaries for smoothed online learning, variance-based regret bounds for exponential weights, Gaussian integration by parts, and Maurey's empirical method.

\subsection{Covers under smoothed processes}\label{app:cover-localization}

VC and fat-shattering dimensions are intimately related to covers.
In a proposition below, we collect bounds that control sizes of covers in terms of class's VC or fat-shattering dimension.
The bounds on projections follow from~\cite{sauer1972density,shelah1972combinatorial,alon1997scale}.
$L^2(\mu)$ versions follow from~\cite{haussler1995sphere,mendelson2003entropy}.
\begin{proposition}\label{prop:classical-covers}
There are universal constants $0<c<1$ and $C>1$ such that the following hold for every $\cF \colon \cX \to [-1,1]$, and $\eps \in (0,1]$.
\begin{enumerate}
\item[(i)] For any set of $n$ points $z_{1:n}$, there exist a finite cover $\cG \subset \cF$ such that:
\[
\log|\cG|\le C \fat_{c\eps}(\cF) \log^2(Cn/\eps), \qquad \sup_{f\in\cF}\min_{g\in\cG}\max_{i\in[n]}|f(z_i)-g(z_i)|\le\eps.
\]
If $\cF$ is binary with VC dimension $d$, the cover can be chosen with $\log |\cG|\le d\log (Cn)$ and, for every $f\in\cF$, there exists $g\in\cG$ such that $f(z_i)=g(z_i)$ for all $i\in[n]$.

\item[(ii)] For every probability measure $\mu$ on $\cX$, there is a finite subset $\cG\subset \cF$ such that
\[
\log|\cG|\le C\fat_{c\eps}(\cF)\log(C/\eps), \qquad \sup_{f\in\cF}\min_{g\in\cG}\|f-g\|_{L^2(\mu)}\le\eps.
\]
If $\cF$ is binary, we have $\log|\cG|\le Cd\log(C/\eps)$ and the same approximation guarantee holds.
\end{enumerate}
\end{proposition}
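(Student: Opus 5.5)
The statement collects classical facts, so the plan is to derive each item from the standard combinatorial bounds. We first reduce to finite projections and discretizations, and then select representatives from $\cF$ itself so that $\cG\subseteq\cF$.

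\textbf{Item (i), binary case.} Consider the projection $\cF|_{z_{1:n}}=\{(f(z_1),\dots,f(z_n)):f\in\cF\}\subseteq\{\pm1\}^n$. By the Sauer--Shelah lemma, $|\cF|_{z_{1:n}}|\le\sum_{i\le d}\binom ni\le (en/d)^d\le (Cn)^d$. For each pattern in the projection we pick one $f\in\cF$ realizing it, and let $\cG$ be the set of these representatives. Then every $f$ agrees exactly with its representative on $z_{1:n}$, and $\log|\cG|\le d\log(Cn)$.

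\textbf{Item (i), real-valued case.} Here we invoke the Alon--Ben-David--Cesa-Bianchi--Haussler bound on $\ell_\infty$ covering numbers of projections: at scale $\eps$ the cover has log-size $\lesssim \fat_{c\eps}(\cF)\log^2(Cn/\eps)$. This cover need not consist of elements of $\cF$, so we make it proper. Take an $\eps/2$-cover $N$ of the projection. For each $v\in N$ whose $\eps/2$-ball meets $\cF|_{z}$, choose some $f_v\in\cF$ in that ball. For any $f$, its nearest $v$ satisfies $\|f-v\|_\infty\le\eps/2$, hence $\|f-f_v\|_\infty\le\eps$. Rescaling the scale absorbs the factor $2$ into $c$ and $C$.

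\textbf{Item (ii).} This is Mendelson--Vershynin's $L^2$ entropy bound for fat-shattering classes, $\log N(\cF,L^2(\mu),\eps)\le C\fat_{c\eps}(\cF)\log(C/\eps)$, which contains no $n$ dependence. In the binary case Haussler's packing bound gives $N\le (C/\eps)^{Cd}$, since $\|f-g\|_{L^2}^2=4\mu(f\ne g)$. Both bounds come from packing numbers. A maximal $\eps$-packing chosen inside $\cF$ is automatically a proper $\eps$-cover, which yields $\cG\subset\cF$ directly. We then adjust the constants $c$ and $C$.

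\textbf{Main obstacle.} The only real technical content is the Mendelson--Vershynin step: the dimension-free $L^2(\mu)$ bound with a single logarithm, as opposed to the $\log^2$ that an empirical-projection argument would give. We will simply cite it, together with the properness reduction described above. Everything else is bookkeeping of constants.
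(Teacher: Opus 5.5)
Your proposal is correct and matches the paper. The paper also gives no argument beyond citing the same classical results: Sauer--Shelah and Alon--Ben-David--Cesa-Bianchi--Haussler for projections, and Haussler and Mendelson--Vershynin for $L^2(\mu)$. Your properness step, choosing representatives in $\cF$ or taking a maximal packing inside $\cF$ so that $\cG\subseteq\cF$, is a detail the paper leaves implicit, and you handle it correctly.
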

It is well-established in the smoothed online learning literature that covers w.r.t. $L^1(\mu)$ norm provide approximation guarantees along the draw of a smoothed trajectory~\citep{haghtalab2024smoothed}.
In particular, the following proposition follows using the coupling construction of~\citet[Theorem~2.1 and Appendix~B.2]{haghtalab2024smoothed}.

\coverguarantee*

\begin{proof}[Proof of~\cref{prop:cover}]
Set $\delta=1/(2T)$ and $n=\lceil\sigma^{-1}\log(T/\delta)\rceil$.
The recursive coupling construction of~\citet[Theorem~2.1 and Appendix~B.2]{haghtalab2024smoothed} implies existence of a coupling between the smooth covariate sequence $X_{1:T}$ and an array $Z_{1:T,1:n}$, drawn i.i.d. from $\mu$, so that
\begin{equation}\label{eq:coupling}
\P\left[X_t\in\{Z_{t,1},\ldots,Z_{t,n}\}\text{ for every }t\le T\right] \ge 1-T(1-\sigma)^n\ge1-\delta.
\end{equation}
On the displayed event, for any pair of functions $f,g$, we can upper bound the discrepancy between $f$ and $g$ on $X_{1:T}$ as:
\[
\sum_{t=1}^T\ind\{f(X_t)\ne g(X_t)\} \le\sum_{t=1}^T\sum_{i=1}^n\ind\{f(Z_{t,i})\ne g(Z_{t,i})\}.
\]
From part~(ii) of~\cref{prop:classical-covers}, there exist a cover $\cG \subset \cF$ with the following property:
\begin{equation}\label{eq:cov-guarantee}
\log|\cG|\lesssim d\log(2T/\sigma), \qquad \forall f\in\cF\ \exists g\in\cG:\quad \En_{Z\sim\mu}|f(Z)-g(Z)|\le\frac{\sigma}{T}.
\end{equation}
Now, consider the disagreement class
\[
\cH:=\cF\Delta\cF =\{x\mapsto\ind\{f(x)\ne g(x)\}:f,g\in\cF\}.
\]
The VC dimension of such a class is in $O(d)$~\cite{blumer1989learnability}.
Results for norm comparison inequalities for VC classes~\citep[Exercise~12.4]{boucheron2013concentration} thus imply that, with probability at least $1-\delta$ over the i.i.d. array $Z_{1:T,1:n}$,
\[
\sup_{h\in\cH}\left[ \frac1{Tn}\sum_{t=1}^T\sum_{i=1}^n h(Z_{t,i})-2\En_\mu h \right] \lesssim\frac{d\log(2Tn)+\log(1/\delta)}{Tn}.
\]
Enumerate $\cG=\{g_1,\ldots,g_{|\cG|}\}$ and, for each $f\in\cF$, choose $\pi(f)\in[|\cG|]$ so that $g_{\pi(f)}$ satisfies~\cref{eq:cov-guarantee}.
On the preceding event,
\[
\sup_{f\in\cF}\sum_{t=1}^T\sum_{i=1}^n\ind\{f(Z_{t,i})\ne g_{\pi(f)}(Z_{t,i})\} \lesssim\sigma n+d\log(2Tn)+\log(1/\delta) \lesssim d\log(2T/\sigma),
\]
where the last inequality uses $\sigma n\le\log(2T^2)+1$, $\log(2Tn)\lesssim\log(2T/\sigma)$, and $d\ge1$.
Thus,
\[
\sup_{f\in\cF} \inf_{g\in \cG} \sum_{t=1}^T\sum_{i=1}^n\ind\{f(Z_{t,i})\ne g(Z_{t,i})\} \lesssim\sigma n+d\log(2Tn)+\log(1/\delta) \lesssim d\log(2T/\sigma).
\]
Intersecting this high probability event with one in~\cref{eq:coupling}, we obtain with probability $1- 2\delta = 1-1/T$:
\[
\sup_{f\in\cF} \inf_{g\in \cG} \sum_{t=1}^T \ind\{f(X_t)\ne g(X_t)\} \lesssim\sigma n+d\log(2Tn)+\log(1/\delta) \lesssim d\log(2T/\sigma).
\]
This concludes the proof.
\end{proof}

For real-valued classes, the above can be extended to bounds that depend on the \emph{fat-shattering dimension} of the class.

\begin{proposition}\label{prop:fat-cover}
Let $\cF\subseteq[-1,1]^{\cX}$.
There are universal constants $c,C>0$ such that the following hold.
For every $\mu$, horizon $T\ge1$, $\sigma\in(0,1]$, and $0<\eps\le1/16$, there exist a finite subset $\cG = \{g_1,\ldots,g_{|\cG|}\}\subseteq\cF$ and a map $\pi \colon \cF \to [|\cG|]$ with $g_{\pi(f)}=f$ for $f\in\cG$, such that
\begin{equation}\label{eq:real-cover-size}
\log|\cG|\lesssim \fat_{c\eps}(\cF)\log^2\frac{CT}{\sigma\eps}
\end{equation}
and, for every covariate process that is $\sigma$-smooth with respect to $\mu$, with probability at least $1-1/T$, we have
\begin{equation}\label{eq:real-cover-count}
\sup_{f\in\cF}\sum_{s=1}^T \ind\{|f(X_s)-g_{\pi(f)}(X_s)|>2\eps\} \lesssim\fat_{c\eps}(\cF)\log^2\frac{CT}{\sigma\eps}.
\end{equation}
\end{proposition}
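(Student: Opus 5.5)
We follow the proof of \cref{prop:cover} and change only the cover and the uniform convergence step. Set $\delta=1/(2T)$ and $n=\lceil\sigma^{-1}\log(T/\delta)\rceil$. The coupling of~\citet{haghtalab2024smoothed} gives an i.i.d.\ array $Z_{1:T,1:n}\sim\mu$ such that, with probability at least $1-\delta$, every $X_t$ lies in $\{Z_{t,1},\ldots,Z_{t,n}\}$. On this event
\[
\sum_{t\le T}\ind\{|f(X_t)-g(X_t)|>2\eps\}\le\sum_{t,i}\ind\{|f(Z_{t,i})-g(Z_{t,i})|>2\eps\}
\]
for every pair $f,g$. So it suffices to build $\cG$ and $\pi$ (depending only on $\mu$) so that the right-hand side is small uniformly over $f$, with high probability over $Z$.

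\textbf{Cover.} We use part~(ii) of \cref{prop:classical-covers} at scale $\eps'=\eps\sqrt{\sigma/T}$ or smaller; say $\eps'=\eps\sigma/T$. This gives $\cG\subseteq\cF$ with $\log|\cG|\lesssim\fat_{c\eps'}(\cF)\log(C/\eps')$ and $\|f-g_{\pi(f)}\|_{L^2(\mu)}\le\eps'$. That scale is wrong, though: we need $\fat_{c\eps}$, not $\fat_{c\eps\sigma/T}$.

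The fix is to discretize instead. Let $Q_\eps f:=\eps\lfloor f/\eps\rfloor$ be the rounding of $f$ to an $\eps$-grid. The disagreement event $|f-g|>2\eps$ implies $Q_\eps f\ne Q_\eps g$ at that point. The class $Q_\eps\cF$ takes $O(1/\eps)$ values, and its combinatorial (Natarajan/graph-type) dimension is controlled by $\fat_{c\eps}(\cF)$ via~\citet{alon1997scale}. So we take $\cG$ to be an $L^1(\mu)$-cover, at scale $\sigma/T$, of the quantized class, measured through $\Pr_\mu[Q_\eps f\ne Q_\eps g]$. Its size is $\log|\cG|\lesssim\fat_{c\eps}\log^2(CT/(\sigma\eps))$: this is the Haussler-type packing bound for multiclass classes with bounded graph dimension over $O(1/\eps)$ labels, which is where the squared logarithm comes from. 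Representatives are chosen inside $\cF$, with $\pi(g)=g$ for $g\in\cG$, which is possible because we pick cover elements as images of class members.

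\textbf{Uniform convergence.} Next consider the class $\cH=\{x\mapsto\ind\{Q_\eps f(x)\ne Q_\eps g(x)\}\}$. Its VC dimension is $\lesssim\fat_{c\eps}\log(C/\eps)$, since both $Q_\eps f$ and $Q_\eps g$ have bounded graph dimension and pairing only multiplies growth functions. The relative-deviation bound~\citep[Exercise~12.4]{boucheron2013concentration} then gives, with probability $1-\delta$, for every $h\in\cH$,
\[
\sum_{t,i}h(Z_{t,i})\le 2Tn\,\En_\mu h+O\big(\mathrm{VC}(\cH)\log(2Tn)+\log(1/\delta)\big).
\]
With $\En_\mu h\le\sigma/T$, the first term is $\lesssim\sigma n\lesssim\log T$. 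The total is therefore $\lesssim\fat_{c\eps}\log^2(CT/(\sigma\eps))$. A union bound over the two events gives probability at least $1-1/T$.

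\textbf{Main obstacle.} The hard step is getting the combinatorial dimension at scale $c\eps$, not at the fine cover scale. Rounding with a fixed grid is not robust: $|f-g|>2\eps$ does imply different grid cells, but a fixed-grid quantization can inflate the dimension. The standard remedy, which I would try first, is to use the Alon et al.\ lemma bounding the growth of $\eps$-separated patterns by $\fat_{\eps/4}$. An alternative is to work with the graph dimension of $Q_\eps\cF$, bounded via the fat-shattering dimension at scale $\eps/2$, and to absorb the $\log(1/\eps)$ factors into the $\log^2$ term.
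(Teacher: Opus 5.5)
There is a genuine gap, at exactly the point you flag as the main obstacle. Both your cover-size bound and your uniform-convergence step rest on one claim: that the fixed-grid quantization $Q_\eps\cF$ has graph/Natarajan dimension controlled by $\fat_{c\eps}(\cF)$, and hence that $\{x\mapsto\ind\{Q_\eps f(x)\ne Q_\eps g(x)\}\}$ has VC dimension controlled by it too. This claim is false.

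Let $\cF$ consist of all functions $\cX\to[-\gamma,\gamma]$ with $0<\gamma<c\eps$. Then $\fat_{c\eps}(\cF)=0$. Yet $Q_\eps\cF$ is the set of all $\{-\eps,0\}$-valued functions, since the sign of $f(x)$ decides the cell. So the graph dimension of $Q_\eps\cF$ and the VC dimension of your disagreement class are infinite. Moreover, for nonatomic $\mu$, the pseudometric $\mu(Q_\eps f\ne Q_\eps g)$ admits no finite cover at any scale below $1/2$. The proposition itself is trivial in this example, since one cover element is within $2\gamma<2\eps$ of every $f$; it is your route that fails, not the statement.

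Hard thresholding discards the margin on which the fat-shattering dimension depends, and neither of your remedies restores it. The Alon et al.\ lemma bounds sup-norm covering numbers of $\cF$ restricted to a fixed finite point set; this is \cref{prop:classical-covers}(i). It says nothing about the VC dimension of threshold events, so it cannot be fed into the relative-deviation inequality of \citet{boucheron2013concentration}. Your second remedy, bounding the graph dimension of $Q_\eps\cF$ by $\fat_{\eps/2}(\cF)$, is precisely the false statement above.

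What is missing is a margin-based symmetrization in which every union bound runs over a finite empirical cover. The paper uses a double sample, in the following steps.
\begin{enumerate}
\item Draw independent $Z_{1:N},Z'_{1:N}\sim\mu$ with $N=Tn$, and condition on the multiset of $2N$ points.
\item Take an $\eps/4$ sup-norm cover $\cG'$ of $\cF$ on these points, so that $\log|\cG'|\lesssim\fat_{c\eps}(\cF)\log^2(CT/(\sigma\eps))$.
\item Suppose some $f,g\in\cF$ are within $\eps$ on all of $Z'$ but more than $2\eps$ apart on at least $k$ points of $Z$. Then their representatives $h,h'\in\cG'$ are within $3\eps/2$ on $Z'$ and more than $3\eps/2$ apart on those $k$ points.
\item For each fixed pair $(h,h')$, exchangeability bounds the probability that all these bad points land in $Z$ by $2^{-k}$. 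A union bound over the $|\cG'|^2$ pairs finishes, with $k\asymp\fat_{c\eps}(\cF)\log^2(CT/(\sigma\eps))$.
\end{enumerate}
The slack between $\eps$ and $2\eps$ absorbs the $\eps/4$ approximation error, which is exactly what a fixed grid cannot provide. One then fixes a good ghost sample $z'_{1:N}$ and lets $\cG$ be an $\eps$ sup-norm cover on $z'_{1:N}$; this makes $\cG$ and $\pi$ depend only on $\mu$. From there, the coupling of $X_{1:T}$ into a fresh i.i.d.\ sample and the final union bound go through as in your first paragraph.
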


\begin{proof}
We use a double-sampling trick.
First, let
\[
\delta=1/T,\quad n=\left\lceil\sigma^{-1}\log(2T/\delta)\right\rceil,\quad N=Tn,\quad k=\left\lceil C\fat_{c\eps}(\cF) \log^2\frac{CT}{\sigma\eps}\right\rceil.
\]
Let $Z_{1:N}$ and $Z'_{1:N}$ be independent i.i.d.\ samples from $\mu$.
We claim that
\begin{equation}\label{eq:double-sample}
\P\left[ \sup_{\substack{f,g\in\cF,\\
\max_{j\le N}|f(Z'_j)-g(Z'_j)|\le\eps} } \, \sum_{j=1}^N\ind\{|f(Z_j)-g(Z_j)|>2\eps\}\ge k \right]\le\frac\delta2.
\end{equation}
Condition on the unordered multiset $\{Z_j,Z'_j\}, j \in [N]$.
Choose an $\eps/4$-cover $\cG'\subseteq\cF$ on the $2N$ points $\{Z_j,Z'_j\}$, $j\le N$ using~\cref{prop:classical-covers}(i), so that $\cG'$ depends only on the multiset of $2N$ points, but not on their ordering.
By~\cref{prop:classical-covers}(i), its cardinality satisfies
\[
\log|\cG'| \lesssim\fat_{c\eps}(\cF)\log^2(CN/\eps) \lesssim\fat_{c\eps}(\cF)\log^2\frac{CT}{\sigma\eps}.
\]
If $f,g$ satisfy the event in~\eqref{eq:double-sample}, choose functions $h,h'\in\cG'$ from the cover such that $|h-f|\le\eps/4$ and $|h'-g|\le\eps/4$ on all $2N$ points.
Then, on the event~\cref{eq:double-sample}, we have
\[
\max_{j\le N}|h(Z'_j)-h'(Z'_j)|\le\frac{3\eps}2, \qquad \sum_{j=1}^N\ind\left\{|h(Z_j)-h'(Z_j)|>\frac{3\eps}2\right\}\ge k.
\]
Fix $h,h'\in\cG'$.
Now, recall that so far we have conditioned on the multiset of $2N$ observations, but not on their ordering.
Call a sample $Z \in \{Z_j,Z'_j\}_{j\le N}$ good if $|h(Z) - h'(Z)| \le 3\eps/2$, and call it bad otherwise.
Then, on the above event, all bad samples are from $Z_{1:N}$, and there are at least $k$ of them.
By exchangeability, the probability that all $\ge k$ bad samples are assigned to $Z_{1:N}$ is at most $2^{-k}$.
Thus, conditional on the multiset of $2N$ points, we have for any fixed pair $h, h'$:
\[
\P\left[ \max_{j\le N}|h(Z'_j)-h'(Z'_j)|\le\frac{3\eps}2, \quad \sum_{j=1}^N\ind\left\{|h(Z_j)-h'(Z_j)|>\frac{3\eps}2\right\}\ge k \;\middle|\; \{Z_j,Z'_j\}_{j\le N} \right] \le2^{-k}.
\]
A union bound over $h,h'\in\cG'$ therefore bounds the conditional probability in~\eqref{eq:double-sample} by
\[
|\cG'|^2 2^{-k} =\exp\left(2\log|\cG'|-k\log2\right) \le\frac\delta2,
\]
where the last inequality follows from the cardinality bound and $\log(2/\delta)\lesssim\log(2T)$, for $C$ sufficiently large.
This proves~\eqref{eq:double-sample}.

Now, to construct the desired cover, note that~\cref{eq:double-sample} implies existence of at least one \emph{fixed} tuple $z_{1:N}'$ such that, when $Z_{1:N} \sim \mu$, we have
\[
\P \left[\sup_{\substack{f,g\in\cF,\\
\max_{j\le N}|f(z'_j)-g(z'_j)|\le\eps} } \, \sum_{j=1}^N\ind\{|f(Z_j)-g(Z_j)|>2\eps\}\ge k \right]\le\frac\delta2.
\]
Construct an $\eps$-cover $\cG$ on $z'_{1:N}$ using~\cref{prop:classical-covers}(i), and choose, for each $f\in\cF$, a cover element $g_{\pi(f)}\in\cG$ such that
\[
\max_{j\le N}|f(z'_j)-g_{\pi(f)}(z'_j)|\le\eps,
\]
taking $g_{\pi(f)}=f$ for $f\in\cG$.
\Cref{prop:classical-covers}(i) gives~\eqref{eq:real-cover-size}.
By~\eqref{eq:double-sample}, with probability at least $1-\delta/2$ over a fresh i.i.d.\ sample $Z_{1:N}$,
\[
\sup_{f\in\cF}\sum_{j=1}^N \ind\{|f(Z_j)-g_{\pi(f)}(Z_j)|>2\eps\}<k.
\]
The conclusion follows from~\eqref{eq:coupling}: with probability at least $1-\delta/2$, we can couple $X_{1:T}$ to $Z_{1:N}$ such that $X_{1:T} \subset Z_{1:N}$.
Applying union bound gives:
\[
\sup_{f\in\cF}\sum_{s=1}^T \ind\{|f(X_s)-g_{\pi(f)}(X_s)|>2\eps\} \lesssim \fat_{c\eps}(\cF)\log^2\frac{CT}{\sigma\eps},
\]
with probability $1- \delta$, as desired.

\end{proof}

\subsection{Variance-based regret bounds}\label{app:ew-variance}

Variance-based bounds are a standard tool for analyzing exponential weights~\citep{cesa2007secondorder}.
The following is a one-round Bernstein bound of~\citet[Lemma~4]{deRooij2014follow}, which will be useful in our analysis of exponential weights.
\begin{proposition}\label{prop:ew-variance-general}
Let $\cG$ be a nonempty finite class, let $w\in\Delta(\cG)$ be a set of weights, and let $\ell\colon\cG\to\bbR$ be a loss function.
Suppose,
\[
\max_{f\in\cG}\ell(f)-\min_{f\in\cG}\ell(f)\le s,
\]
that is, $s$ upper bounds the range of the loss.
Then, for every $\eta>0$,
\begin{align*}
\En_{f\sim w}\ell(f) +\frac1\eta\log\sum_{f\in\cG}w(f)\exp\left(-\eta\ell(f)\right) &\le \frac{\exp\left(\eta s\right)-\eta s-1}{\eta s^2} \var_{f\sim w}(\ell(f)),
\end{align*}
where the coefficient is interpreted as $\eta/2$ when $s=0$.
\end{proposition}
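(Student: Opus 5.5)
The plan is the classical Bernstein-type moment generating function argument. Everything reduces to a scalar inequality for the function $\varphi(x)=e^x-x-1$. Write $m:=\En_{f\sim w}\ell(f)$ and, for $f\in\cG$, set $y(f):=\eta\,(m-\ell(f))$. Then
\[
\En_{f\sim w}\ell(f)+\frac1\eta\log\sum_{f\in\cG}w(f)e^{-\eta\ell(f)}=\frac1\eta\log\En_{f\sim w}e^{y(f)} .
\]
So it suffices to show
\[
\En_{f\sim w}e^{y(f)}\le 1+\frac{\varphi(\eta s)}{s^2}\var_{f\sim w}(\ell(f)),
\]
and then apply $\log(1+u)\le u$ and divide by $\eta$. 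This yields exactly the coefficient $\frac{e^{\eta s}-\eta s-1}{\eta s^2}$.

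\textbf{Step 1 (range).} Since $m$ lies between $\min_{\cG}\ell$ and $\max_{\cG}\ell$, we get $m-\ell(f)\le\max\ell-\min\ell\le s$. Hence $y(f)\le\eta s$ for every $f$, using $\eta>0$.

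\textbf{Step 2 (scalar inequality).} We have $e^{y}=1+y+\varphi(y)$. The map $x\mapsto\varphi(x)/x^2$, extended by the value $1/2$ at $x=0$, is nondecreasing on all of $\bbR$. This is standard: it equals $\sum_{k\ge0}x^k/(k+2)!$ for $x\ge0$, and it can be checked by differentiation for $x<0$, or via the integral representation $\varphi(x)/x^2=\int_0^1(1-u)e^{ux}\,\dd u$, which is manifestly nondecreasing in $x$. Combining this monotonicity with $y\le\eta s$ gives
\[
\varphi(y)\le y^2\,\frac{\varphi(\eta s)}{(\eta s)^2}.
\]

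\textbf{Step 3 (expectation).} Take $\En_{f\sim w}$ of $e^{y}=1+y+\varphi(y)$. The linear term vanishes because $\En_w y=0$. Moreover, $\En_w y^2=\eta^2\var_w(\ell)$. Therefore
\[
\En_w e^{y}\le 1+\eta^2\var_w(\ell)\,\frac{\varphi(\eta s)}{\eta^2 s^2}.
\]
The conclusion follows as described above.

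\textbf{Degenerate case.} If $s=0$, the loss is constant on $\cG$. Then the left-hand side equals $m-m=0$ and the variance is $0$, so the bound holds trivially with any coefficient; this is consistent with the limit $\eta/2$.

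\textbf{Main obstacle.} The only non-routine point is the monotonicity of $\varphi(x)/x^2$ over negative $x$, since $y$ can be very negative. I would settle it with the integral representation $\int_0^1(1-u)e^{ux}\,\dd u$, which avoids any case analysis.
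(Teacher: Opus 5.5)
Your proof is correct. The paper does not prove this proposition itself; it cites \citet[Lemma~4]{deRooij2014follow}, and the standard Bernstein argument behind that lemma is the one you give: center the loss, use the one-sided bound $\eta(m-\ell(f))\le\eta s$ with monotonicity of $(e^x-x-1)/x^2$, and finish with $\log(1+u)\le u$.
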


\subsection{Gaussian geometry}

The following identity is due to~\cite{stein1981estimation}.
Intuitively, it will allow us to relate gradients of certain potentials to their correlations with the Gaussians.
\begin{lemma}[Stein's lemma]\label{lemma:stein}
Let $G=(G_1,\ldots,G_n)$ have independent standard Gaussian coordinates, and let $\Psi:\mathbb R^n\to\mathbb R$ be Lipschitz.
Then, for every $s\le n$,
\[
\En_G\partial_{G_s}\Psi(G)=\En_G[G_s \Psi(G)].
\]
\end{lemma}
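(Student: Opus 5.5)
This is the classical Gaussian integration by parts, and I will prove it in one dimension first and then lift it coordinatewise using independence. Fix $s\le n$ and write $G=(G_s,G_{-s})$, where $G_{-s}$ collects the remaining coordinates. Because the coordinates are independent, Fubini lets us write both sides as iterated integrals: first integrate over $G_s$ with $G_{-s}=g_{-s}$ held fixed, then integrate over $g_{-s}$. It is therefore enough to prove, for every fixed $g_{-s}$, the one-dimensional identity $\En[\psi'(Z)]=\En[Z\psi(Z)]$, where $Z\sim\cN(0,1)$ and $\psi(z)=\Psi(z,g_{-s})$.

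Since $\Psi$ is Lipschitz with some constant $L$, the section $\psi$ is also $L$-Lipschitz. Hence $\psi$ is absolutely continuous, its derivative $\psi'$ exists almost everywhere, and $|\psi'|\le L$. By Rademacher's theorem, $\partial_{G_s}\Psi$ exists Lebesgue-almost everywhere on $\bbR^n$ and agrees with $\psi'$ on almost every section, so the left-hand side is well defined. Let $\varphi(z)=(2\pi)^{-1/2}e^{-z^2/2}$ be the standard Gaussian density, which satisfies $\varphi'(z)=-z\varphi(z)$. On a finite interval $[-R,R]$, integration by parts for absolutely continuous functions gives
\[
\int_{-R}^{R}\psi'(z)\varphi(z)\,\dd z=\psi(R)\varphi(R)-\psi(-R)\varphi(-R)+\int_{-R}^{R}z\psi(z)\varphi(z)\,\dd z .
\]
The boundary terms tend to $0$ as $R\to\infty$, because $|\psi(z)|\le|\psi(0)|+L|z|$ grows at most linearly while $\varphi$ decays like a Gaussian. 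Both integrands are integrable on all of $\bbR$: we have $|\psi'\varphi|\le L\varphi$ and $|z\psi(z)\varphi(z)|\le(|\psi(0)||z|+Lz^2)\varphi(z)$. Dominated convergence then lets us send $R\to\infty$ on both sides, which gives the one-dimensional identity.

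To finish, we integrate the one-dimensional identity over $g_{-s}$, and this requires the two functions of $g_{-s}$ to be integrable. The bound $|\Psi(z,g_{-s})|\le|\Psi(0)|+L(|z|+\|g_{-s}\|)$ gives $\En|G_s\Psi(G)|<\infty$, since Gaussian vectors have finite second moments. The bound $|\partial_{G_s}\Psi|\le L$ takes care of the other side. Fubini then applies, and we obtain $\En_G\,\partial_{G_s}\Psi(G)=\En_G[G_s\Psi(G)]$.

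The only delicate step is the regularity bookkeeping: making sure the derivative exists almost everywhere via Rademacher's theorem, and that the boundary terms vanish. The Lipschitz hypothesis handles both through linear growth and a bounded derivative. If one wants to avoid almost-everywhere derivatives altogether, an alternative is to mollify $\Psi$, apply the identity to the smooth approximations, and pass to the limit by dominated convergence.
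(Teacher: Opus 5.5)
Your proof is correct: it is the standard Gaussian integration by parts in one dimension, lifted to $\bbR^n$ by Fubini, with the Lipschitz hypothesis supplying the a.e.\ derivative bounded by $L$, the vanishing boundary terms, and the needed integrability. The paper gives no proof of its own and simply cites \citet{stein1981estimation} for this classical identity, so your argument is the same approach that citation stands for.
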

To control the aforementioned correlations, the following two assertions will be useful.
\begin{lemma}\label{lemma:gaussian-correlation}
Let $G=(G_1,\ldots,G_n)$ have independent standard Gaussian coordinates.
\begin{enumerate}
\item[(i)] For every potential $\Psi:\bbR^n\to\bbR$ with $\En_G[\Psi(G)^2]<\infty$, we have $\sum_{s\le n}\left(\En_G[G_s\Psi(G)]\right)^2\le\var_G(\Psi(G)).$
\item[(ii)] Consider a finite class $\cH \colon \cX \to [-1,1]$ and any $h\colon \cX \times \bbR^n \to [-1,1]$ such that $h(\cdot, G) \in \conv(\cH)$ almost surely.
Then, for any $z_{1:n} \in \cX^n$
\[
\sum_{i\le n}\left(\En_G[G_i h(z_i, G)]\right)^2\le2\log |\cH|.
\]
\end{enumerate}
\end{lemma}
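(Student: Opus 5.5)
For part (i), I would use the orthonormality of the Gaussian coordinates in $L^2(\gamma_n)$. Write $\Psi_0=\Psi(G)-\En_G\Psi(G)$. Since $\En_G G_s=0$, we have $\En_G[G_s\Psi(G)]=\En_G[G_s\Psi_0]$. The functions $G_1,\ldots,G_n$ form an orthonormal system in $L^2$ of the Gaussian measure, and all of them are orthogonal to constants. Bessel's inequality then gives
\[
\sum_{s\le n}\left(\En_G[G_s\Psi_0]\right)^2\le\En_G[\Psi_0^2]=\var_G(\Psi(G)).
\]
Equivalently, one can take the supremum over unit vectors $u$ of $\cov(u^\top G,\Psi)^2$ and apply Cauchy--Schwarz, as in the sketch of \cref{lemma:empirical-prediction}. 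No regularity beyond square integrability is needed.

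For part (ii), I would use duality to reduce to a Gaussian maximal inequality. Set $a_i=\En_G[G_ih(z_i,G)]$. If $a=0$ there is nothing to prove. Otherwise let $u=a/\|a\|_2$, so that
\[
\|a\|_2=\sum_i u_i a_i=\En_G\Big[\sum_i u_iG_i\,h(z_i,G)\Big].
\]
Because $h(\cdot,G)\in\conv(\cH)$ almost surely, the map $f\mapsto\sum_i u_iG_if(z_i)$ is linear in $f$. A linear functional is maximized over the closed convex hull at an extreme point, and pointwise closure does not change this because only the finitely many points $z_{1:n}$ matter. Hence, pointwise in $G$,
\[
\sum_i u_iG_ih(z_i,G)\le\max_{f\in\cH}\sum_i u_iG_if(z_i)=\max_{f\in\cH}Z_f .
\]

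The variables $Z_f$ are centered Gaussians with variance $\sum_iu_i^2f(z_i)^2\le1$, since $|f|\le1$ and $\|u\|_2=1$. The standard maximal inequality $\En\max_{f\in\cH}Z_f\le\sqrt{2\log|\cH|}$ then gives $\|a\|_2\le\sqrt{2\log|\cH|}$. Squaring yields the claim. The maximal inequality is proved by Jensen applied to $\lambda^{-1}\log\sum_f e^{\lambda Z_f}$ and then optimizing over $\lambda$.

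The main subtlety is the direction $u$. It depends on $a$, and $a$ is a deterministic vector, so $u$ is fixed before $G$ is drawn and the Gaussian maximal inequality applies with fixed coefficients. The other point to check is that the pointwise closure of the convex hull still satisfies the linear-maximization bound. This holds because the functional only evaluates functions at finitely many points, and the set of value vectors of $\conv(\cH)$ at $z_{1:n}$ is the convex hull of finitely many vectors in $\bbR^n$.

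One edge case remains: if $|\cH|=1$ the bound reads $0$. Then $h(\cdot,G)$ equals the single element of $\cH$, so $a_i=f(z_i)\En G_i=0$, and the bound holds.
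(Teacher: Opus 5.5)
Your proposal is correct and follows the paper's proof essentially step for step. Part (i) is Bessel's inequality for the orthonormal coordinate functions in $L^2(\gamma^n)$ after centering $\Psi$. Part (ii) bounds the correlation vector through the pointwise maximum of a linear functional over $\cH$, followed by the Jensen/Gaussian-MGF maximal inequality; you normalize by $u=a/\|a\|_2$, whereas the paper keeps $\alpha$ and optimizes $\lambda$. Your remarks on the pointwise closure of the convex hull and on the degenerate case $|\cH|=1$ (where the paper's choice of $\lambda$ would be zero) are valid and slightly more careful than the paper.
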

\begin{proof}
First, note that we may assume without loss of generality that $\En \Psi(G) = 0$.
Then, (i) can be recognized as an instance of Bessel inequality for $L^2$ space.
Indeed, let $\gamma$ be the standard Gaussian measure on $\bbR$.
For $f,g\in L^2(\gamma^n)$, let
\[
\langle f, g\rangle_{L^2(\gamma^n)} := \En_{G \sim \gamma^n} f(G)g(G).
\]
Let $e_i \colon x \mapsto x_i$ be the $i$th coordinate function; then, $\{e_i\}$ are orthonormal in this space, and we have
\begin{align*}
\sum_{s\le n}\left(\En_G[G_s\Psi(G)]\right)^2 &= \sum_{s\le n}\left(\langle e_s, \Psi \rangle_{L^2(\gamma^n)}\right)^2 \\
&\le \|\Psi\|_{L^2(\gamma^n)}^2 \\
&= \En_G\Psi(G)^2 = \var_G(\Psi(G)),
\end{align*}
where the inequality is Bessel's inequality and the last equality uses $\En_G\Psi(G)=0$.
This concludes the proof of part~(i).

For~(ii), let $\alpha_s=\En_G[G_sh(z_s,G)]$ for $s\le n$, and assume that $\alpha\ne0$, as otherwise there is nothing to prove.
For a fixed $G$, the map $h\mapsto\sum_{s\le n}\alpha_sG_sh(z_s)$ is linear, so its value at $h(\cdot,G)\in\conv(\cH)$ is at most its maximum over $\cH$.
Thus,
\[
\|\alpha\|_2^2 =\En_G\sum_{s\le n}\alpha_sG_sh(z_s,G) \le\En_G\max_{h\in\cH}\sum_{s\le n}\alpha_sG_sh(z_s).
\]
For each $h\in\cH$, the sum inside the maximum is a centered Gaussian with variance $\sum_{s\le n}\alpha_s^2h(z_s)^2\le\|\alpha\|_2^2$.
Thus, for every $\lambda>0$, Jensen's inequality and the Gaussian moment-generating function give
\begin{align*}
\En_G\max_{h\in\cH}\sum_{s\le n}\alpha_sG_sh(z_s) &\le\frac1\lambda\log\sum_{h\in\cH} \En_G\exp\left(\lambda\sum_{s\le n}\alpha_sG_sh(z_s)\right) \\
&\le\frac{\log|\cH|}{\lambda}+\frac{\lambda\|\alpha\|_2^2}{2}.
\end{align*}
Taking $\lambda=\sqrt{2\log|\cH|}/\|\alpha\|_2$ gives $\|\alpha\|_2^2\le\sqrt{2\log|\cH|}\,\|\alpha\|_2$.
Dividing by $\|\alpha\|_2$ and squaring concludes the proof.
\end{proof}

\subsection{Convex hull approximations} \label{app:maurey}

We use the following consequence of Maurey's empirical method~\citep{pisier1981remarques}, in the $\ell_\infty$ form due to~\citet{althofer1994sparse}.

\begin{proposition}[Maurey's empirical method]\label{prop:maurey}
Let $\cF\subseteq[-1,1]^{\cX}$.
For any $h\in\conv(\pm\cF)$, integers $m,K\ge1$, and points $Z_1,\ldots,Z_m\in\cX$, there exist functions $f_1,\ldots,f_K\in\pm\cF$ such that
\[
\max_{s\in[m]}\left|h(Z_s)-\frac1K\sum_{i=1}^K f_i(Z_s)\right| \le\sqrt{\frac{2\log(2m)}{K}}.
\]
\end{proposition}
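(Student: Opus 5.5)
We reduce everything to the $m$ evaluation points and then use a probabilistic construction. Write $v(g):=(g(Z_1),\ldots,g(Z_m))\in[-1,1]^m$ for $g\in\pm\cF$. Since $h$ lies in the pointwise closure of the convex hull of $\pm\cF$, for every $\eta>0$ there is a \emph{finite} convex combination $h'=\sum_{j=1}^J\lambda_j g_j$, with $g_j\in\pm\cF$, such that $\max_{s\in[m]}|h(Z_s)-h'(Z_s)|\le\eta$. This holds because only the $m$ coordinates $v(\cdot)$ matter, and pointwise closure restricted to finitely many points is closure in $\bbR^m$. The plan is to first prove the claimed bound for $h'$ and then let $\eta\downarrow0$.

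For $h'$ we use Maurey's sampling. Draw $f_1,\ldots,f_K$ i.i.d.\ from the distribution placing mass $\lambda_j$ on $g_j$. For each $s\in[m]$, the deviation
\[
D_s:=\frac1K\sum_{i=1}^K f_i(Z_s)-h'(Z_s)
\]
is an average of $K$ independent centered variables, each taking values in an interval of length at most $2$. By Hoeffding's lemma, each summand has $\log$-MGF at most $\lambda^2/2$, so both $D_s$ and $-D_s$ are sub-Gaussian with variance proxy $1/K$. The quantity $\max_s|D_s|$ is therefore a maximum of $2m$ such variables. The standard maximal inequality (Jensen plus the MGF bound, exactly as in the proof of \cref{lemma:gaussian-correlation}(ii)) gives
\[
\En\max_s|D_s|\le\frac{\log(2m)}{\lambda}+\frac{\lambda}{2K}.
\]
Optimizing over $\lambda$ yields $\En\max_s|D_s|\le\sqrt{2\log(2m)/K}$. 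Hence some realization $f_1,\ldots,f_K\in\pm\cF$ satisfies $\max_s|D_s|\le\sqrt{2\log(2m)/K}$, which proves the bound for $h'$.

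The step I expect to be the main (if minor) obstacle is removing the slack $\eta$. The triangle inequality only gives $\sqrt{2\log(2m)/K}+\eta$ for $h$. I would handle this with a strictness argument:
\begin{itemize}
\item[(a)] Jensen's inequality in the maximal bound is strict unless $\max_s|D_s|$ is almost surely constant.
\item[(b)] Hoeffding's lemma is tight only for symmetric $\pm1$ variables.
\end{itemize}
So either the expected maximum falls strictly below $\sqrt{2\log(2m)/K}$ by a margin that stays bounded away from zero as $\eta\to0$, or we are in a degenerate configuration that can be checked directly. If this margin argument becomes messy, the fallback is to use only that the bound holds with an arbitrary additive $\eta>0$. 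This version suffices for every downstream use in the paper, since \cref{prop:maurey} is only invoked with slack absorbed into $\lesssim$.
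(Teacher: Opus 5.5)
Your sampling argument is correct, and it is the standard Maurey--Alth\"ofer proof. The paper gives no proof of its own; it only cites \citet{pisier1981remarques} and \citet{althofer1994sparse}. Your Hoeffding step (variance proxy $1/K$ for $\pm D_s$), the maximal inequality over $2m$ variables, and the choice $\lambda=\sqrt{2K\log(2m)}$ all check out. So you have the exact bound for every finite convex combination $h'$, and for $h$ in the pointwise closure up to an arbitrary additive $\eta>0$.

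The one step you have not established is the removal of $\eta$. Point (a) gives no margin that is uniform in $h'$, and $h'$ changes as $\eta\to0$. Point (b) is not quite right: Hoeffding's lemma is strict at every $\lambda\neq0$ for every centered law, symmetric $\pm1$ included, since $\cosh\lambda<e^{\lambda^2/2}$.

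What you need is a uniform version at the fixed per-summand tilt $\theta=\lambda/K=\sqrt{2\log(2m)/K}$. Consider the family of centered laws supported in an interval $[a,a+2]$ with $a\in[-2,0]$; it contains the laws of $\pm(f_i(Z_s)-h'(Z_s))$.
\begin{itemize}
\item[(1)] This family is weakly compact, and $P\mapsto\int e^{\theta x}\,\dd P$ is continuous on it.
\item[(2)] For each $P$, writing $\psi_P(u)=\log\int e^{ux}\,\dd P$, one has $\psi_P''(u)\le1$ with equality for at most one $u$. Hence $\psi_P(\theta)<\theta^2/2$.
\item[(3)] By (1) and (2), $\psi_P(\theta)\le\theta^2/2-\delta$ for some $\delta=\delta(m,K)>0$ independent of $P$.
\end{itemize}
Your maximal inequality then improves to $\En\max_s|D_s|\le\sqrt{2\log(2m)/K}-K\delta/\lambda$, uniformly in $h'$. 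Taking $\eta\le K\delta/\lambda$ finishes the proof.

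For the paper's purposes there are also shortcuts. In \cref{lemma:mixture-norm} the class is binary, so $\pm\cF$ restricted to $Z_{1:m}$ is a finite set and its convex hull is closed. Thus $h$ already agrees on $Z_{1:m}$ with a finite convex combination, and no $\eta$ is needed. In \cref{lemma:real-transfer} the constant in $K$ is free, so your additive-$\eta$ fallback suffices.
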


\subsection{Log-partition functions and their derivatives}

\begin{lemma}[Derivatives of log-partition functions]\label{lemma:log-partition}
Let $\psi_1,\ldots,\psi_N:\mathbb R^n\to\mathbb R$ be Lipschitz, and define
\[
\Psi(\theta)=\log\sum_{i=1}^N \exp\left(\psi_i(\theta)\right), \qquad w_i(\theta)=\frac{\exp\left(\psi_i(\theta)\right)}{\sum_{j=1}^N \exp\left(\psi_j(\theta)\right)}.
\]
Then, for almost every $\theta\in\mathbb R^n$, every $r\le n$, and every $h\in\mathbb R^N$,
\begin{align*}
\partial_{\theta_r}\Psi(\theta) =\En_{i\sim w(\theta)}\partial_{\theta_r}\psi_i(\theta), \qquad
\partial_{\theta_r}\En_{i\sim w(\theta)}h_i =\cov_{i\sim w(\theta)}\left(h_i,\partial_{\theta_r}\psi_i(\theta)\right).
\end{align*}
In particular, if $\psi_i(\theta)=a_i+\langle\theta,z_i\rangle$ for some $a_i\in\mathbb R$ and $z_i\in\mathbb R^n$, then for every $\theta\in\mathbb R^n$,
\begin{align*}
\partial_{\theta_r}\Psi(\theta) =\En_{i\sim w(\theta)}(z_i)_r, \qquad
\partial_{\theta_r}\En_{i\sim w(\theta)}h_i =\cov_{i\sim w(\theta)}\left(h_i,(z_i)_r\right).
\end{align*}
\end{lemma}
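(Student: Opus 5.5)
The plan is a direct computation. The only point needing care is the regularity of the $\psi_i$. By Rademacher's theorem each Lipschitz $\psi_i$ is differentiable on a set $D_i\subseteq\bbR^n$ of full Lebesgue measure. Since $N$ is finite, the intersection $D=\bigcap_{i\le N}D_i$ still has full measure. I will work at an arbitrary fixed $\theta\in D$, where all $\psi_i$ are simultaneously differentiable. On that set every map involved is a composition of differentiable functions of $\theta$ with smooth functions of $(\psi_1,\ldots,\psi_N)$, so the ordinary chain rule applies.

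\textbf{First identity.} Write $Z(\theta)=\sum_{j}\exp(\psi_j(\theta))>0$. The map $(u_1,\ldots,u_N)\mapsto\log\sum_j e^{u_j}$ is smooth with partial derivatives equal to the softmax weights. The chain rule at $\theta\in D$ then gives
\[
\partial_{\theta_r}\Psi(\theta)=\frac{1}{Z(\theta)}\sum_i e^{\psi_i(\theta)}\partial_{\theta_r}\psi_i(\theta)=\En_{i\sim w(\theta)}\partial_{\theta_r}\psi_i(\theta).
\]

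\textbf{Second identity.} First differentiate a single weight. Since $\log w_i=\psi_i-\Psi$, we get
\[
\partial_{\theta_r}w_i=w_i\bigl(\partial_{\theta_r}\psi_i-\partial_{\theta_r}\Psi\bigr)=w_i\bigl(\partial_{\theta_r}\psi_i-\En_{j\sim w}\partial_{\theta_r}\psi_j\bigr),
\]
where the second equality uses the first identity. Now fix $h\in\bbR^N$; it does not depend on $\theta$. By linearity,
\[
\partial_{\theta_r}\sum_i w_ih_i=\sum_i w_ih_i\partial_{\theta_r}\psi_i-\Bigl(\sum_i w_ih_i\Bigr)\En_{w}\partial_{\theta_r}\psi=\cov_{i\sim w}\bigl(h_i,\partial_{\theta_r}\psi_i\bigr).
\]
This is exactly the claimed formula.

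\textbf{Linear case.} When $\psi_i(\theta)=a_i+\langle\theta,z_i\rangle$, each $\psi_i$ is affine, hence smooth everywhere. So $D=\bbR^n$ and both identities hold for every $\theta$. Substituting $\partial_{\theta_r}\psi_i=(z_i)_r$ gives the stated formulas.

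I expect no real obstacle. The one subtlety is to take the exceptional null set uniformly over all $r$ and all $h$. This works because the set $D$ depends only on the $\psi_i$, not on $r$ or $h$, so the "almost every $\theta$" quantifier can precede "every $r$, every $h$".
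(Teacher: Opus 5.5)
Your proof is correct and follows the paper's argument essentially line for line. You apply Rademacher's theorem to the finitely many $\psi_i$ simultaneously, use the chain rule for $\Psi$ and for $\partial_{\theta_r} w_i = w_i(\partial_{\theta_r}\psi_i - \En_{j\sim w}\partial_{\theta_r}\psi_j)$, then sum against $h$, and note that affine $\psi_i$ are differentiable everywhere. Your observation that the exceptional null set depends only on the $\psi_i$ (not on $r$ or $h$) makes explicit a point the paper leaves implicit.
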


\begin{proof}
By Rademacher's theorem, almost every $\theta$ is a point of differentiability of all $\psi_1,\ldots,\psi_N$.
At such $\theta$, the chain rule gives the first identity and
\[
\partial_{\theta_r}w_i(\theta) =w_i(\theta)\left(\partial_{\theta_r}\psi_i(\theta) -\En_{j\sim w(\theta)}\partial_{\theta_r}\psi_j(\theta)\right).
\]
Multiplying by $h_i$ and summing over $i$ gives the second identity.
In the linear case, each $\psi_i$ is differentiable everywhere with $\partial_{\theta_r}\psi_i(\theta)=(z_i)_r$, so substituting into the two identities gives the stated result.
\end{proof}

\section{Proofs for the binary case}\label{app:binary-proofs}

The binary proof has three ingredients: a surprise lemma using second moments, a reduction to exponential weights, and a regret bound for the exponential-weights reference.

\subsection{Surprise lemma}\label{app:surprise}

We first bound the cumulative expected values of predictable test functions by their historical second moments on a tangent sequence.
A norm comparison then replaces the tangent moments by their realized counterparts.

\subsubsection{Surprise lemma with tangent sequence}\label{app:surprise-tangent}

The following lemma follows from the change-of-measure argument of~\citet[Section~3.2, equations~(3)--(5)]{xie2022coverage}.
We restate the result in our notation, and provide a proof for completeness.

\begin{restatable}{lemma}{densitytransfer}\label{lemma:density-transfer}
For predictable test functions $h_t\colon\cX\to[0,1]$, we have
\begin{equation}\label{eq:density-transfer}
\En\sum_{t=1}^T h_t(X_t) \lesssim \sqrt{\frac{\log(2T)}\sigma \En\sum_{t=1}^T\sum_{s<t}h_t(\bar X_s)^2} +\frac{\log(2T)}\sigma.
\end{equation}
\end{restatable}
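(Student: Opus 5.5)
The plan is to follow the change-of-measure argument of \citet{xie2022coverage}. First take conditional expectations. Since $h_t$ is $\cH_{t-1}$-measurable and $X_t\mid\cH_{t-1}\sim p_t$, we have $\En h_t(X_t)=\En\,\En_{p_t}h_t=\En\,\En_{\mu}[h_t\rho_t]$, where $\rho_t=\dd p_t/\dd\mu\le1/\sigma$. Likewise, since $\bar X_s$ is drawn from $p_s$ independently given $\cH_{s-1}$, for $s<t$ we aim to write $\En h_t(\bar X_s)^2=\En\,\En_{p_s}h_t^2$. The point needing care is that $h_t$ depends on $\cH_{t-1}$ but not on $\bar X_s$. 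This holds because the tangent sequence is decoupled: $\bar X_s$ is independent of everything else given $\cH_{s-1}$. So the right-hand side becomes $\En\sum_t\En_\mu[h_t^2 P_{t-1}]$, where $P_{t-1}:=\sum_{s<t}\rho_s$ is the cumulative density. It is a predictable, nondecreasing function of $x$ for each $x$.

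The pointwise step splits the current density relative to the cumulative one. Let $\bar P_{t-1}:=P_{t-1}+1/\sigma$. By Cauchy--Schwarz,
\[
\En_\mu[h_t\rho_t]=\En_\mu\Big[h_t\sqrt{\bar P_{t-1}}\cdot\frac{\rho_t}{\sqrt{\bar P_{t-1}}}\Big]\le\sqrt{\En_\mu[h_t^2\bar P_{t-1}]}\,\sqrt{\En_\mu\Big[\frac{\rho_t^2}{\bar P_{t-1}}\Big]}.
\]
Summing over $t$ and applying Cauchy--Schwarz again, over $t$ and the outer expectation, gives
\[
\En\sum_t h_t(X_t)\le\sqrt{\En\sum_t\En_\mu[h_t^2\bar P_{t-1}]}\,\sqrt{\En\sum_t\En_\mu\Big[\frac{\rho_t^2}{\bar P_{t-1}}\Big]}.
\]
The first factor equals $\En\sum_t\sum_{s<t}h_t(\bar X_s)^2+\frac1\sigma\En\sum_t\En_\mu h_t^2$.

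The remaining work is the potential-type bound $\sum_t\rho_t^2/\bar P_{t-1}\lesssim\frac1\sigma\log(2T)$, pointwise in $x$. Since $\rho_t\le1/\sigma\le\bar P_{t-1}$, we have $\bar P_t\le2\bar P_{t-1}$. Hence $\rho_t^2/\bar P_{t-1}\le\frac1\sigma\cdot\rho_t/\bar P_{t-1}\le\frac2\sigma\cdot\rho_t/\bar P_t\le\frac2\sigma\log(\bar P_t/\bar P_{t-1})$. Telescoping gives $\frac2\sigma\log(\bar P_T\sigma)\le\frac2\sigma\log(1+T)$, using $P_T\le T/\sigma$. Integrating against $\mu$ preserves this bound. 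This makes the second factor at most $\sqrt{C\log(2T)/\sigma}$.

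The obstacle is the leftover term $\frac1\sigma\En\sum_t\En_\mu h_t^2$ from the $1/\sigma$ offset. It could be as large as $T/\sigma$, which would ruin the additive form. To handle it, I would not use the offset globally. Instead, split $\En_\mu[h_t\rho_t]$ on the set $\{P_{t-1}\ge1/\sigma\}$ and its complement. On the set where $P_{t-1}\ge1/\sigma$, run the Cauchy--Schwarz argument above with $\bar P_{t-1}$ replaced by $P_{t-1}$, since there $P_{t-1}\asymp\bar P_{t-1}$. On the complement, bound $h_t\le1$ and use $\sum_t\En_\mu[\rho_t\ind\{P_{t-1}<1/\sigma\}]$. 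Pointwise in $x$, the sum $\sum_t\rho_t\ind\{P_{t-1}<1/\sigma\}$ is at most $P_{\tau}$ at the crossing time $\tau$, which is at most $2/\sigma$. This contributes the additive $O(1/\sigma)\le O(\log(2T)/\sigma)$ term. Combining the two pieces yields \cref{eq:density-transfer}.
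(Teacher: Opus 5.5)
Your proposal is correct and, after the detour through the global offset $1/\sigma$, it becomes exactly the paper's argument. Writing $a_t=\sigma\rho_t$, your split on $\{P_{t-1}\ge 1/\sigma\}$ is the paper's burn-in split at $\tau(x)$: the burn-in part contributes at most $2/\sigma$, and the remaining part is handled by the same Cauchy--Schwarz with weight $\sum_{s<t}a_s(x)$. The only difference is that you prove the potential bound $\sum_t \rho_t^2/\bar P_{t-1}\lesssim \sigma^{-1}\log(2T)$ yourself by telescoping $\log \bar P_t$, whereas the paper cites \citet[Lemma~4]{xie2022coverage}.
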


\begin{proof}
We begin by letting
\[
a_t(x)=\sigma \frac{\mathrm dp_t}{\mathrm d\mu}(x)\in[0,1].
\]
Similar to~\cite{xie2022coverage}, for each $x \in \cX$ let
\[
\tau(x) := \min\left\{t\in[T] \colon \sum_{s \le t} a_s(x) \ge 1\right\}
\]
be the index of the first round in which the cumulative scaled density at $x$ reaches $1$, with $\tau(x)=T$ if the set is empty.
Intuitively, we can partition the surprise contribution $\En h_t(X_t)$ in each round into $x$'s that are still in the ``burn-in'' period ($t\le\tau(x)$) and those that have left the burn-in phase.
Observe that the burn-in contribution can be upper bounded as follows:
\begin{equation}
\frac1\sigma\int\sum_{t=1}^{\tau(x)} a_t(x)h_t(x) \dd\mu(x) \le \frac{1}{\sigma} + \frac1\sigma\int\sum_{t=1}^{\tau(x)-1} a_t(x)h_t(x) \dd\mu(x) \le \frac2\sigma, \label{eq:burn-in}
\end{equation}
where we used $|h_t| \le 1$.
It remains to bound the contribution of those covariates that have left the burn-in period.
From \citet[Lemma~4]{xie2022coverage}, we have with probability $1$:
\[
\sum_{t=1}^T\frac{a_t(x)}{1+\sum_{s<t}a_s(x)} \lesssim \log(2T).
\]
On the event $\{\sum_{s<t}a_s(x)\ge1\}$, we also have
\[
\frac{a_t(x)^2}{\sum_{s<t}a_s(x)} \le\frac{2a_t(x)}{1+\sum_{s<t}a_s(x)}.
\]
By Cauchy--Schwarz, the contribution after burn-in is bounded by
\begin{align}
&\frac1\sigma\En\int\sum_{t=\tau(x)+1}^T a_t(x)h_t(x)\dd\mu(x) \notag \\
&\quad \le \left(\frac1\sigma\En\int\sum_{t=\tau(x)+1}^T \frac{a_t(x)^2}{\sum_{s<t}a_s(x)}\dd\mu(x)\right)^{1/2} \left(\frac1\sigma\En\int\sum_{t=1}^T \left(\sum_{s<t}a_s(x)\right)h_t(x)^2\dd\mu(x)\right)^{1/2} \notag \\
&\quad \lesssim \sqrt{\frac{\log(2T)}\sigma \frac1\sigma\En\int\sum_{t=1}^T \left(\sum_{s<t}a_s(x)\right)h_t(x)^2\dd\mu(x)} \notag \\
&\quad = \sqrt{\frac{\log(2T)}\sigma \En\sum_{t=1}^T\sum_{s<t}\En_{p_s}h_t^2}. \notag
\end{align}
Combining the above with~\cref{eq:burn-in}, we obtain:
\[
\En\sum_{t=1}^T h_t(X_t) \lesssim \sqrt{\frac{\log(2T)}\sigma \En\sum_{t=1}^T\sum_{s<t}h_t(\bar X_s)^2} +\frac{\log(2T)}\sigma,
\]
as desired.
\end{proof}
For the same predictable functions $0\le h_t\le1$, \citet[Lemma~12]{block2024performance} gives, up to universal constants,
\[
\En\sum_{t=1}^T h_t(X_t) \lesssim \frac{\log(2T/\sigma)}\sigma +\log(2T/\sigma) \sqrt{\frac T\sigma \En\sum_{t=1}^T\frac1t\sum_{s<t}\En_{p_s}h_t}.
\]
Since $h_t^2\le h_t$ and $t\le T$,
\[
\En\sum_{t=1}^T\sum_{s<t}\En_{p_s}h_t^2 \le T\En\sum_{t=1}^T\frac1t\sum_{s<t}\En_{p_s}h_t.
\]
Thus~\eqref{eq:density-transfer} also implies this first-moment bound in~\cite{block2024performance}.

\subsubsection{Comparison inequality for signed convex hulls}

We next compare tangent and realized second moments uniformly for elements of signed convex hulls.

\begin{restatable}{lemma}{tangentcomparison}\label{lemma:mixture-norm}
Under the same assumptions as in~\cref{lemma:surprise}, we have
\[
\En\sup_{h\in\conv(\pm\cF)} \left\{\sum_{t\le T}\En_{\bar X_t} h^2(\bar X_t) - 12\sum_{{t\le T}} h(X_t)^2\right\} \lesssim d\log^3(2T/\sigma),
\]
\end{restatable}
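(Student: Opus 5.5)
We reduce the supremum over the infinite convex hull to a supremum over a finite class, and then handle that finite class with a martingale (Freedman) argument, using the coupling to i.i.d. samples from $\mu$ to control the class size.

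\textbf{Step 1: coupling and covering.} As in the proof of \cref{prop:cover}, set $n=\lceil\sigma^{-1}\log(2T^2)\rceil$ and couple $X_{1:T}$ with an i.i.d.\ array $Z_{1:T,1:n}\sim\mu$ so that $X_t\in\{Z_{t,1},\ldots,Z_{t,n}\}$ for all $t$, except on an event of probability at most $1/T$. On that bad event the supremum is at most $T$ (since $|h|\le1$), so it contributes $O(1)$.

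On the good event, all realized covariates lie in a set of $m=Tn$ points. By \cref{prop:classical-covers}(i), $\pm\cF$ projected onto these points has at most $\exp(d\log(Cm))$ patterns.

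\textbf{Step 2: Maurey sparsification.} Fix $h\in\conv(\pm\cF)$ and apply \cref{prop:maurey} with $K\asymp\log(2m)$. This gives $\tilde h=\frac1K\sum_i f_i$ with $|h-\tilde h|\le 1/4$ uniformly on the $m$ points. Constant-level accuracy suffices, because the two sides are compared only up to the factor $12$.

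There is a snag: the tangent term $\En_{\bar X_t}h^2(\bar X_t)$ integrates over all of $p_t$, not only over the $m$ points. I would handle this by also running Maurey with respect to $L^2(p_t)$, i.e.\ a random approximation that is accurate in expectation. Cleaner still is to bound the tangent term through the same coupling. Let $\bar Z$ be a second independent array coupled to $\bar X$, and use symmetry between $X$ and $\bar X$ given the history to pass to a double-sample form. The quantity to control becomes
\[
\sup_{\tilde h}\Bigl\{\sum_t \tilde h(\bar X_t)^2-c\sum_t\tilde h(X_t)^2\Bigr\},
\]
where both sequences now live on $2m$ coupled points and $\tilde h$ ranges over the sparse class
\[
\cH_K=\Bigl\{\tfrac1K\textstyle\sum_{i\le K} f_i : f_i\in\pm\cF\Bigr\}.
\]
On these points, $\log|\cH_K|\le K d\log(Cm)\lesssim d\log^2(2T/\sigma)$.

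\textbf{Step 3: martingale concentration for a fixed $\tilde h$.} For fixed $\tilde h\in\cH_K$, the differences $D_t=\En_{p_t}\tilde h^2-\tilde h(X_t)^2$ form a martingale difference sequence with $|D_t|\le1$ and conditional variance at most $\En_{p_t}\tilde h^4\le\En_{p_t}\tilde h^2$. Freedman's inequality then gives, with probability $1-\delta$,
\[
\sum_t\En_{p_t}\tilde h^2\le 2\sum_t\tilde h(X_t)^2+O(\log(1/\delta)),
\]
which is the usual self-bounding step.

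\textbf{Step 4: union bound and transfer back.} Union-bound over $\cH_K$ with $\delta=1/(T|\cH_K|)$, so the error term is $O(\log|\cH_K|+\log T)\lesssim d\log^2(2T/\sigma)$. Then return from $\tilde h$ to $h$ using $(a+b)^2\le 2a^2+2b^2$ twice; this is where the constant $12$ arises. Finally, integrate the tail to bound the expectation.

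\textbf{Main obstacle.} The sparse class $\cH_K$ must be fixed before $X_t$ is revealed, since Freedman's inequality needs predictable, data-independent test functions. The Maurey approximation, however, is chosen after seeing the points. The coupling fixes this: the union bound should be taken over functions defined on the pre-sampled array $Z$, which is independent of the adversary's selection among the $Z_{t,i}$. I expect that making this conditioning argument rigorous is where most of the work lies, and it also accounts for the third logarithm, since the $K\asymp\log$ Maurey terms multiply the $d\log$ pattern count.
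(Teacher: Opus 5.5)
There is a genuine gap in Step~2, and it carries into Step~4. The claim that Maurey accuracy $1/4$ suffices ``because the two sides are compared only up to the factor $12$'' is false. The $12$ is a multiplicative slack, whereas the approximation error is additive and does not shrink where $|h|$ is small. With $\pm1$-valued atoms the sampling error at $x$ has variance $(1-h(x)^2)/K$, which is largest exactly when $h(x)\approx0$. Moreover, any $\tilde h\in\cH_K$ takes values on a grid of mesh $2/K$, so it cannot resolve levels of $|h|$ below $1/K$.

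When you transfer back with $(a+b)^2\le2a^2+2b^2$, the remainder is $12\sum_t(h-\tilde h)^2(X_t)+2\sum_t\En_{\bar X_t}(h-\tilde h)^2(\bar X_t)$, which is only bounded by a constant times $T$. Concretely, for $f,f'\in\cF$ the function $h=\tfrac1{16}(f-f')\in\conv(\pm\cF)$ has $\|h\|_\infty\le1/8$. It is approximated to accuracy $1/4$ by $\tilde h\equiv0\in\cH_K$ ($K$ even), so your argument yields only the trivial bound for this $h$. Yet for such $h$ the lemma is precisely a uniform comparison of $\sum_t p_t(f\ne f')$ with $12\sum_t\ind\{f(X_t)\ne f'(X_t)\}$ over the disagreement class. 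Pushing the uniform accuracy down to $1/\sqrt T$ would require $K\asymp T\log m$, hence $\log|\cH_K|\asymp dT\log^2(2T/\sigma)$.

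The missing idea is to make the accuracy proportional to the level of $|h|$. The paper writes $h^2=\int_0^1 2\lambda\,\ind\{|h|\ge\lambda\}\,\dd\lambda$ and discards levels below $1/\sqrt T$, which cost at most $1$ in total. For each $\lambda\in[1/\sqrt T,1]$ it applies Maurey with accuracy $\lambda/4$, i.e.\ $K\asymp\lambda^{-2}\log T$, on the $2T$ points $X_{1:T},\bar X_{1:T}$. Your snag about the tangent expectation is handled simply by Jensen, moving $\sup_h$ inside $\En_{\bar X}$. Thresholding the sparse average at $3\lambda/4$ gives a binary class $\cF_\lambda$ with $\ind\{|h|\ge\lambda\}\le g\le\ind\{|h|\ge\lambda/2\}$ on these points and log-growth $\lesssim d\lambda^{-2}\log^2(2T/\sigma)$. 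The weight $2\lambda$ leaves $\lambda^{-1}$, so the third logarithm comes from $\int_{1/\sqrt T}^1\lambda^{-1}\dd\lambda$, not from $K$. The constant $12=4\cdot3$ comes from the rescaling $\lambda\mapsto\lambda/2$ and the constant $3$ in the binary comparison.

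The uniform binary comparison for the data-dependent class $\cF_\lambda$ is then taken from \citet[Theorem~10]{block2024performance}. Your proposed resolution of the Step~4 obstacle does not work as stated, for two reasons. First, the coupling selects $X_t$ among the $Z_{t,i}$ with acceptance probabilities that depend on them, so $Z$ is not independent of the selection. Second, functions sharing a pattern on $Z$ can have different tangent terms. If you want to keep your Freedman-plus-union-bound route, first make the class fixed by replacing each Maurey atom with its element of the cover from \cref{prop:cover}. This costs $O(Kd\log(2T/\sigma))$ bad points per level, as in \cref{lemma:real-transfer}.
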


\begin{proof}[Proof of~\cref{lemma:mixture-norm}]
In the proof, we first reduce the claim to comparison for binary function class using layer-cake identity for expectations and Maurey's empirical method (\cref{prop:maurey}).
Then, the proof is concluded by invoking~\citet[Theorem~10]{block2024performance}.
To start, using the layer-cake identity, we have
\[
\sum_{s=1}^T \En_{\bar X_s} h^2(\bar X_s) \le 1 + \int_{1/\sqrt{T}}^1 2\lambda \sum_{s=1}^T \P[|h(\bar X_s)|\ge \lambda]\dd \lambda.
\]
Also, note that we have:
\[
\sum_{s=1}^T h(X_s)^2 \ge \int_{1/\sqrt{T}}^1 \frac{\lambda}{2}\sum_{s=1}^T \ind\{|h(X_s)|\ge \lambda/2\}\dd \lambda.
\]
Thus,
\begin{align}
&\sum_{s=1}^T\En_{\bar X_s}h^2(\bar X_s) -12\sum_{s=1}^T h(X_s)^2 \notag \\
&\quad \le1+\int_{1/\sqrt T}^1 2\lambda\, \left\{\sum_{s=1}^T\P[|h(\bar X_s)|\ge\lambda] -3\sum_{s=1}^T\ind\{|h(X_s)|\ge\lambda/2\}\right\}\dd\lambda. \label{eq:layer-cake}
\end{align}
For $1/\sqrt T\le\lambda\le1$, take $K=\lceil32\lambda^{-2}\log(4T)\rceil$, and define
\[
\cF_\lambda=\left\{ x\mapsto\ind\left\{\left|\frac1K\sum_{i=1}^K f_i(x)\right| \ge\frac{3\lambda}{4}\right\}: f_1,\ldots,f_K\in\pm\cF\right\},
\]
to be the class obtained by averaging $K$ functions from $\cF$.
Applying~\cref{prop:maurey} to the $2T$ points $X_{1:T},\bar X_{1:T}$, for every $h\in\conv(\pm\cF)$ there exists $g\in\cF_\lambda$ such that
\begin{equation*}
\ind\{|h(x)|\ge\lambda\}\le g(x)\le\ind\{|h(x)|\ge\lambda/2\}
\end{equation*}
for every $x\in\{X_s,\bar X_s:s\le T\}$.
Set $m=\lceil2T\log(T)/\sigma\rceil$ and let $Z_{1:m}\sim\mu^m$.
Using \citet[Theorem~10]{block2024performance} and a bound on Wills functional using growth function of the class, we have:
\begin{align}
&\En\sup_{h\in\conv(\pm\cF)} \left\{\sum_{s=1}^T\P[|h(\bar X_s)|\ge\lambda] -3\sum_{s=1}^T\ind\{|h(X_s)|\ge\lambda/2\}\right\} \notag \\
&\quad \le\En \sup_{h\in\conv(\pm\cF)} \left\{\sum_{s=1}^T\ind\{|h(\bar X_s)|\ge\lambda\} -3\sum_{s=1}^T\ind\{|h(X_s)|\ge\lambda/2\}\right\} \notag \\
&\quad \le\En\sup_{g\in\cF_\lambda} \sum_{s=1}^T\left(g(\bar X_s)-3g(X_s)\right) \notag \\
&\quad \lesssim 1+\log\sup_{Z_{1:m}}\left|\left.\cF_\lambda\right|_{Z_{1:m}}\right|. \label{eq:growth-fn-bound}
\end{align}
We now bound the growth function of $\cF_{\lambda}$.
On any $m$ points $Z_1,\ldots,Z_m$, the Sauer--Shelah lemma (part~(i) of~\cref{prop:classical-covers}) gives
\[
\log \left|\left.\pm\cF\right\vert_{Z_{1:m}}\right| \le C d \log (Cm).
\]
Thus, via a simple counting argument, we have:
\[
\log \left|\left.\cF_\lambda\right\vert_{Z_{1:m}}\right| \le C K d \log (Cm).
\]
Thus, we have
\[
1+\log\sup_{Z_{1:m}}\left|\left.\cF_\lambda\right|_{Z_{1:m}}\right| \lesssim dK\log(2m) \lesssim d\lambda^{-2}\log^2(2T/\sigma).
\]
Plugging this into~\cref{eq:growth-fn-bound,eq:layer-cake} gives
\begin{align*}
&\En\sup_{h\in\conv(\pm\cF)} \left\{\sum_{s=1}^T\En_{\bar X_s}h^2(\bar X_s) -12\sum_{s=1}^T h(X_s)^2\right\} \\
&\quad \lesssim 1+\int_{1/\sqrt T}^1 2\lambda\left(d\lambda^{-2}\log^2(2T/\sigma)\right)\dd\lambda \lesssim d\log^3(2T/\sigma),
\end{align*}
as desired.
\end{proof}

\subsubsection{Proof of the surprise lemma}

It remains to combine the change-of-measure bound in~\cref{lemma:density-transfer} with the uniform norm comparison in~\cref{lemma:mixture-norm}.

\surpriselemma*

\begin{proof}[Proof of~\cref{lemma:surprise}]
If $\sigma T<1$, the right-hand side of~\eqref{eq:surprise} is at least a constant multiple of $T$, so the claim follows from $|h_t|\le1$.
Otherwise, apply \cref{lemma:density-transfer} to $|h_t|$ and the uniform bound in \cref{lemma:mixture-norm} to $h_t$ on each prefix $X_{1:t-1}$.
This gives
\begin{align*}
\En\sum_{t=1}^T|h_t(X_t)| &\lesssim \sqrt{\frac{\log(2T)}\sigma \left(\En\sum_{t=1}^T\sum_{s<t}h_t(X_s)^2 +dT\log^3(2T/\sigma)\right)} +\frac{\log(2T)}\sigma \\
&\lesssim \sqrt{\frac{\log(2T)}\sigma \En\sum_{t=1}^T\sum_{s<t}h_t(X_s)^2} +\sqrt{\frac{dT}\sigma}\log^2(2T/\sigma),
\end{align*}
where the last step uses $\sigma T\ge1$, $d\ge1$, and $\log(2T)\le\log(2T/\sigma)$.
\end{proof}

The proofs of~\cref{lemma:mixture-norm,lemma:surprise} use the VC dimension only to count the restrictions of $\pm\cF$ to finitely many points.
The same argument therefore applies to finite classes of real-valued functions, which we use in Appendix~\ref{app:real-proofs}.

\begin{corollary}\label[corollary]{cor:finite-surprise}
Let $\cF\subseteq[-1,1]^{\cX}$ be a nonempty finite class, and let $\{h_t\}_{t\in[T]}$ be a predictable sequence of functions with $h_t\in\conv(\pm\cF)$.
Then,
\[
\En\sum_{t=1}^T|h_t(X_t)| \lesssim \sqrt{\frac{\log(2T)}\sigma \En\sum_{t=1}^T\sum_{s<t}h_t(X_s)^2}+\sqrt{\frac{\log(2|\cF|)T}\sigma}\log^2\left(\frac{2T}{\sigma}\right).
\]
\end{corollary}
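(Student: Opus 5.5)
We retrace the proof of \cref{lemma:surprise} and change only the step where the VC dimension entered, which is the growth-function count in the proof of \cref{lemma:mixture-norm}.

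The first ingredient is \cref{lemma:density-transfer}. It holds for arbitrary predictable $h_t\colon\cX\to[0,1]$ with no complexity assumption. Applying it to $|h_t|$ bounds $\En\sum_t|h_t(X_t)|$ by the tangent second moments $\En\sum_t\sum_{s<t}h_t(\bar X_s)^2$, plus the term $\log(2T)/\sigma$.

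The second ingredient is a finite-class analogue of \cref{lemma:mixture-norm}, with $d\log^3(2T/\sigma)$ replaced by $\log(2|\cF|)\log^3(2T/\sigma)$. Its proof goes through verbatim up to \cref{eq:growth-fn-bound}, for the following reasons:
\begin{itemize}
\item the layer-cake reduction \cref{eq:layer-cake} does not use the class;
\item Maurey's empirical method (\cref{prop:maurey}) is stated for arbitrary $\cF\subseteq[-1,1]^{\cX}$, so it still produces the sandwiching classifiers $g\in\cF_\lambda$;
\item the invocation of \citet[Theorem~10]{block2024performance} only requires bounding $\log\sup_{Z_{1:m}}|\cF_\lambda|_{Z_{1:m}}|$.
\end{itemize}
For finite $\cF$ this count is immediate. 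Each $g\in\cF_\lambda$ is determined by a $K$-tuple from $\pm\cF$, so $|\cF_\lambda|\le(2|\cF|)^K$ regardless of the points. Hence
\[
\log|\cF_\lambda|\le K\log(2|\cF|)\lesssim \lambda^{-2}\log(4T)\log(2|\cF|).
\]
This is in fact better than the VC bound by a factor of $\log(2m)$. Integrating against $2\lambda\,\dd\lambda$ over $[1/\sqrt T,1]$ then gives $\log(2|\cF|)\log^2(2T/\sigma)$, which is at most the target after absorbing logarithmic factors.

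To conclude, we follow the proof of \cref{lemma:surprise}. If $\sigma T<1$, the bound is trivial since $|h_t|\le1$. Otherwise, we apply the comparison bound on each prefix $X_{1:t-1}$ to convert tangent second moments into realized ones, at additive cost $T\log(2|\cF|)\log^3(2T/\sigma)$. Substituting into the square root and using $\sigma T\ge1$ and $\log(2T)\le\log(2T/\sigma)$ yields the stated bound.

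The main step requiring care is checking that \citet[Theorem~10]{block2024performance} depends on the class only through this growth-function bound, and not through any VC-specific structure. Everything else is bookkeeping. If that theorem genuinely needed a VC class, a fallback is to note that $\cF_\lambda$ is finite with $\log|\cF_\lambda|\lesssim K\log(2|\cF|)$, and rerun the paper's own coupling and symmetrization argument with a union bound over $\cF_\lambda$ instead of Sauer--Shelah.
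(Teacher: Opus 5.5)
Your proposal is correct and takes the same route as the paper. The paper notes that the proofs of \cref{lemma:mixture-norm} and \cref{lemma:surprise} use the VC dimension only to count restrictions of $\pm\cF$ when bounding the growth function/Wills functional of $\cF_\lambda$ in the application of \citet[Theorem~10]{block2024performance}; replacing that count by $|\cF_\lambda|\le(2|\cF|)^K$, as you do, gives the corollary, and your bookkeeping of the logarithmic factors checks out.
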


\subsection{Comparing predictions on the past}\label{app:prediction}

We begin by proving that predictions of FTPL~\cref{eq:ftpl} and exponential weights~\cref{eq:ew} in round $t$ are close on past covariates $X_{s}$ with $s < t$.

\empiricalprediction*

\begin{proof}[Proof of~\cref{lemma:empirical-prediction}]
Fix the history $(X_{1:t-1},Y_{1:t-1})$ and let $G:=G^{(t)}$ to lighten the notational load.
Use the same potentials as in the proof sketch:
\begin{align*}
\Phi_t(G) &=\max_{f\in\cF}\sum_{s<t}(Y_s-G_s)f(X_s), \\
\widetilde\Phi_t(G) &=\log\sum_{f\in\cG} \exp\left(\sum_{s<t}(Y_s-G_s)f(X_s)\right) \\
&= \max_{w\in\Delta(\cG)}\left[\En_{f \sim w} \sum_{s<t}(Y_s-G_s)f(X_s) + H(w)\right].
\end{align*}
$w_t(\cdot\mid G)$ from~\eqref{eq:ew} maximize the latter potential in its variational form.
Note that, by absolute continuity of the Gaussians, the set of all functions that maximize the former or the latter potential share predictions on every past covariate $X_s$.
Thus, for every $s<t$ and almost every $G$, we may apply Danskin's theorem for the finite maximum \citep[Appendix~B]{bertsekas2016nonlinear}.
Combining it with~\cref{lemma:log-partition}, differentiating the two potentials gives:
\[
\partial_{G_s}\Phi_t(G)=-\hat f_t(X_s),\qquad \partial_{G_s}\widetilde\Phi_t(G)=-\tilde f_t(X_s).
\]
Both potentials are Lipschitz functions of $G$, so Stein's lemma (\cref{lemma:stein}) yields
\begin{align*}
\sum_{s<t} \left(\En_G\hat f_t(X_s)-\En_G\tilde f_t(X_s)\right)^2 &=\left\|\En_G\nabla_G (\Phi_t(G)-\widetilde\Phi_t(G))\right\|_2^2 =\left\|\En_G[G(\Phi_t(G)-\widetilde\Phi_t(G))]\right\|_2^2.
\end{align*}
Applying part~(i) of~\cref{lemma:gaussian-correlation} further gives:
\begin{align}
\sum_{s<t} \left(\En_G\hat f_t(X_s)-\En_G\tilde f_t(X_s)\right)^2 &\le\En_G(\Phi_t-\widetilde\Phi_t)^2. \label{eq:pot-diff-ub}
\end{align}
Now, let
\[
\Phi_t^\dagger(G) =\max_{f\in\cG}\sum_{s<t}(Y_s-G_s)f(X_s),
\]
be the unregularized potential over the cover $\cG$.
Since entropy is upper bounded by $\log |\cG|$, we have:
\begin{equation}\label{eq:ent-bound}
0\le\widetilde\Phi_t(G)-\Phi_t^\dagger(G) \le\log|\cG|\lesssim d\log(2T/\sigma).
\end{equation}
By~\cref{prop:cover}, with probability at least $1-1/T$ over the history,
\[
\sup_{f\in\cF}\inf_{g\in\cG} \sum_{s<t}\ind\{f(X_s)\ne g(X_s)\} \le C d\log(2T/\sigma).
\]
On the above event, using $Y_t \in \{\pm1\}$, we have:
\begin{align}
0\le\Phi_t(G)-\Phi_t^\dagger(G) &\le 2C d\log(2T/\sigma) \notag \\
&\qquad +\max_{\substack{f\in\cF,\ g\in\cG:\\
\sum_{s<t}\ind\{f(X_s)\ne g(X_s)\} \le C d\log(2T/\sigma)}} \left|\sum_{s<t}G_s(f(X_s)-g(X_s))\right|. \label{eq:ub-gaussian-max}
\end{align}
For each pair in the maximum, its Gaussian sum has variance
\[
\var_G\left( \sum_{s<t}G_s(f(X_s)-g(X_s))\right) =4\sum_{s<t}\ind\{f(X_s)\ne g(X_s)\} \le4C d\log(2T/\sigma).
\]
By~\cite{blumer1989learnability}, the difference class has VC dimension in $O(d)$.
Thus, by Sauer--Shelah lemma (\cref{prop:classical-covers}), the maximum in~\cref{eq:ub-gaussian-max} is taken over at most $\exp(C d \log (Ct))$ functions.
Applying Gaussian tail inequality with a union bound then gives:
\begin{align*}
&\En_G\left[ \max_{\substack{f\in\cF,\ g\in\cG:\\
\sum_{s<t}\ind\{f(X_s)\ne g(X_s)\} \le C d\log(2T/\sigma)}} \left|\sum_{s<t}G_s(f(X_s)-g(X_s))\right|^2\right] \\
&\quad \lesssim d\log(2T/\sigma)\cdot d\log(2t) \lesssim d^2\log^2(2T/\sigma).
\end{align*}
It follows that $\En_G(\Phi_t-\Phi_t^\dagger)^2\lesssim d^2\log^2(2T/\sigma)$.
Combining this with~\cref{eq:ent-bound}, we have:
\begin{align*}
\En_G(\Phi_t-\widetilde\Phi_t)^2 &\le\En_G(\Phi_t-\Phi_t^\dagger)^2 +\En_G(\widetilde\Phi_t-\Phi_t^\dagger)^2 \\
&\lesssim d^2\log^2(2T/\sigma).
\end{align*}
Noting that the failure event of~\cref{prop:cover} contributes at most $O(1)$ to the expected squared prediction discrepancy concludes the proof.
\end{proof}

\subsection{Proof of the reduction to exponential weights}\label{app:reduction}

We now apply the surprise lemma~\cref{lemma:surprise} to the difference of predictions.

\ewreduction*

\begin{proof}[Proof of~\cref{lemma:reduction}]
Apply~\cref{lemma:surprise} to the sequence
\[
h_t=\frac12\left(\En_{G^{(t)}}\hat f_t-\En_{G^{(t)}}\tilde f_t\right),\qquad t\in[T].
\]
Note that both $\En_{G^{(t)}}\hat f_t$ and $\En_{G^{(t)}}\tilde f_t$ are elements of $\conv(\pm \cF)$ and are predictable.
Thus, using~\cref{lemma:surprise} gives
\begin{align*}
\En\sum_{t=1}^T \left|\En_{G^{(t)}}\hat f_t(X_t)-\En_{G^{(t)}}\tilde f_t(X_t)\right| &\lesssim \left(\frac{\log(2T)}\sigma \En\sum_{t=1}^T\sum_{s<t} \left(\En_{G^{(t)}}\hat f_t(X_s)-\En_{G^{(t)}}\tilde f_t(X_s)\right)^2 \right)^{1/2} \\
&\qquad +\sqrt{\frac{dT}\sigma}\log^2(2T/\sigma).
\end{align*}
By~\cref{lemma:empirical-prediction}, the expected double sum is in $O(d^2T\log^2(2T/\sigma))$.
Substituting this bound and using $\log(2T)\le\log(2T/\sigma)$ gives
\[
\En\sum_{t=1}^T \left|\En_{G^{(t)}}\hat f_t(X_t)-\En_{G^{(t)}}\tilde f_t(X_t)\right| \lesssim {d\log^2(2T/\sigma)\sqrt{\frac T\sigma}}.
\]
This proves~\eqref{eq:regret-reduction}.
\end{proof}

\subsection{Regret of Gaussian-perturbed exponential weights}\label{app:ew-proof}

We first prove~\cref{prop:ew-variance}, which reduces the regret bound to controlling cumulative prediction variance.
We then prove~\cref{lemma:historical-variance}, which bounds the variances on the historical covariates, and combine it with the surprise lemma to prove~\cref{lemma:ew}.

\ewvariance*

\Cref{prop:ew-variance} is the special case of the following proposition in which $\cG\subseteq\cF$ consists of binary functions and $\ell(\hat y,y)=-y\hat y$, which is linear, and hence convex and $1$-Lipschitz, in its first argument.
We use the general form in Appendix~\ref{app:real-reference-regret}.

\begin{proposition}\label{prop:ew-variance-real}
Let $\cG$ be a nonempty finite class of functions $\cX\to[-1,1]$, and let $\ell(\hat y,y)$ be convex and $1$-Lipschitz in its first argument.
For every data sequence, the predictor $\tilde f_t$ defined in~\eqref{eq:ew} satisfies the following inequality,
\[
\sum_{t=1}^T\En_{G^{(t)}} \ell(\tilde f_t(X_t),Y_t) -\min_{f\in\cG}\sum_{t=1}^T\ell(f(X_t),Y_t) \lesssim \log|\cG|+ \sum_{t=1}^T\En_{G^{(t)}} \var_{f\sim w_t(\cdot\mid G^{(t)})}(f(X_t)).
\]
\end{proposition}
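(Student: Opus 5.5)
The plan is to run the standard second-order (mixability-gap) analysis of exponential weights with learning rate $\eta=1$, treating the Gaussian perturbations as part of the losses. The unboundedness of those perturbations is handled by the fact that the perturbation entering round $t$ is independent of the weights used in round $t$.

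\textbf{Step 1: replace fresh noise by one shared Gaussian vector.} Fix the data sequence. The $t$-th summand on the left-hand side and the $t$-th variance term on the right-hand side each depend only on the law of $G^{(t)}\sim\cN(0,I_{t-1})$. So I may draw a single $G=(G_1,\ldots,G_T)\sim\cN(0,I_T)$ and use its prefix $G_{1:t-1}$ in round $t$ without changing any expectation. With the perturbed losses
\[
L_s(f):=\ell(f(X_s),Y_s)+G_sf(X_s),
\]
we get $w_t(f)\propto\exp(-\sum_{s<t}L_s(f))$. This is ordinary exponential weights with $\eta=1$ on the losses $L_s$, and $w_t$ is a function of $G_{1:t-1}$ only, so it is independent of $G_t$.

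\textbf{Step 2: one-round bound and telescoping.} By convexity of $\ell$ in its first argument (Jensen), $\ell(\tilde f_t(X_t),Y_t)\le\En_{f\sim w_t}\ell(f(X_t),Y_t)$. Moreover, $\En_{f\sim w_t}L_t(f)$ differs from $\En_{f\sim w_t}\ell(f(X_t),Y_t)$ only by $G_t\,\En_{w_t}f(X_t)$, which has zero mean by independence. Next apply \cref{prop:ew-variance-general} with $\eta=1$ to the loss $L_t$ under $w_t$. Since $\ell$ is $1$-Lipschitz on $[-1,1]$, its range is at most $2$, so $L_t$ has range $s_t\le 2+2|G_t|$. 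This gives
\[
\En_{w_t}L_t+\log\En_{w_t}e^{-L_t}\le \tfrac12 e^{s_t}\var_{w_t}(L_t).
\]
Summing over $t$, the log-partition terms telescope to
\[
-\log\tfrac{1}{|\cG|}\sum_{f\in\cG} e^{-\sum_tL_t(f)}\le\log|\cG|+\min_{f\in\cG}\sum_tL_t(f).
\]
Taking expectation and using $\En\min\le\min\En$ together with $\En G_t=0$ gives $\En\min_f\sum_tL_t(f)\le\min_f\sum_t\ell(f(X_t),Y_t)$.

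\textbf{Step 3: control the variance term (main obstacle).} The difficulty is that the coefficient $e^{s_t}$ is exponential in the unbounded $|G_t|$. First bound
\[
\var_{w_t}(L_t)\le 2\var_{w_t}\ell(f(X_t),Y_t)+2G_t^2\var_{w_t}f(X_t)\le 2(1+G_t^2)\var_{w_t}f(X_t).
\]
The second inequality uses that a $1$-Lipschitz map does not increase variance, since $\var\phi(Z)\le\En(\phi(Z)-\phi(\En Z))^2\le\var Z$. Now condition on $G_{1:t-1}$: by independence of $G_t$ from $w_t$,
\[
\En_{G_t}\!\left[e^{2+2|G_t|}(1+G_t^2)\right]\var_{w_t}f(X_t)\lesssim\var_{w_t}f(X_t),
\]
because the Gaussian expectation is a universal constant. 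Undoing the coupling of Step 1 identifies $\En\,\var_{w_t}f(X_t)$ with $\En_{G^{(t)}}\var_{f\sim w_t(\cdot\mid G^{(t)})}(f(X_t))$. Combining Steps 2 and 3 yields the claim.
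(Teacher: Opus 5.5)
Your proposal is correct and follows essentially the same route as the paper. Both use the one-round Bernstein bound of \cref{prop:ew-variance-general} at learning rate $1$, independence of the newest Gaussian coordinate from the current weights to turn the exponential coefficient into a universal constant, telescoping of the log-partition potentials, and Jensen for the comparator term. The differences are cosmetic: you couple all rounds through one Gaussian vector so the telescoping is pathwise, whereas the paper telescopes in expectation using that $G^{(t+1)}_{1:t-1}$ has the law of $G^{(t)}$. You also bound the coefficient by $\tfrac12 e^{s_t}$ with $\var(A+B)\le 2\var A+2\var B$, where the paper cancels $s_t^2$ against the $(1+|G_t|)^2$ Lipschitz factor.
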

\begin{proof}[Proof of~\cref{prop:ew-variance-real}]
Recall the log-partition potential from the proof of~\cref{lemma:empirical-prediction}, written for a general loss for each $t$:
\[
\widetilde\Phi_t(G^{(t)}) =\log\sum_{f\in\cG} \exp\left(-\sum_{s<t}\left[\ell(f(X_s),Y_s)+G_s^{(t)}f(X_s)\right]\right),
\]
For a fixed round $t$, we have the following identity:
\begin{align*}
\widetilde\Phi_{t+1}(G^{(t+1)}) -\widetilde\Phi_t(G_{1:t-1}^{(t+1)}) &=\log\sum_{f\in\cG} w_t(f\mid G_{1:t-1}^{(t+1)}) \exp\left(-\ell(f(X_t),Y_t)-G_t^{(t+1)}f(X_t)\right).
\end{align*}
Apply~\cref{prop:ew-variance-general} to these weights and the losses $\ell(f(X_t),Y_t)+G_t^{(t+1)}f(X_t)$ at learning rate $1$.
Their range is at most $s_t:=2(1+|G_t^{(t+1)}|)$, and
\begin{align*}
\var_{f\sim w_t(\cdot\mid G_{1:t-1}^{(t+1)})}(\ell(f(X_t),Y_t)+G_t^{(t+1)}f(X_t)) &\le (1+|G_t^{(t+1)}|)^2\var_{f\sim w_t(\cdot\mid G_{1:t-1}^{(t+1)})}(f(X_t)),
\end{align*}
since $y\mapsto\ell(y,Y_t)+G_t^{(t+1)}y$ is $(1+|G_t^{(t+1)}|)$-Lipschitz on $[-1,1]$.
Thus, cancelling $s_{t}^2$ from numerator and the denominator in~\cref{prop:ew-variance-general}, we have
\begin{align*}
&\sum_{f\in\cG}w_t(f\mid G_{1:t-1}^{(t+1)}) [\ell(f(X_t),Y_t)+G_t^{(t+1)}f(X_t)] \\
&\quad \le\widetilde\Phi_t(G_{1:t-1}^{(t+1)}) -\widetilde\Phi_{t+1}(G^{(t+1)}) \\*
&\qquad + \frac{\exp\left(2(1+|G_t^{(t+1)}|)\right)-2(1+|G_t^{(t+1)}|)-1}{4} \var_{f\sim w_t(\cdot\mid G_{1:t-1}^{(t+1)})}(f(X_t)).
\end{align*}
This inequality also holds when the loss range is zero.
Note that the last coordinate $G_t^{(t+1)}$ is centered and independent of the first $t-1$ coordinates.
In particular,
\begin{align*}
&\En_{G_t^{(t+1)}} \frac{\exp\left(2(1+|G_t^{(t+1)}|)\right)-2(1+|G_t^{(t+1)}|)-1}{4} \\*
&\quad \le\frac{\exp(2)}{4}\En_{G_t^{(t+1)}} \left(\exp\left(2G_t^{(t+1)}\right)+\exp\left(-2G_t^{(t+1)}\right)\right) =\frac{\exp(4)}{2}.
\end{align*}
Now, $G^{(t+1)}_{1:t-1}$ has the same law as $G^{(t)}$ conditionally on the history up to round $t.$ Thus, using convexity of the loss,
\begin{align}
\En_{G^{(t)}}\ell(\tilde f_t(X_t),Y_t) &\le\En_{G^{(t)}}\widetilde\Phi_t(G^{(t)}) -\En_{G^{(t+1)}}\widetilde\Phi_{t+1}(G^{(t+1)}) +\frac{\exp(4)}{2}\En_{G^{(t)}} \var_{f\sim w_t(\cdot\mid G^{(t)})}(f(X_t)) . \label{eq:round-variance-bound}
\end{align}
Note that we have $\widetilde\Phi_1=\log|\cG|$, and
\begin{align*}
\En_{G^{(T+1)}}\widetilde\Phi_{T+1}(G^{(T+1)}) &\ge\max_{f\in\cG}\En_{G^{(T+1)}} \left[-\sum_{s=1}^T\left(\ell(f(X_s),Y_s)+G_s^{(T+1)}f(X_s)\right)\right] \\
&=-\min_{f\in\cG}\sum_{s=1}^T\ell(f(X_s),Y_s),
\end{align*}
by Jensen's inequality.
Combining this with~\cref{eq:round-variance-bound} and telescoping gives the desired result.
\end{proof}

\historicalvariance*

\begin{proof}[Proof of~\cref{lemma:historical-variance}]
Write $G=G^{(t)}$ throughout this proof for notational brevity.
From~\cref{lemma:log-partition}, we have
\begin{align*}
\partial_{G_s}\tilde f_t(X_s) &=-\var_{f\sim w_t(\cdot\mid G)}(f(X_s)).
\end{align*}
Applying~\cref{lemma:stein} therefore gives:
\[
\En_G\var_{f\sim w_t(\cdot\mid G)}(f(X_s)) =-\En_G\left[G_s\tilde f_t(X_s)\right].
\]
To bound all these correlations together, we use part~(ii) of~\cref{lemma:gaussian-correlation}, which gives:
\begin{align*}
\sum_{s < t} \left(\En_G\var_{f\sim w_t(\cdot\mid G)}(f(X_s))\right)^2 &= \sum_{s < t} \left(\En_G\left[G_s\tilde f_t(X_s)\right]\right)^2 \lesssim \log 2|\cG|,
\end{align*}
thus, concluding the proof.
\end{proof}

\ewregret*

\begin{proof}[Proof of~\cref{lemma:ew}]
Apply~\cref{prop:ew-variance} to the cover from~\cref{prop:cover}.
The difference in the comparator terms over $\cG$ and $\cF$ is upper bounded as:
\begin{align*}
0 &\le\min_{g\in\cG}\sum_{t=1}^T\ell(g(X_t),Y_t) -\inf_{f\in\cF}\sum_{t=1}^T\ell(f(X_t),Y_t) \\
&\le2\sup_{f\in\cF}\inf_{g\in\cG} \sum_{t=1}^T\ind\{f(X_t)\ne g(X_t)\}.
\end{align*}
By~\cref{prop:cover}, the expectation of this difference is at most $Cd\log(2T/\sigma)+2$.
Thus, applying this estimate to~\eqref{eq:ew-variance-regret}, we have:
\begin{equation}
\En \ewreg_T \lesssim \log|\cG|+ \En \sum_{t=1}^T\En_{G^{(t)}} \var_{f\sim w_t(\cdot\mid G^{(t)})}(f(X_t)) + Cd\log(2T/\sigma)+2. \label{eq:ew-reg-via-variance}
\end{equation}
The above reduces the proof to bounding the cumulative prediction variance.
We know from~\cref{lemma:historical-variance} that variance of $f \sim w_t$ on the past covariates $X_s$ with $s < t$ can be suitably upper bounded.
To upper bound the above, we will show that variance terms can be expressed as predictable test functions of a VC class, and apply the surprise lemma~\cref{lemma:surprise}.
Indeed, for every $t$ and $x\in\cX$, we have

\[
\frac12\En_{G^{(t)}}\var_{f\sim w_t(\cdot\mid G^{(t)})}(f(x)) =\En_{G^{(t)}}\sum_{f,g\in\cG}w_t(f\mid G^{(t)})w_t(g\mid G^{(t)}) \ind\{f(x)\ne g(x)\}.
\]
This is a predictable test-function that belongs to the convex hull of the disagreement class:
\[
\left\{x\mapsto\ind\{f(x)\ne g(x)\}:f,g\in{\cF}\right\}.
\]
From~\cite{blumer1989learnability}, its VC dimension is in $O(d)$.
Thus, using~\cref{lemma:historical-variance} and applying~\cref{lemma:surprise} to the sequence of variances gives
\begin{align*}
\sum_{t=1}^T\En \var_{f\sim w_t(\cdot\mid G^{(t)})}(f(X_t)) &\lesssim \sqrt{\frac{T\log|\cG|\log(2T)}\sigma} +\sqrt{\frac{(d+1)T}\sigma}\log^2(2T/\sigma).
\end{align*}
Plugging this into~\cref{eq:ew-reg-via-variance} yields:
\[
\En \ewreg_T \lesssim \log|\cG|+ \sqrt{\frac{T\log|\cG|\log(2T)}\sigma} +\sqrt{\frac{(d+1)T}\sigma}\log^2(2T/\sigma) + d\log(2T/\sigma).
\]
To obtain~\eqref{eq:ew-rate}, we use~\cref{prop:cover} and note that $\ewreg_T\le2T$ with probability $1,$ and thus we can absorb the last term by $\sqrt{T \cdot d\log(2T/\sigma)}$.
This concludes the proof.
\end{proof}

\Cref{thm:main} follows by combining~\cref{lemma:reduction} and~\cref{lemma:ew}.

\section{Proofs for the real-valued loss case}\label{app:real-proofs}

The real-valued proof has the same three ingredients as the binary proof: a surprise lemma, a reduction to an exponential-weights reference algorithm that runs over a finite cover, and a regret bound for exponential weights.

First, we change slightly the way the reference exponential algorithm is defined.
Let $\cG$ be the cover from~\cref{prop:fat-cover} and, for each $f$, let $g_{\pi(f)}$ be the element of the cover that corresponds to $f$.
One can consider a partition of $\cF$ into cells where each cell contains all functions $f$ that share a corresponding element $g_{\pi(f)}$.
Within each cell, we consider clipping of functions towards their corresponding cover center; in particular, for a fixed valued $\eps < 1/16$, and for $f\in\cF$, define
\[
\bar f(x)=g_{\pi(f)}(x) +\max\{-2\eps,\min\{2\eps,f(x)-g_{\pi(f)}(x)\}\}.
\]
We then modify exponential weights by assigning each $g\in\cG$ a weight determined by the smallest loss attained by a clipped function in its cell.
Formally, we let
\begin{equation}
\Phi_{t,g}(G)=\sup_{\substack{f\in\cF:g_{\pi(f)}=g}} \left\{-\sum_{s<t}\left[\ell(\bar f(X_s),Y_s)+G_s\bar f(X_s)\right]\right\}, \qquad \widetilde\Phi_t(G)=\log\sum_{g\in\cG}\exp\left(\Phi_{t,g}(G)\right). \label{eq:phi-per-cell}
\end{equation}
Then, define:
\begin{equation}\label{eq:ew-real-valued}
w_t(g\mid G)=\exp\left(\Phi_{t,g}(G)-\widetilde\Phi_t(G)\right), \qquad \tilde f_t=\sum_{g\in\cG}w_t(g\mid G)\,g.
\end{equation}
Intuitively, computing the averaging weights using clipped functions as opposed to the centers $g$ accomplishes two goals.
First, by selecting the minimizer within each cell, we improve the comparison between exponential weights and FTPL.
Second, by introducing clipping, we ensure that the cell potentials are not too far from the perturbed scores of the cover centers $\cG$.
Formally, we establish the following guarantee for the clipping we use.
\begin{lemma}\label{lemma:clipped-class}
Let $\cG$ and the chosen cover elements $g_{\pi(f)}$ be as in~\cref{prop:fat-cover}, and define $\bar f$ for $f\in\cF$ as above.
\begin{enumerate}
\item[(i)] For every $f\in\cF$ and $x\in\cX$, we have $\bar f(x)\in[-1,1]$, $|\bar f(x)-g_{\pi(f)}(x)|\le2\eps$, and
\[
|\bar f(x)-f(x)|\le2\ind\{|f(x)-g_{\pi(f)}(x)|>2\eps\}.
\]
\item[(ii)] With probability at least $1-1/T$,
\[
\sup_{f\in\cF}\sum_{s=1}^T|f(X_s)-\bar f(X_s)| \lesssim\fat_{c\eps}(\cF)\log^2\frac{CT}{\sigma\eps}.
\]
\end{enumerate}
\end{lemma}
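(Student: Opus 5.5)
The plan is to derive part~(i) pointwise from the definition of the clipping, and then to obtain part~(ii) by plugging part~(i) into the high-probability count~\eqref{eq:real-cover-count} of \cref{prop:fat-cover}.

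For part~(i), fix $f\in\cF$ and $x\in\cX$, and write $g=g_{\pi(f)}$ and $\Delta=f(x)-g(x)$. By construction $\bar f(x)=g(x)+\mathrm{clip}_{[-2\eps,2\eps]}(\Delta)$, so $|\bar f(x)-g(x)|\le 2\eps$ is immediate.

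For the range claim, note that the clipped increment $\mathrm{clip}(\Delta)$ lies between $0$ and $\Delta$. Hence $\bar f(x)$ is a convex combination of $g(x)$ and $g(x)+\Delta=f(x)$. Both endpoints lie in $[-1,1]$, so $\bar f(x)\in[-1,1]$ by convexity of the interval.

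For the last inequality in part~(i), consider two cases.
\begin{itemize}
\item If $|\Delta|\le 2\eps$, the clip is inactive and $\bar f(x)=f(x)$, so the left side is $0$.
\item If $|\Delta|>2\eps$, then both $\bar f(x)$ and $f(x)$ lie in $[-1,1]$, so $|\bar f(x)-f(x)|\le 2$.
\end{itemize}
Together these give $|\bar f(x)-f(x)|\le 2\ind\{|f(x)-g(x)|>2\eps\}$.

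For part~(ii), apply the pointwise bound from part~(i) at each $X_s$ and sum over $s\le T$:
\[
\sup_{f\in\cF}\sum_{s=1}^T|f(X_s)-\bar f(X_s)|\le 2\sup_{f\in\cF}\sum_{s=1}^T\ind\{|f(X_s)-g_{\pi(f)}(X_s)|>2\eps\}.
\]
By~\eqref{eq:real-cover-count} in \cref{prop:fat-cover}, the right-hand side is $\lesssim \fat_{c\eps}(\cF)\log^2\frac{CT}{\sigma\eps}$ with probability at least $1-1/T$. Since the bound of part~(i) holds deterministically, the probability statement transfers unchanged.

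There is no real obstacle in this argument. The only point needing care is that the clipping is centered at the same $g_{\pi(f)}$ used in \cref{prop:fat-cover}, so the indicator events in part~(i) match exactly those counted in~\eqref{eq:real-cover-count}.
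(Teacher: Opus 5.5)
Your proof is correct and follows essentially the same route as the paper: part~(i) rests on $\bar f(x)$ lying between $g_{\pi(f)}(x)$ and $f(x)$, hence in $[-1,1]$ and equal to $f(x)$ whenever the clip is inactive, and part~(ii) follows by summing the pointwise bound over $s\le T$ and invoking~\eqref{eq:real-cover-count}. The only cosmetic difference is that the paper bounds $|\bar f(x)-f(x)|\le|f(x)-g_{\pi(f)}(x)|\le 2$ directly, whereas you split into the cases $|\Delta|\le 2\eps$ and $|\Delta|>2\eps$.
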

\begin{proof}
For part~(i), $\bar f-g_{\pi(f)}$ is $f-g_{\pi(f)}$ clipped to $[-2\eps,2\eps]$.
Hence $\bar f(x)$ lies between $g_{\pi(f)}(x)$ and $f(x)$, so it belongs to $[-1,1]$ and satisfies $|\bar f(x)-f(x)|\le|f(x)-g_{\pi(f)}(x)|\le2$.
It equals $f(x)$ whenever $|f(x)-g_{\pi(f)}(x)|\le2\eps$.
For part~(ii), part~(i) gives $|f(X_s)-\bar f(X_s)|\le2\ind\{|f(X_s)-g_{\pi(f)}(X_s)|>2\eps\}$.
Summing over $s\le T$ and applying~\eqref{eq:real-cover-count} proves the claim.
\end{proof}

\subsection{Finite-scale surprise bounds}\label{app:real-valued}

We begin by adapting our surprise bound argument for real-valued classes.

\begin{lemma}\label{lemma:real-transfer}
Let $0<\eps\le1/16$, and let $\{h_t\}_{t\in[T]}$ be a predictable sequence of functions, each of which is a convex combination or a probability mixture of elements of $\pm\cF$.
Then
\begin{align*}
\En\sum_{t=1}^T|h_t(X_t)| &\lesssim T\eps\sqrt{\frac{\log(2T)}{\sigma}}+\sqrt{\frac{\log(2T)}{\sigma} \En\sum_{t=1}^T\sum_{s<t}h_t(X_s)^2} +\sqrt{\frac{\left(1+\fat_{c\eps}(\cF)\right)T}{\sigma}} \log^{5/2}\frac{CT}{\sigma\eps}.
\end{align*}
\end{lemma}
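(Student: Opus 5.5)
We follow the binary proof of \cref{lemma:surprise}: apply \cref{lemma:density-transfer} to $|h_t|$, then replace tangent second moments by realized ones through a real-valued version of \cref{lemma:mixture-norm}. The only new ingredient is that $\pm\cF$ is no longer finite on finite point sets. We therefore discretize $\cF$ at scale $\eps$ using the cover of \cref{prop:fat-cover}, which has $\log|\cG|\lesssim \fat_{c\eps}(\cF)\log^2(CT/\sigma\eps)$. We pay for the discretization error separately, and that error is the source of the $T\eps\sqrt{\log(2T)/\sigma}$ term.

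\textbf{Step 1 (transfer).} Apply \cref{lemma:density-transfer} to $|h_t|\in[0,1]$. This gives
\[
\En\sum_t|h_t(X_t)|\lesssim \sqrt{\tfrac{\log(2T)}{\sigma}\,\En\sum_t\sum_{s<t}\En_{p_s}h_t^2}+\tfrac{\log(2T)}{\sigma}.
\]
As in the proof of \cref{lemma:surprise}, we may assume $\sigma T\ge1$ and absorb the last term into the final term of the claim. It then suffices to bound $\En\sum_t\sum_{s<t}\En_{p_s}h_t^2$ by
\[
12\,\En\sum_t\sum_{s<t}h_t(X_s)^2 + O(T^2\eps^2) + O\!\left(T(1+\fat_{c\eps}(\cF))\log^{4}\tfrac{CT}{\sigma\eps}\right).
\]
Taking square roots and using $\sqrt{a+b+c}\le\sqrt a+\sqrt b+\sqrt c$ gives the three terms of the claim. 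Note that the $\log^{5/2}$ comes from $\sqrt{\log(2T)\cdot\log^4}$.

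\textbf{Step 2 (comparison inequality at scale $\eps$).} The aim is a uniform bound
\[
\En\sup_{h}\Bigl\{\sum_{s\le T}\En_{p_s}h^2-12\sum_{s\le T}h(X_s)^2\Bigr\}\lesssim T\eps^2\cdot(\text{const}) + (1+\fat_{c\eps})\log^4\tfrac{CT}{\sigma\eps},
\]
where the supremum ranges over mixtures $h$ of $\pm\cF$; we apply it on each prefix, as in the binary case. The argument follows \cref{lemma:mixture-norm}. First use the layer-cake representation, with thresholds $\lambda\in[\max(1/\sqrt T, 8\eps),1]$; thresholds below $8\eps$ contribute at most $O(T\eps^2)$ per prefix, hence $O(T^2\eps^2)$ in total. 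For $\lambda\ge 8\eps$, apply Maurey (\cref{prop:maurey}) on the $2T$ points $X_{1:T},\bar X_{1:T}$ to approximate $h$ by an average of $K\asymp\lambda^{-2}\log T$ functions $f_i\in\pm\cF$. Then replace each $f_i$ by its clipped version, which is within $2\eps$ of the cover center $g_{\pi(f_i)}$ except on the exceptional set controlled by \cref{lemma:clipped-class}(ii); this gives an average $\frac1K\sum_i \pm g_i$ with $g_i\in\cG$.

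Because $\lambda\ge8\eps$, the sandwich
\[
\ind\{|h|\ge\lambda\}\le \ind\{|\tfrac1K\textstyle\sum \pm g_i|\ge 3\lambda/4\}\le \ind\{|h|\ge\lambda/2\}
\]
still holds, up to points where some $f_i$ is far from its center. The class of such indicators has log-cardinality at most $K\log(2|\cG|)$, which is independent of the sample. \citet[Theorem~10]{block2024performance} then bounds the corresponding term by $\lambda^{-2}\fat_{c\eps}\log^3$, and integrating $2\lambda\cdot\lambda^{-2}$ over $\lambda$ yields one more logarithm.

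\textbf{Main obstacle.} The hardest step is the exceptional points where $|f_i-g_{\pi(f_i)}|>2\eps$. \Cref{lemma:clipped-class}(ii) controls their number only along the realized trajectory $X_{1:T}$, and only with high probability. It does not directly control them on the tangent sample $\bar X$, and a union over $K$ functions would cost a factor of $K$.

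My first attempt avoids Maurey on $\cF$ altogether. Since \cref{prop:fat-cover} is built from an $\eps$-cover on i.i.d.\ $\mu$-samples coupled to the process, I would rerun its coupling argument for the pair $(X_{1:T},\bar X_{1:T})$ jointly; both sequences are $\sigma$-smooth, so the same coupling applies. On the good event, every $f$ is uniformly within $2\eps$ of some cover element except on $O(\fat_{c\eps}\log^2)$ points among all $2T$ points. The mixture $h$ is then within $2\eps$ of a mixture of $\pm\cG$ off a set whose $h$-weighted mass is small, and we apply \cref{cor:finite-surprise}'s comparison inequality to the finite class $\cG$. The leftover mass from the exceptional points costs an additive $O(\fat_{c\eps}\log^2)$ per prefix, which fits in the last term. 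The failure events have probability $O(1/T)$ and contribute $O(1)$.
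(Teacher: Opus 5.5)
Your proof is correct, but the route you settle on differs from the paper's. The paper keeps the structure of your Step~2, and the obstacle you flag does not arise there. For each $\lambda\in[2\eps,1]$ it takes a separate cover $\cG_\lambda$ from \cref{prop:fat-cover} at scale $\lambda/16$. It applies that proposition to $X_{1:T}$ and to $\bar X_{1:T}$ separately: the cover is fixed, its guarantee holds for every $\sigma$-smooth process, and the tangent sequence is one, so a union bound replaces any joint coupling. It then simply pays the union over the $K$ Maurey summands. The resulting $\lesssim K\fat_{c\lambda}(\cF)\log^2\frac{CT}{\sigma\lambda}\lesssim\lambda^{-2}\fat_{c\eps}(\cF)\log^3\frac{CT}{\sigma\eps}$ bad points are of the same order as $\log|\cF_\lambda|$, and they integrate against $2\lambda\,\dd\lambda$ to the same $\fat_{c\eps}(\cF)\log^4\frac{CT}{\sigma\eps}$.

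Your route instead discretizes once, at scale $\eps$ and before the layer cake. Write $h=\int f\,\dd\nu(f)$ and $h'=\int g_{\pi(f)}\,\dd\nu(f)\in\conv(\pm\cG)$. Then $|h-h'|\le2\eps+2e_\nu$, where $e_\nu(x)=\int\ind\{|f(x)-g_{\pi(f)}(x)|>2\eps\}\,\dd\nu(f)$. Fubini gives $\sum_x e_\nu(x)\le\sup_f\#\{\text{bad points of } f\}$ over all $2T$ points, with no factor of $K$, and the finite-class comparison then applies to $h'$ as a black box. This is more modular, but two details need to be made explicit:
\begin{enumerate}
\item[(i)] Exceptional points are controlled only on realized samples. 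You must therefore first pass from $\En_{p_s}h^2$ to $h(\bar X_s)^2$ by conditional Jensen, with the supremum inside. You then need the realized-tangent form of the finite-class comparison, which is what the proof of \cref{lemma:mixture-norm} actually establishes, rather than \cref{cor:finite-surprise} itself.
\item[(ii)] You must compare squares through $h^2\le2h'^2+2(h-h')^2$ and $h'^2\le2h^2+2(h-h')^2$, with $(h-h')^2\le8\eps^2+8e_\nu$. The cruder bound $|h^2-h'^2|\le2|h-h'|$ costs $O(T\eps)$ per prefix and would yield $T\sqrt\eps$ instead of $T\eps$ in the final bound. The price of the finer comparison is a constant such as $48$ in place of $12$, which is harmless.
\end{enumerate}
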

Compared with~\cref{lemma:surprise}, the VC dimension is replaced by the fat-shattering dimension at scale $c\eps$, at the price of the additive term $T\eps\sqrt{\log(2T)/\sigma}$.
This term pays for replacing each function by its chosen cover element, which is within $2\eps$ of it except at a few covariates.
\begin{proof}[Proof of~\cref{lemma:real-transfer}]
Throughout this proof, suprema over $h$ range over $\conv(\pm \cF)$.
We first establish the following analogue of~\cref{lemma:mixture-norm}:
\begin{align}
\En\sup_h \left\{\sum_{s=1}^T\En_{\bar X_s}h(\bar X_s)^2-12\sum_{s=1}^T h(X_s)^2\right\} &\lesssim T\eps^2+\fat_{c\eps}(\cF)\log^4\frac{CT}{\sigma\eps}. \label{eq:real-norm}
\end{align}
As in the proof of~\cref{lemma:mixture-norm}, we use the layer-cake identity and Maurey's empirical method (\cref{prop:maurey}) to reduce the claim to comparison for a binary function class, after which we apply~\citet[Theorem~10]{block2024performance}.
To start, using the layer-cake identity, we have
\[
\sum_{s=1}^T\En_{\bar X_s}h(\bar X_s)^2 \le4T\eps^2+\int_{2\eps}^1 2\lambda\sum_{s=1}^T \P[|h(\bar X_s)|\ge\lambda]\dd\lambda.
\]
Also, note that we have:
\[
\sum_{s=1}^T h(X_s)^2\ge\int_{2\eps}^1\frac{\lambda}{2}\sum_{s=1}^T \ind\{|h(X_s)|\ge\lambda/2\}\dd\lambda.
\]
Thus,
\begin{align}
&\sum_{s=1}^T\En_{\bar X_s}h(\bar X_s)^2 -12\sum_{s=1}^T h(X_s)^2 \notag \\
&\quad \le4T\eps^2+\int_{2\eps}^1 2\lambda \left\{\sum_{s=1}^T\P[|h(\bar X_s)|\ge\lambda] -3\sum_{s=1}^T\ind\{|h(X_s)|\ge\lambda/2\}\right\}\dd\lambda. \label{eq:real-layer-cake}
\end{align}
For $2\eps\le\lambda\le1$, apply~\cref{prop:fat-cover} at scale $\lambda/16$ and horizon $T$ to obtain a fixed finite class $\cG_\lambda\subseteq\cF$ and a choice of cover element $g_{\pi(f)}$ for every $f\in\cF$.
Then
\[
\log|\cG_\lambda|\lesssim\fat_{c\lambda}(\cF)\log^2\frac{CT}{\sigma\lambda}.
\]
The tangent sequence is also $\sigma$-smooth with respect to $\mu$.
Thus, applying~\cref{prop:fat-cover} to both sequences and taking a union bound, with probability at least $1-2/T$ we have
\begin{equation}\label{eq:good-approx-event}
\sup_{f\in\cF}\sum_{s=1}^T \left(\ind\{|f(X_s)-g_{\pi(f)}(X_s)|>\lambda/8\} +\ind\{|f(\bar X_s)-g_{\pi(f)}(\bar X_s)|>\lambda/8\}\right) \lesssim\fat_{c\lambda}(\cF)\log^2\frac{CT}{\sigma\lambda}.
\end{equation}
Take $K=\lceil C\lambda^{-2}\log(4T)\rceil$ and define the fixed binary class
\[
\cF_\lambda= \left\{ x\mapsto\ind\left\{\left|\frac1K\sum_{i=1}^K f_i(x)\right| \ge\frac{3\lambda}{4}\right\}: f_1,\ldots,f_K\in\pm\cG_\lambda\right\}.
\]
Applying~\cref{prop:maurey} to the $2T$ points $X_{1:T},\bar X_{1:T}$, for $C$ large enough and every $h\in\conv(\pm\cF)$ there exist $f_1,\ldots,f_K\in\pm\cF$ such that
\[
\max_{x\in \{X_{s},\bar X_{s}, s \in [T]\}} \left|h(x)-\frac1K\sum_{i=1}^K f_i(x)\right|\le\lambda/8.
\]
For each $f_i$, consider the corresponding element $g_i := g_{\pi(f_i)}$ of the cover.
On the event in~\cref{eq:good-approx-event}, we have
\begin{align*}
&\sum_{s=1}^T\left( \ind\left\{\max_{i\le K}|f_i(X_s)-g_i(X_s)|>\lambda/8\right\} +\ind\left\{\max_{i\le K}|f_i(\bar X_s)-g_i(\bar X_s)|>\lambda/8\right\}\right) \\
&\quad \lesssim K\fat_{c\lambda}(\cF)\log^2\frac{CT}{\sigma\lambda}.
\end{align*}
Call $x \in \{X_s,\bar X_s, s \in[T]\}$ good if $|f_i(x) - g_i(x)| \le \lambda/8$, and call it bad otherwise.
Let $\bar g=K^{-1}\sum_{i=1}^K g_i$.
For any good $x$, the Maurey approximation and the cover guarantee imply
\[
|h(x)-\bar g(x)| \le\left|h(x)-\frac1K\sum_{i=1}^K f_i(x)\right| +\frac1K\sum_{i=1}^K|f_i(x)-g_i(x)| \le\frac{\lambda}{8}+\frac{\lambda}{8} =\frac{\lambda}{4}.
\]
Consequently, $g=\ind\{|\bar g|\ge3\lambda/4\}\in\cF_\lambda$ satisfies,
\[
\ind\{|h(x)|\ge\lambda\}\le g(x) \le\ind\{|h(x)|\ge\lambda/2\}
\]
at all but $CK\fat_{c\lambda}(\cF)\log^2(CT/(\sigma\lambda))$ points among the two sequences.
These inequalities allow us to replace the two threshold indicators by $g$, on all but $CK\fat_{c\lambda}(\cF)\log^2(CT/(\sigma\lambda))$ bad points.
This gives:
\begin{align}
&\En\sup_h\left\{\sum_{s=1}^T\P[|h(\bar X_s)|\ge\lambda] -3\sum_{s=1}^T\ind\{|h(X_s)|\ge\lambda/2\}\right\} \notag \\
&\quad \le\En\sup_h\left\{\sum_{s=1}^T\ind\{|h(\bar X_s)|\ge\lambda\} -3\sum_{s=1}^T\ind\{|h(X_s)|\ge\lambda/2\}\right\} \notag \\
&\quad \lesssim K\fat_{c\lambda}(\cF)\log^2\frac{CT}{\sigma\lambda} +\En\sup_{g\in\cF_\lambda}\sum_{s=1}^T\left(g(\bar X_s)-3g(X_s)\right) \notag \\
&\quad \lesssim K\fat_{c\lambda}(\cF)\log^2\frac{CT}{\sigma\lambda} +\log|\cF_\lambda|. \label{eq:real-growth-fn-bound}
\end{align}
The last inequality applies~\citet[Theorem~10]{block2024performance} to the fixed binary class $\cF_\lambda$, whose Wills functional is bounded by its cardinality.

A counting argument gives $|\cF_\lambda|\le(2|\cG_\lambda|)^K$.
Using the bound on $|\cG_\lambda|$ and $\fat_{c\lambda}(\cF)\le\fat_{c\eps}(\cF)$ for $\lambda\ge2\eps$, the last line of~\eqref{eq:real-growth-fn-bound} is at most
\[
CK\fat_{c\lambda}(\cF)\log^2\frac{CT}{\sigma\lambda} \lesssim\fat_{c\eps}(\cF)\lambda^{-2}\log^3\frac{CT}{\sigma\lambda}.
\]
Combining~\cref{eq:real-growth-fn-bound,eq:real-layer-cake}, taking the supremum inside the integral, and then taking expectation gives
\begin{align*}
\En\sup_h \left\{\sum_{s=1}^T\En_{\bar X_s}h(\bar X_s)^2 -12\sum_{s=1}^T h(X_s)^2\right\} &\lesssim T\eps^2+\int_{2\eps}^1 2\lambda\fat_{c\eps}(\cF)\lambda^{-2}\log^3\frac{CT}{\sigma\lambda}\dd\lambda \\
&\lesssim T\eps^2+\fat_{c\eps}(\cF)\log^4\frac{CT}{\sigma\eps},
\end{align*}
where the last step uses $\int_{2\eps}^1\lambda^{-1}\dd\lambda\le\log(1/\eps)$.
This proves~\eqref{eq:real-norm}.

If $\sigma T<1$, the claim follows from $|h_t|\le1$, so assume $\sigma T\ge1$.
Apply~\cref{lemma:density-transfer} to $|h_t|$ and~\eqref{eq:real-norm} on each prefix $X_{1:t-1}$.
Summing the prefix approximation terms gives $\sum_{t=1}^T(t-1)\eps^2\le T^2\eps^2$, so
\begin{align*}
\En\sum_{t=1}^T|h_t(X_t)| &\lesssim\frac{\log(2T)}\sigma +\left[\frac{\log(2T)}\sigma \left(\En\sum_{t=1}^T\sum_{s<t}h_t(X_s)^2 +T^2\eps^2+\fat_{c\eps}(\cF)T\log^4\frac{CT}{\sigma\eps}\right)\right]^{1/2} \\
&\lesssim T\eps\sqrt{\frac{\log(2T)}\sigma} +\sqrt{\frac{\log(2T)}\sigma \En\sum_{t=1}^T\sum_{s<t}h_t(X_s)^2} +\sqrt{\frac{\fat_{c\eps}(\cF)T}\sigma}\log^{5/2}\frac{CT}{\sigma\eps} +\frac{\log(2T)}\sigma,
\end{align*}
where the last step uses $\sqrt{a+b}\le\sqrt a+\sqrt b$ and $\log(2T)\le\log(CT/(\sigma\eps))$.
Since $\sigma T\ge1$, we have $1/\sigma\le\sqrt{T/\sigma}$, so the final term is absorbed into the preceding one.

\end{proof}

\subsection{Proof of the prediction comparison on the past}\label{app:real-prediction}

We begin by proving that predictions of FTPL~\cref{eq:ftpl} and the reference from ~\cref{eq:ew-real-valued} in round $t$ are close on past covariates $X_{s}$ with $s < t$.

\begin{lemma}\label{lemma:real-prediction-comparison}
For every $t\le T$,
\begin{equation*}
\En\sum_{s<t} \left(\En_{G^{(t)}}\hat f_t(X_s)-\En_{G^{(t)}}\tilde f_t(X_s)\right)^2 \lesssim\fat_{c\eps}(\cF)^2 \log^5\frac{CT}{\sigma\eps}+T\eps^2+1.
\end{equation*}
\end{lemma}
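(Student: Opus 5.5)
The plan follows the binary proof of \cref{lemma:empirical-prediction}, adapted to real-valued losses and the cell-wise potentials in \cref{eq:phi-per-cell}. Fix the history and write $G=G^{(t)}$. We introduce the FTPL potential
\[
\Phi_t(G)=\sup_{f\in\cF}\left\{-\sum_{s<t}\left[\ell(f(X_s),Y_s)+G_sf(X_s)\right]\right\}.
\]
By Danskin's theorem, almost surely $\partial_{G_s}\Phi_t(G)=-\hat f_t(X_s)$. The minimizer's predictions on $X_{1:t-1}$ are a.s.\ unique; if this is delicate for infinite $\cF$, we would argue via convexity of $\Phi_t$ in $G$ and a.e.\ differentiability.

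Similarly, $\widetilde\Phi_t$ is a log-sum-exp of the cell potentials $\Phi_{t,g}$, and each $\Phi_{t,g}$ is itself a supremum. By \cref{lemma:log-partition} and Danskin, $\partial_{G_s}\widetilde\Phi_t=-\En_{g\sim w_t}\bar f_g(X_s)$, where $\bar f_g$ is the clipped maximizer in cell $g$. This is not exactly $\tilde f_t(X_s)=\En_{g\sim w_t}g(X_s)$, but by \cref{lemma:clipped-class}(i) the two differ by at most $2\eps$ pointwise. So the squared discrepancy over $s<t$ is at most
\[
2\left\|\En_G\nabla(\Phi_t-\widetilde\Phi_t)\right\|_2^2+8T\eps^2 .
\]
This accounts for the $T\eps^2$ term.

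Both potentials are Lipschitz in $G$. Stein's lemma (\cref{lemma:stein}) together with \cref{lemma:gaussian-correlation}(i) then bounds the gradient term by $\En_G(\Phi_t-\widetilde\Phi_t)^2$, exactly as in \cref{eq:pot-diff-ub}.

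To control $\Phi_t-\widetilde\Phi_t$, set $\Phi_t^\dagger=\max_{g\in\cG}\Phi_{t,g}=\sup_f\{-\sum_{s<t}[\ell(\bar f(X_s),Y_s)+G_s\bar f(X_s)]\}$. The log-sum-exp satisfies $0\le\widetilde\Phi_t-\Phi_t^\dagger\le\log|\cG|$, which by \cref{eq:real-cover-size} is $\lesssim\fat_{c\eps}(\cF)\log^2(CT/(\sigma\eps))$. It remains to bound
\[
|\Phi_t-\Phi_t^\dagger|\le\sup_{f\in\cF}\sum_{s<t}\left(|\ell(f(X_s),Y_s)-\ell(\bar f(X_s),Y_s)|+|G_s||f(X_s)-\bar f(X_s)|\right).
\]
The loss part is at most $\sum_s|f-\bar f|(X_s)$. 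On the good event of \cref{lemma:clipped-class}(ii) this is $\lesssim k:=\fat_{c\eps}(\cF)\log^2(CT/(\sigma\eps))$, and the bad event has probability $1/T$ and contributes $O(1)$ after squaring, giving the "$+1$".

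\textbf{Main obstacle: the Gaussian term.} We must bound $\sup_f\sum_s|G_s|\,|f-\bar f|(X_s)$ uniformly over infinitely many $f$. By \cref{lemma:clipped-class}(i), $|f-\bar f|(X_s)\le 2\ind\{|f-g_{\pi(f)}|(X_s)>2\eps\}$. On the good event these indicators have at most $k$ nonzero entries among $s<t$. Hence the term is at most $2$ times the sum of the $k$ largest $|G_s|$, $s<t$. This avoids any union bound over $\cF$ and has second moment $\lesssim k^2\log T$, since the top $k$ of $t$ half-normals are each $O(\sqrt{\log t})$ in $L^2$.

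Squaring and combining all pieces gives
\[
\En(\Phi_t-\widetilde\Phi_t)^2\lesssim k^2\log(2T)+\log^2|\cG|\lesssim\fat_{c\eps}(\cF)^2\log^5\frac{CT}{\sigma\eps},
\]
plus the $T\eps^2$ clipping-gradient term and the $O(1)$ failure-event term, as claimed. If a sharper bound were wanted one could union-bound over restricted patterns as in the binary proof, but the crude top-$k$ bound already matches the stated $\log^5$ rate.
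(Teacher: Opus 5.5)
Your proposal is correct and follows the paper's proof essentially step for step: Danskin and log-partition gradients with the $2\eps$ clipping error, Stein plus Bessel to reduce to $\En_G(\Phi_t-\widetilde\Phi_t)^2$, the entropy gap $\log|\cG|$, and the cover event from \cref{lemma:clipped-class}(ii). On the failure event, bound the left-hand side directly by $4T$; this gives the $+1$, whereas squaring the potential difference there would not. The only cosmetic difference is that you bound the Gaussian term by the sum of the $k$ largest $|G_s|$, where the paper uses $(1+\max_{s<t}|G_s|)\sup_{f}\sum_{s<t}|f-\bar f|(X_s)$; both give the same $k^2\log T$ second moment.
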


\begin{proof}[Proof of~\cref{lemma:real-prediction-comparison}]
Fix the history $(X_{1:t-1},Y_{1:t-1})$ and let $G:=G^{(t)}$ to lighten the notational load.
The case $t=1$ is immediate, so suppose that $t\ge2$.
Recall the definition of potentials from~\cref{eq:phi-per-cell}:
\begin{align*}
\Phi_t(G) &=\sup_{f\in\cF}\left\{-\sum_{s<t}\left[\ell(f(X_s),Y_s)+G_sf(X_s)\right]\right\}, \\
\widetilde\Phi_t(G) &=\log\sum_{g\in\cG}\exp\left(\Phi_{t,g}(G)\right) \\
&=\max_{w\in\Delta(\cG)} \left[\En_{g\sim w}\Phi_{t,g}(G)+H(w)\right],
\end{align*}
with $\Phi_{t,g}$ as in~\cref{eq:phi-per-cell}.
Note that $w_t(\cdot\mid G)$ from~\cref{eq:ew-real-valued} maximize the latter potential in its variational form.
Note that, by absolute continuity of the Gaussians, the set of all functions that maximize the former potential share predictions on the past covariates $X_s$.
Differentiating the two potentials therefore gives, for every $s<t$ and almost every $G$, by Danskin's theorem~\citep[Appendix~B]{bertsekas2016nonlinear} and~\cref{lemma:log-partition}:
\[
\partial_{G_s}\Phi_t(G)=-\hat f_t(X_s),\qquad \partial_{G_s}\widetilde\Phi_t(G) =\sum_{g\in\cG}w_t(g\mid G)\partial_{G_s}\Phi_{t,g}(G).
\]
Applying Danskin's theorem to $\Phi_{t,g}$, and using the definition of clipped functions $\bar f$, we have:
\begin{equation}\label{eq:clipped-cell-gradient}
\left|\partial_{G_s}\Phi_{t,g}(G)+g(X_s)\right|\le2\eps,
\end{equation}
and thus,
\begin{equation}\label{eq:clipped-gradient}
\left|\partial_{G_s}\widetilde\Phi_t(G) +\tilde f_t(X_s)\right|\le2\eps.
\end{equation}
Both potentials are Lipschitz.
Applying Stein's lemma (\cref{lemma:stein}) and then~\cref{lemma:gaussian-correlation}(i) gives:
\[
\left\|\En_G\nabla_G(\Phi_t(G)-\widetilde\Phi_t(G))\right\|_2^2 \le\En_G(\Phi_t-\widetilde\Phi_t)^2.
\]
Combining this with~\eqref{eq:clipped-gradient} yields
\begin{equation}
\sum_{s<t} \left(\En_G\hat f_t(X_s)-\En_G\tilde f_t(X_s)\right)^2 \le2\En_G(\Phi_t-\widetilde\Phi_t)^2+8(t-1)\eps^2. \label{eq:real-pot-diff-ub}
\end{equation}
It remains to bound the difference of the potentials.
Since the entropy of any distribution on $\cG$ is at most $\log |\cG|$, the variational form of $\widetilde\Phi_t$ gives
\[
\max_{g\in\cG}\Phi_{t,g}(G)\le\widetilde\Phi_t(G)\le\max_{g\in\cG}\Phi_{t,g}(G)+\log|\cG|.
\]
The cells $\{f\in\cF:g_{\pi(f)}=g\}$ partition $\cF$, so $\max_{g\in\cG}\Phi_{t,g}(G)$ is the supremum defining $\Phi_t(G)$ with each $f$ replaced by $\bar f$.
For each $s<t$, the map $y\mapsto\ell(y,Y_s)+G_sy$ is $(1+|G_s|)$-Lipschitz on $[-1,1]$.
Hence, on the event of~\cref{lemma:clipped-class}(ii), which has probability at least $1-1/T$ over the history, using~\eqref{eq:real-cover-size} to bound $\log|\cG|$,
\begin{align*}
|\Phi_t(G)-\widetilde\Phi_t(G)| &\le\log|\cG|+\left(1+\max_{s<t}|G_s|\right) \sup_{f\in\cF}\sum_{s<t}|f(X_s)-\bar f(X_s)| \\
&\lesssim\left(1+\max_{s<t}|G_s|\right) \fat_{c\eps}(\cF)\log^2\frac{CT}{\sigma\eps}.
\end{align*}
Since $\En_G(1+\max_{s<t}|G_s|)^2\lesssim\log(2T)$, it follows that
\[
\En_G(\Phi_t-\widetilde\Phi_t)^2 \lesssim\fat_{c\eps}(\cF)^2\log^5\frac{CT}{\sigma\eps}.
\]
Plugging this into~\cref{eq:real-pot-diff-ub} gives the claimed bound on the above event.
The complement of this high probability event contributes at most $O(1)$.
This concludes the proof.
\end{proof}

\subsection{Proof of the reduction to exponential weights}\label{app:real-reduction}

We now apply the surprise lemma~\cref{lemma:real-transfer} to the difference of predictions.

\begin{lemma}\label{lemma:real-reduction}
For $\tilde f_t$ as defined in ~\cref{eq:ew-real-valued}, we have the following bound:
\begin{equation}
\En\sum_{t=1}^T \left|\En_{G^{(t)}}\hat f_t(X_t)-\En_{G^{(t)}}\tilde f_t(X_t)\right| \lesssim\left[\fat_{c\eps}(\cF)\sqrt{\frac T\sigma} +\frac{T\eps}{\sqrt\sigma}\right] \log^3\frac{CT}{\sigma\eps}+\frac{\log(2T)}\sigma. \label{eq:real-prediction-reduction}
\end{equation}
\end{lemma}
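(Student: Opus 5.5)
The plan mirrors the proof of \cref{lemma:reduction}, with \cref{lemma:real-transfer} in place of \cref{lemma:surprise} and \cref{lemma:real-prediction-comparison} in place of \cref{lemma:empirical-prediction}. Set
\[
h_t=\frac12\left(\En_{G^{(t)}}\hat f_t-\En_{G^{(t)}}\tilde f_t\right),\qquad t\in[T].
\]
This sequence is predictable, since both terms depend only on the history up to round $t-1$. We first check that $h_t$ is an admissible test function for \cref{lemma:real-transfer}. The function $\En_{G^{(t)}}\hat f_t$ is a probability mixture of elements of $\cF$. The reference $\tilde f_t=\sum_g w_t(g\mid G)g$ is a convex combination of cover elements $g\in\cG\subseteq\cF$, so $\En_{G^{(t)}}\tilde f_t$ is again a probability mixture of elements of $\cF$. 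Hence $h_t=\frac12 a-\frac12 b$ with $a,b$ mixtures over $\cF$, which is a probability mixture over $\pm\cF$, exactly as \cref{lemma:real-transfer} requires.

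Applying \cref{lemma:real-transfer} gives
\[
\En\sum_t|h_t(X_t)|\lesssim T\eps\sqrt{\tfrac{\log(2T)}{\sigma}}+\sqrt{\tfrac{\log(2T)}{\sigma}\,\En\sum_t\sum_{s<t}h_t(X_s)^2}+\sqrt{\tfrac{(1+\fat_{c\eps}(\cF))T}{\sigma}}\log^{5/2}\tfrac{CT}{\sigma\eps}.
\]
We then sum \cref{lemma:real-prediction-comparison} over $t\le T$ to bound the double sum by
\[
T\fat_{c\eps}(\cF)^2\log^5\tfrac{CT}{\sigma\eps}+T^2\eps^2+T.
\]
Using $\sqrt{a+b+c}\le\sqrt a+\sqrt b+\sqrt c$, the middle term contributes three pieces:
\begin{itemize}
\item $\fat_{c\eps}(\cF)\sqrt{T/\sigma}\,\log^{3}\tfrac{CT}{\sigma\eps}$, where $\log^{1/2}(2T)\cdot\log^{5/2}\le\log^3$;
\item $T\eps\sqrt{\log(2T)/\sigma}\lesssim\tfrac{T\eps}{\sqrt\sigma}\log^3$;
\item $\sqrt{T\log(2T)/\sigma}$, which is absorbed into the fat-shattering term when $\fat\ge1$ and otherwise bounded by the third term of \cref{lemma:real-transfer}.
\end{itemize}

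The third term of \cref{lemma:real-transfer} is at most $\left(1+\fat_{c\eps}(\cF)\right)\sqrt{T/\sigma}\log^3$. When $\fat_{c\eps}(\cF)=0$, the additive $1$ must still be accounted for. We handle it either by noting $\sqrt{T/\sigma}\log^3\le\fat\cdot(\ldots)$ when $\fat\ge1$, or by keeping a $\sqrt{T/\sigma}\log^{5/2}$ term and absorbing it. If it cannot be absorbed into the stated right-hand side, it is compared with the extra $\log(2T)/\sigma$ term. That term is harmless: in the regime $\sigma T<1$ the left side is trivially at most $2T\le 2/\sigma$, and otherwise $\log(2T)/\sigma\lesssim\sqrt{T/\sigma}\log(2T)$.

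The main obstacle is bookkeeping rather than a new idea, specifically the degenerate case $\fat_{c\eps}(\cF)=0$. In that case the bare $\sqrt{T/\sigma}$ terms, coming from the ``$+1$'' in \cref{lemma:real-prediction-comparison} and the ``$1+\fat$'' in \cref{lemma:real-transfer}, are not visibly dominated by the stated bound. We handle this first by checking whether a class with $\fat_{c\eps}=0$ is within $2c\eps$ of a single function, in which case the left side is directly $O(T\eps)$. Failing that, we accept a bound with $(1+\fat_{c\eps}(\cF))$ in place of $\fat_{c\eps}(\cF)$, which is what \cref{thm:main-real-valued} ultimately uses anyway.
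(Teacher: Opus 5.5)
Your proposal is correct and follows the paper's proof: apply \cref{lemma:real-transfer} to $h_t=\frac12\bigl(\En_{G^{(t)}}\hat f_t-\En_{G^{(t)}}\tilde f_t\bigr)$, then substitute the bound obtained by summing \cref{lemma:real-prediction-comparison} over $t\le T$. Your extra care with the case $\fat_{c\eps}(\cF)=0$ is warranted, since the paper writes $\fat_{c\eps}(\cF)$ where \cref{lemma:real-transfer} gives $1+\fat_{c\eps}(\cF)$, and your fix works: if no single point is $c\eps$-shattered, then $\sup_f f(x)-\inf_f f(x)\le 2c\eps$ for every $x$, so the left-hand side is at most $2c\eps T$, which the stated right-hand side dominates.
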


\begin{proof}[Proof of~\cref{lemma:real-reduction}]
Apply~\cref{lemma:real-transfer} to the sequence
\[
h_t=\frac12\left(\En_{G^{(t)}}\hat f_t-\En_{G^{(t)}}\tilde f_t\right),\qquad t\in[T].
\]
Note that both $\En_{G^{(t)}}\hat f_t$ and $\En_{G^{(t)}}\tilde f_t$ are probability mixtures of elements of $\cF$ and are predictable.
Thus, $h_t$ is a probability mixture of elements of $\pm\cF$, and using~\cref{lemma:real-transfer} gives
\begin{align*}
&\En\sum_{t=1}^T \left|\En_{G^{(t)}}\hat f_t(X_t)-\En_{G^{(t)}}\tilde f_t(X_t)\right| \\
&\quad \lesssim T\eps\sqrt{\frac{\log(2T)}\sigma}+ \left(\frac{\log(2T)}\sigma \En\sum_{t=1}^T\sum_{s<t} \left(\En_{G^{(t)}}\hat f_t(X_s)-\En_{G^{(t)}}\tilde f_t(X_s)\right)^2 \right)^{1/2} \\
&\qquad +\sqrt{\frac{\fat_{c\eps}(\cF)T}\sigma} \log^{5/2}\frac{CT}{\sigma\eps}.
\end{align*}
By~\cref{lemma:real-prediction-comparison}, the expected double sum is in $O(T\fat_{c\eps}(\cF)^2\log^5(CT/(\sigma\eps))+T^2\eps^2+T)$.
Substituting this bound and using $\log(2T)\le\log(CT/(\sigma\eps))$ gives
\begin{align*}
\En\sum_{t=1}^T \left|\En_{G^{(t)}}\hat f_t(X_t)-\En_{G^{(t)}}\tilde f_t(X_t)\right| &\lesssim\left[\fat_{c\eps}(\cF)\sqrt{\frac T\sigma} +\frac{T\eps}{\sqrt\sigma}\right] \log^3\frac{CT}{\sigma\eps}+\frac{\log(2T)}\sigma.
\end{align*}
This proves~\eqref{eq:real-prediction-reduction}.
\end{proof}

\subsection{Regret of Gaussian-perturbed exponential weights}\label{app:real-reference-regret}

We first prove~\cref{prop:clipped-reference-variance}, which reduces the regret bound to controlling cumulative prediction variance.
We then prove~\cref{lemma:clipped-variance}, which bounds the variances on the historical covariates, and combine it with the surprise lemma to prove~\cref{lemma:clipped-reference-regret}.

\begin{proposition}\label{prop:clipped-reference-variance}
For every data sequence, the predictor $\tilde f_t$ defined in ~\cref{eq:ew-real-valued} satisfies the following inequality,
\begin{align}
&\sum_{t=1}^T\ell(\En_{G^{(t)}}\tilde f_t(X_t),Y_t) -\inf_{f\in\cF}\sum_{t=1}^T\ell(\bar f(X_t),Y_t) \notag \\*
&\quad \lesssim\log|\cG|+T\eps+ \sum_{t=1}^T\En_{G^{(t)}} \var_{g\sim w_t(\cdot\mid G^{(t)})}(g(X_t)). \label{eq:clipped-fixed-data-regret}
\end{align}
\end{proposition}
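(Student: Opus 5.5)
I will follow the telescoping argument of \cref{prop:ew-variance-real}, now applied to the cell potentials $\widetilde\Phi_t(G)=\log\sum_{g\in\cG}\exp(\Phi_{t,g}(G))$ from~\eqref{eq:phi-per-cell}. The new difficulty is that $\Phi_{t+1,g}$ is not simply $\Phi_{t,g}$ plus the round-$t$ loss of $g$. The supremum over $f$ in the cell may be attained by different functions at different times. So the one-step identity becomes an inequality, and an $O(\eps)$ slack per round appears. Clipping is exactly what controls this slack.

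\textbf{Step 1: one-step comparison.} Fix $t$ and write $G=G^{(t+1)}$, with $G_{1:t-1}$ playing the role of $G^{(t)}$. For each cell $g$ and each $f$ with $g_{\pi(f)}=g$, \cref{lemma:clipped-class}(i) gives $|\bar f(X_t)-g(X_t)|\le2\eps$. The map $y\mapsto\ell(y,Y_t)+G_ty$ is $(1+|G_t|)$-Lipschitz, so
\[
\Phi_{t+1,g}(G)\ge\Phi_{t,g}(G_{1:t-1})-\ell(g(X_t),Y_t)-G_tg(X_t)-2\eps(1+|G_t|).
\]
To see this, take a near-maximizer $f$ of $\Phi_{t,g}$ and use it as a candidate in $\Phi_{t+1,g}$. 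Exponentiating and summing over cells gives
\[
\widetilde\Phi_{t+1}(G)-\widetilde\Phi_t(G_{1:t-1})\ge\log\sum_{g}w_t(g\mid G_{1:t-1})e^{-\ell(g(X_t),Y_t)-G_tg(X_t)}-2\eps(1+|G_t|).
\]

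\textbf{Step 2: Bernstein bound and averaging over $G_t$.} Apply \cref{prop:ew-variance-general} with $\eta=1$ to the losses $\ell(g(X_t),Y_t)+G_tg(X_t)$. Their range is at most $2(1+|G_t|)$ and their variance is at most $(1+|G_t|)^2\var_{g\sim w_t}(g(X_t))$, exactly as in \cref{prop:ew-variance-real}. Take the expectation over the fresh coordinate $G_t$, which is centered and independent of $G_{1:t-1}$. The linear term $G_t\En_{w_t}g(X_t)$ vanishes, the exponential coefficient has expectation at most $e^4/2$, and $\En(1+|G_t|)\lesssim1$. This yields
\[
\En_{G^{(t)}}\En_{g\sim w_t}\ell(g(X_t),Y_t)\le\En\widetilde\Phi_t-\En\widetilde\Phi_{t+1}+C\eps+C\,\En_{G^{(t)}}\var_{g\sim w_t}(g(X_t)).
\]
By Jensen applied twice (convexity of $\ell$), the left side is at least $\ell(\En_{G^{(t)}}\tilde f_t(X_t),Y_t)$.

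\textbf{Step 3: telescoping and endpoints.} Summing over $t$ gives a total slack of $CT\eps$. For the endpoints, $\widetilde\Phi_1=\log|\cG|$, since all $\Phi_{1,g}=0$ and the cells are nonempty because $g\in$ its own cell. For the other end, $\widetilde\Phi_{T+1}\ge\max_g\Phi_{T+1,g}=\sup_f\{-\sum_s[\ell(\bar f(X_s),Y_s)+G_s\bar f(X_s)]\}$. Its expectation is at least $-\inf_f\sum_s\ell(\bar f(X_s),Y_s)$, since for each fixed $f$ the Gaussian term is centered. This gives~\eqref{eq:clipped-fixed-data-regret}.

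I expect the main obstacle to be Step 1: the supremum within each cell does not decompose additively over rounds. A loose bound there would cost $\eps(1+|G_t|)$ per round with a coefficient not absorbable after averaging. The near-maximizer argument together with the $2\eps$ clipping radius handles this. One should also check that $\Phi_{t,g}$ is finite and measurable, which holds since all $\bar f$ are bounded.
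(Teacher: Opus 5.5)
Your overall route is exactly the paper's: telescope the cell log-partition potentials, apply the Bernstein bound of \cref{prop:ew-variance-general} at learning rate $1$, average out the fresh coordinate $G_t$, and use the same endpoint estimates. However, Step~1 proves its inequality in the wrong direction, so the conclusion of Step~2 does not follow from it. Write $L_g=\ell(g(X_t),Y_t)+G_tg(X_t)$. The Bernstein bound reads $\En_{g\sim w_t}L_g\le-\log\sum_g w_t(g)e^{-L_g}+c_t\var_{g\sim w_t}(L_g)$. To reach $\En_{w_t}L_g\le\widetilde\Phi_t-\widetilde\Phi_{t+1}+\text{slack}+c_t\var_{w_t}(L_g)$, you need an \emph{upper} bound on the increment: $\widetilde\Phi_{t+1}(G)-\widetilde\Phi_t(G_{1:t-1})\le\log\sum_g w_t(g\mid G_{1:t-1})e^{-L_g}+2\eps(1+|G_t|)$. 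Your near-maximizer argument gives the lower bound instead. That only yields $\widetilde\Phi_t-\widetilde\Phi_{t+1}\le-\log\sum_g w_t(g)e^{-L_g}+2\eps(1+|G_t|)\le\En_{w_t}L_g+2\eps(1+|G_t|)$, which is a lower bound on the learner's loss and useless for bounding regret.

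The fix is easy and needs no near-maximizer. For every $f$ in the cell of $g$, \cref{lemma:clipped-class}(i) and the $(1+|G_t|)$-Lipschitzness of $y\mapsto\ell(y,Y_t)+G_ty$ give $-\ell(\bar f(X_t),Y_t)-G_t\bar f(X_t)\le-L_g+2\eps(1+|G_t|)$ uniformly over the cell. Bounding the supremum of a sum by the sum of suprema then gives $\Phi_{t+1,g}(G)\le\Phi_{t,g}(G_{1:t-1})-L_g+2\eps(1+|G_t|)$. Exponentiating against $w_t(\cdot\mid G_{1:t-1})$ gives the needed upper bound. The paper records the two-sided estimate $|\Phi_{t,g}-\Phi_{t+1,g}-L_g|\le2\eps(1+|G_t|)$ and uses exactly this side. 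So the ``main obstacle'' you identify, the non-additivity of the within-cell supremum, only matters for the direction you do not need. With this correction, your remaining steps (averaging over $G_t$, Jensen twice, $\widetilde\Phi_1=\log|\cG|$, and the lower bound on $\En\widetilde\Phi_{T+1}$) go through as written and match the paper.
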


\begin{proof}[Proof of~\cref{prop:clipped-reference-variance}]
We follow the proof of~\cref{prop:ew-variance-real}, with the log-partition potential from~\cref{eq:phi-per-cell}
\[
\widetilde\Phi_t(G^{(t)}) =\log\sum_{g\in\cG}\exp\left(\Phi_{t,g}(G^{(t)})\right), \qquad 1\le t\le T+1.
\]
For a fixed round $t$, we have the following identity:
\begin{align*}
\widetilde\Phi_{t+1}(G^{(t+1)}) -\widetilde\Phi_t(G_{1:t-1}^{(t+1)}) &=\log\sum_{g\in\cG}w_t(g\mid G_{1:t-1}^{(t+1)}) \exp\left(\Phi_{t+1,g}(G^{(t+1)}) -\Phi_{t,g}(G_{1:t-1}^{(t+1)})\right).
\end{align*}
By~\cref{lemma:clipped-class}(i), the potential increments satisfy
\[
\left|\Phi_{t,g}(G_{1:t-1}^{(t+1)})-\Phi_{t+1,g}(G^{(t+1)}) -\ell(g(X_t),Y_t)-G_t^{(t+1)}g(X_t)\right| \le2\eps(1+|G_t^{(t+1)}|).
\]
Consequently,
\begin{align*}
\widetilde\Phi_{t+1}(G^{(t+1)}) -\widetilde\Phi_t(G_{1:t-1}^{(t+1)}) &\le\log\sum_{g\in\cG}w_t(g\mid G_{1:t-1}^{(t+1)}) \exp\left(-\ell(g(X_t),Y_t)-G_t^{(t+1)}g(X_t)\right) \\*
&\qquad +2\eps(1+|G_t^{(t+1)}|).
\end{align*}
Apply~\cref{prop:ew-variance-general} to these weights and the losses $\ell(g(X_t),Y_t)+G_t^{(t+1)}g(X_t)$ at learning rate $1$.
Their range is at most $s_t:=2(1+|G_t^{(t+1)}|)$, and
\begin{align*}
\var_{g\sim w_t(\cdot\mid G_{1:t-1}^{(t+1)})} (\ell(g(X_t),Y_t)+G_t^{(t+1)}g(X_t)) &\le(1+|G_t^{(t+1)}|)^2 \var_{g\sim w_t(\cdot\mid G_{1:t-1}^{(t+1)})}(g(X_t)),
\end{align*}
since $u\mapsto\ell(u,Y_t)+G_t^{(t+1)}u$ is $(1+|G_t^{(t+1)}|)$-Lipschitz.
Thus, cancelling $s_{t}^2$ from numerator and the denominator in~\cref{prop:ew-variance-general}, we have
\begin{align*}
&\sum_{g\in\cG}w_t(g\mid G_{1:t-1}^{(t+1)}) [\ell(g(X_t),Y_t)+G_t^{(t+1)}g(X_t)] \\
&\quad \le\widetilde\Phi_t(G_{1:t-1}^{(t+1)}) -\widetilde\Phi_{t+1}(G^{(t+1)})+2\eps(1+|G_t^{(t+1)}|) \\
&\qquad + \frac{\exp\left(2(1+|G_t^{(t+1)}|)\right)-2(1+|G_t^{(t+1)}|)-1}{4} \var_{g\sim w_t(\cdot\mid G_{1:t-1}^{(t+1)})}(g(X_t)).
\end{align*}
Note that the last coordinate $G_t^{(t+1)}$ is centered and independent of the first $t-1$ coordinates.
In particular,
\begin{align*}
&\En_{G_t^{(t+1)}} \frac{\exp\left(2(1+|G_t^{(t+1)}|)\right)-2(1+|G_t^{(t+1)}|)-1}{4} \\*
&\quad \le\frac{\exp(2)}{4}\En_{G_t^{(t+1)}} \left(\exp\left(2G_t^{(t+1)}\right)+\exp\left(-2G_t^{(t+1)}\right)\right) =\frac{\exp(4)}{2}.
\end{align*}
Now, $G^{(t+1)}_{1:t-1}$ has the same law as $G^{(t)}$ conditionally on the history up to round $t$.
Thus, using convexity of the loss and $\En|G_t^{(t+1)}|\le1$,
\begin{align}
\ell(\En_{G^{(t)}}\tilde f_t(X_t),Y_t) &\le\En_{G^{(t)}}\widetilde\Phi_t(G^{(t)}) -\En_{G^{(t+1)}}\widetilde\Phi_{t+1}(G^{(t+1)}) \notag \\*
&\qquad +4\eps+\frac{\exp(4)}{2}\En_{G^{(t)}} \var_{g\sim w_t(\cdot\mid G^{(t)})}(g(X_t)). \label{eq:clipped-round-variance-bound}
\end{align}
Note that we have $\widetilde\Phi_1=\log|\cG|$, and
\begin{align*}
\En_{G^{(T+1)}}\widetilde\Phi_{T+1}(G^{(T+1)}) &\ge\sup_{f\in\cF}\En_{G^{(T+1)}}\left[-\sum_{s=1}^T\left(\ell(\bar f(X_s),Y_s)+G_s^{(T+1)}\bar f(X_s)\right)\right] \\*
&=-\inf_{f\in\cF}\sum_{s=1}^T\ell(\bar f(X_s),Y_s),
\end{align*}
by Jensen's inequality.
Combining this with~\cref{eq:clipped-round-variance-bound} and telescoping gives the desired result.
\end{proof}

\begin{lemma}\label{lemma:clipped-variance}
For every $t\le T$ and every fixed history $(X_s,Y_s)_{s<t}$, the weights from~\cref{eq:ew-real-valued} satisfy
\begin{equation*}
\sum_{s<t}\left(\En_{G^{(t)}} \var_{g\sim w_t(\cdot\mid G^{(t)})}(g(X_s))\right)^2 \lesssim\log(2|\cG|)+T\eps^2.
\end{equation*}
\end{lemma}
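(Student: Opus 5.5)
Write $G=G^{(t)}$ and $\tilde f_t(x)=\sum_{g\in\cG}w_t(g\mid G)g(x)$, with $w_t$ from~\cref{eq:ew-real-valued}. The plan follows the proof of~\cref{lemma:historical-variance}: express each historical variance as a Gaussian correlation via Stein's lemma, then bound all correlations at once with~\cref{lemma:gaussian-correlation}(ii). The new feature is that the weights are built from the cell potentials $\Phi_{t,g}$ rather than linear scores. So $\partial_{G_s}\Phi_{t,g}$ equals $-g(X_s)$ only up to $2\eps$, by~\eqref{eq:clipped-cell-gradient}, and this slack must be carried through.

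\textbf{Derivative identity.} Each $\Phi_{t,g}$ is a supremum of functions affine in $G$ with gradients in $[-1,1]^{t-1}$, so it is convex and Lipschitz, and $w_t$ is Lipschitz in $G$. Apply~\cref{lemma:log-partition} with $\psi_g=\Phi_{t,g}$ and $h_g=g(X_s)$. For a.e.\ $G$,
\[
\partial_{G_s}\tilde f_t(X_s)=\cov_{g\sim w_t}\bigl(g(X_s),\partial_{G_s}\Phi_{t,g}(G)\bigr)=-\var_{g\sim w_t}(g(X_s))+\cov_{g\sim w_t}\bigl(g(X_s),\delta_{s,g}(G)\bigr),
\]
where $\delta_{s,g}:=\partial_{G_s}\Phi_{t,g}+g(X_s)$ satisfies $|\delta_{s,g}|\le2\eps$. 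The covariance error is at most $2\cdot 2\eps$ in absolute value, since $|g(X_s)|\le1$. Stein's lemma (\cref{lemma:stein}) then gives
\[
\Bigl|\En_G\var_{g\sim w_t}(g(X_s))+\En_G[G_s\tilde f_t(X_s)]\Bigr|\le4\eps.
\]

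\textbf{Summing.} Square, use $(a+b)^2\le2a^2+2b^2$, and sum over $s<t$:
\[
\sum_{s<t}\Bigl(\En_G\var_{g\sim w_t}(g(X_s))\Bigr)^2\le2\sum_{s<t}\bigl(\En_G[G_s\tilde f_t(X_s)]\bigr)^2+32(t-1)\eps^2.
\]
Since $\tilde f_t(\cdot,G)\in\conv(\cG)$ almost surely and $\cG$ is a finite class into $[-1,1]$,~\cref{lemma:gaussian-correlation}(ii) bounds the first sum by $2\log|\cG|$. Using $t\le T$ gives $\lesssim\log(2|\cG|)+T\eps^2$.

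\textbf{Main obstacle.} The only delicate step is justifying the a.e.\ derivative identity and Stein's lemma for $G\mapsto\tilde f_t(X_s)$. The map is bounded, and the plan is to show it is Lipschitz: the gradients of $\Phi_{t,g}$ are bounded by Danskin, and the softmax is smooth. The cell sup may not be attained, so rather than invoke Danskin pointwise, I would argue via convexity. For a convex function, any derivative lies in the closed convex hull of the gradients of the affine pieces near the supremum. Those gradients are $-\bar f(X_s)$ for $f$ in the cell, and each lies within $2\eps$ of $-g(X_s)$, which yields the bound $|\delta_{s,g}|\le2\eps$.
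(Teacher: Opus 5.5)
Your proposal is correct and follows the paper's proof essentially step for step: the derivative identity from \cref{lemma:log-partition} with the slack from~\eqref{eq:clipped-cell-gradient}, then Stein's lemma, then \cref{lemma:gaussian-correlation}(ii) applied to $\tilde f_t\in\conv(\cG)$. The differences are cosmetic: your covariance error $4\eps$ is cruder than the paper's $2\eps$ (which follows by Cauchy--Schwarz) but harmless, and your convexity argument for the cell gradients is a slightly more careful justification of the paper's appeal to Danskin's theorem, since the cell supremum need not be attained.
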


\begin{proof}[Proof of~\cref{lemma:clipped-variance}]
Write $G=G^{(t)}$ throughout this proof for notational brevity.
From~\cref{lemma:log-partition}, we have
\begin{align*}
\partial_{G_s}\tilde f_t(X_s) &= \cov_{g\sim w_t(\cdot\mid G)} (g(X_s),\partial_{G_s}\Phi_{t,g}(G)),
\end{align*}
Using~\cref{eq:clipped-cell-gradient}, we have
\[
\left|\partial_{G_s}\tilde f_t(X_s) +\var_{g\sim w_t(\cdot\mid G)}(g(X_s))\right|\le2\eps.
\]
After applying~\cref{lemma:stein}, we thus get:
\[
\En_G\var_{g\sim w_t(\cdot\mid G)}(g(X_s)) \le-\En_G\left[G_s\tilde f_t(X_s)\right]+2\eps.
\]
To bound the sum of correlations, we invoke part~(ii) of~\cref{lemma:gaussian-correlation}.
This gives
\begin{align*}
\sum_{s < t} \left(\En_G\var_{g\sim w_t(\cdot\mid G)}(g(X_s))\right)^2 &\lesssim \sum_{s < t} \left(\En_G\left[G_s\tilde f_t(X_s)\right]\right)^2 + \eps^2 t \\
&\lesssim \log (2 |\cG|) + \eps^2 t,
\end{align*}
which concludes the proof.
\end{proof}

Finally, combining the preceding bounds, we obtain the following result.

\begin{lemma}\label{lemma:clipped-reference-regret}
The regret of the modified exponential weights algorithm from~\cref{eq:ew-real-valued} satisfies
\begin{align*}
&\En\left[\sum_{t=1}^T\ell(\En_{G^{(t)}}\tilde f_t(X_t),Y_t) -\inf_{f\in\cF}\sum_{t=1}^T\ell(f(X_t),Y_t)\right] \\
&\quad \lesssim\fat_{c\eps}(\cF)\log^2\frac{CT}{\sigma\eps}+T\eps +\sqrt{\frac T\sigma} \left[\sqrt{\log(2|\cG|)}\log^{2}\frac{2T}{\sigma} +\sqrt{T\log(2T)}\eps\right] +\frac{\log(2T)}\sigma.
\end{align*}
\end{lemma}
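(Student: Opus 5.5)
The proof follows the binary proof of \cref{lemma:ew}, with three changes: the clipped reference, the real-valued surprise bound, and an extra comparator-approximation step. We start from \cref{prop:clipped-reference-variance}, applied pathwise and then in expectation. It bounds the left-hand side, with comparator $\inf_{f}\sum_t\ell(\bar f(X_t),Y_t)$, by $\log|\cG|+T\eps+\sum_t\En_{G^{(t)}}\var_{g\sim w_t}(g(X_t))$.

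\textbf{Step 1: replace the clipped comparator by the true one.} Since $\ell$ is $1$-Lipschitz,
\[
\inf_f\sum_t\ell(\bar f(X_t),Y_t)-\inf_f\sum_t\ell(f(X_t),Y_t)\le\sup_f\sum_t|f(X_t)-\bar f(X_t)|.
\]
By \cref{lemma:clipped-class}(ii), this is at most $C\fat_{c\eps}(\cF)\log^2\frac{CT}{\sigma\eps}$ on an event of probability at least $1-1/T$. On the complement it is at most $2T$, contributing $O(1)$ in expectation. Also, $\log|\cG|\lesssim\fat_{c\eps}(\cF)\log^2\frac{CT}{\sigma\eps}$ by \cref{eq:real-cover-size}. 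Together these account for the first term of the claimed bound.

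\textbf{Step 2: bound the cumulative variance via a surprise lemma.} We need a predictable test function to which a surprise bound applies. Write
\[
\tfrac12\var_{g\sim w}(g(x))=\tfrac14\sum_{g,g'}w(g)w(g')(g(x)-g'(x))^2 .
\]
Then $v_t(x):=\tfrac12\En_{G^{(t)}}\var_{g\sim w_t}(g(x))$ is predictable, $[0,1]$-valued, and a probability mixture over the finite class $\cH:=\{x\mapsto\tfrac14(g(x)-g'(x))^2:g,g'\in\cG\}$. This class satisfies $\log|\cH|\le2\log|\cG|$.

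Rather than \cref{lemma:real-transfer}, we use \cref{cor:finite-surprise} applied to $\cH$, which avoids another fat-shattering step. Strictly, the corollary asks for $h_t\in\conv(\pm\cH)$, and $v_t$ lies in the closure of $\conv(\cH)$, so this is fine. It gives
\[
\En\sum_t v_t(X_t)\lesssim\sqrt{\frac{\log(2T)}{\sigma}\En\sum_t\sum_{s<t}v_t(X_s)^2}+\sqrt{\frac{\log(2|\cG|)T}{\sigma}}\log^2\frac{2T}{\sigma}.
\]

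\textbf{Step 3: bound the historical second moments.} By \cref{lemma:clipped-variance}, the inner sum is at most $\log(2|\cG|)+T\eps^2$ for every history. Hence the double sum is $\lesssim T\log(2|\cG|)+T^2\eps^2$. Using $\sqrt{a+b}\le\sqrt a+\sqrt b$, the first term becomes
\[
\sqrt{\frac{T\log(2T)\log(2|\cG|)}{\sigma}}+T\eps\sqrt{\frac{\log(2T)}{\sigma}} .
\]
This is dominated by $\sqrt{T/\sigma}\bigl[\sqrt{\log(2|\cG|)}\log^2\frac{2T}{\sigma}+\sqrt{T\log(2T)}\,\eps\bigr]$, which is the third term of the claimed bound.

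\textbf{Step 4: handle small $\sigma T$ and assemble.} When $\sigma T<1$, the surprise corollary's precondition fails. In that case the trivial bound $T\le1/\sigma$ is absorbed by the final $\frac{\log(2T)}{\sigma}$ term. Summing Steps 1–3 then gives the stated bound.

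The main obstacle is Step 2: the variance must be exhibited as a mixture over a class of controlled complexity. Using the finite squared-difference class over the cover $\cG$ keeps the complexity at $\log|\cG|$. If instead the corollary had to be invoked with $h_t\in\conv(\pm\cH)$ in a form that requires signed functions, we would rescale $v_t$ into $[-1,1]$ by the affine map $2v_t-1$. However, \cref{lemma:mixture-norm} only uses restriction counts, so nonnegative mixtures should pass directly.
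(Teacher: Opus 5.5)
Your proposal is correct and follows the paper's proof essentially step for step: start from \cref{prop:clipped-reference-variance}, swap the clipped comparator for the true one using Lipschitzness and \cref{lemma:clipped-class}(ii) (with an $O(1)$ contribution from the failure event), then write the Gaussian-averaged variance as a mixture over the finite squared-difference class on $\cG$ and combine \cref{cor:finite-surprise} with \cref{lemma:clipped-variance}. The only difference is your separate treatment of $\sigma T<1$, which is harmless but unnecessary, since \cref{cor:finite-surprise} has no precondition and its right-hand side already exceeds a constant multiple of $T$ in that regime.
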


\begin{proof}[Proof of~\cref{lemma:clipped-reference-regret}]
Apply~\cref{prop:clipped-reference-variance} to the cover from~\cref{prop:fat-cover}.
The difference in the comparator terms over the clipped functions and $\cF$ is upper bounded as:
\begin{align*}
\left|\inf_{f\in\cF}\sum_{t=1}^T\ell(\bar f(X_t),Y_t) -\inf_{f\in\cF}\sum_{t=1}^T\ell(f(X_t),Y_t)\right| &\le\sup_{f\in\cF}\sum_{t=1}^T|\bar f(X_t)-f(X_t)|,
\end{align*}
since $\ell(\cdot,Y_t)$ is $1$-Lipschitz.
The right-hand side is at most $2T$, so, by~\cref{lemma:clipped-class}(ii), its expectation is at most $C\fat_{c\eps}(\cF)\log^2(CT/(\sigma\eps))+ O(1)$.
Thus, applying this estimate to~\eqref{eq:clipped-fixed-data-regret}, we have:
\begin{align}
&\En\left[\sum_{t=1}^T\ell(\En_{G^{(t)}}\tilde f_t(X_t),Y_t) -\inf_{f\in\cF}\sum_{t=1}^T\ell(f(X_t),Y_t)\right] \notag \\*
&\quad \lesssim\log|\cG|+T\eps+ \En\sum_{t=1}^T\En_{G^{(t)}} \var_{g\sim w_t(\cdot\mid G^{(t)})}(g(X_t)) +\fat_{c\eps}(\cF)\log^2\frac{CT}{\sigma\eps}. \label{eq:clipped-reference-regret}
\end{align}
The above reduces the proof to bounding the cumulative prediction variance.
We know from~\cref{lemma:clipped-variance} that the variance of $g(X_s)$ under $g\sim w_t$ on the past covariates $X_s$, $s<t$, can be suitably upper bounded.
To upper bound the above, we will show that variance terms can be expressed as predictable test functions in the convex hull of a finite class, and apply~{\cref{cor:finite-surprise}}.
Indeed, for every $t$ and $x\in\cX$, we have
\[
\frac12\En_{G^{(t)}} \var_{g\sim w_t(\cdot\mid G^{(t)})}(g(x)) =\En_{G^{(t)}}\sum_{g,g'\in\cG} w_t(g\mid G^{(t)})w_t(g'\mid G^{(t)}) \frac{(g(x)-g'(x))^2}{4}.
\]
This is a predictable test-function that belongs to the convex hull of the squared-difference class:
\[
\left\{x\mapsto(g(x)-g'(x))^2/4: g,g'\in\cG\right\}.
\]
It has at most $|\cG|^2$ elements.
If $|\cG|=1$, all variances vanish; otherwise, using~\cref{lemma:clipped-variance} and applying~\cref{cor:finite-surprise} to the sequence of variances, with the finite class from above in place of $\cF$, gives
\begin{align*}
&\En\sum_{t=1}^T\En_{G^{(t)}} \var_{g\sim w_t(\cdot\mid G^{(t)})}(g(X_t)) \\
&\quad \lesssim\sqrt{\frac{T\log(2T)}\sigma \left(\log(2|\cG|)+T\eps^2\right)} +\sqrt{\frac{T\log|\cG|}\sigma}\log^{2}\frac{2T}{\sigma} +\frac{\log(2T)}\sigma.
\end{align*}
Plugging this into~\cref{eq:clipped-reference-regret} yields:
\begin{align*}
&\En\left[\sum_{t=1}^T\ell(\En_{G^{(t)}}\tilde f_t(X_t),Y_t) -\inf_{f\in\cF}\sum_{t=1}^T\ell(f(X_t),Y_t)\right] \\
&\quad \lesssim\log|\cG|+T\eps +\sqrt{\frac{T\log(2T)}\sigma \left(\log(2|\cG|)+T\eps^2\right)} \\
&\qquad +\sqrt{\frac{T\log|\cG|}\sigma}\log^{2}\frac{2T}{\sigma} +\frac{\log(2T)}\sigma +\fat_{c\eps}(\cF)\log^2\frac{CT}{\sigma\eps}.
\end{align*}
To obtain the stated regret bound, we use~\cref{prop:fat-cover} to bound $\log|\cG|$ and use $\sqrt{a+b}\le\sqrt a+\sqrt b$.
This concludes the proof.
\end{proof}

\subsection{Proof of the real-valued theorem}\label{app:convex-proofs}

The preceding bounds control the Gaussian mean of FTPL.
It remains to account for the finite average in~\eqref{eq:algo-rv} and combine the prediction and reference-regret bounds.

\mainrealvalued*
\begin{proof}[Proof of~\cref{thm:main-real-valued}]
Fix $0<\eps\le1/16$ with $\fat_{c\eps}(\cF)<\infty$.
If $1+\fat_{c\eps}(\cF)>T$ or $T<1/\sigma$, then $2T$ is at most a constant times the expression inside the infimum in~\eqref{eq:main-rate-real-valued}, so the bound at this $\eps$ follows from $\reg_T\le2T$.
Otherwise, $1+\fat_{c\eps}(\cF)\le T$ and $T\ge1/\sigma$, and we construct the fixed cover and the modified exponential weights algorithm from~\cref{eq:ew-real-valued}.

Conditional on $\cH_{t-1},X_t,Y_t$, the average of the $t$ independent oracle predictions has variance at most $1/t$ about $\En_{G^{(t)}}\hat f_t(X_t)$.
Cauchy--Schwarz and Lipschitzness give
\[
\En\left[\ell(\hat Y_t,Y_t) -\ell(\En_{G^{(t)}}\hat f_t(X_t),Y_t) \mid\cH_{t-1},X_t,Y_t\right]\le t^{-1/2}.
\]
Summing over $t$ and using $\sum_{t\le T}t^{-1/2}\le2\sqrt T$, we obtain
\begin{equation}\label{eq:real-averaging-cost}
\En\reg_T\le2\sqrt T+\En\left[\sum_{t=1}^T\ell(\En_{G^{(t)}}\hat f_t(X_t),Y_t)-\inf_{f\in\cF}\sum_{t=1}^T\ell(f(X_t),Y_t)\right].
\end{equation}
We decompose the regret of the Gaussian mean on the right-hand side as
\begin{align*}
\sum_{t=1}^T\ell(\En_{G^{(t)}}\hat f_t(X_t),Y_t)-\inf_{f\in\cF}\sum_{t=1}^T\ell(f(X_t),Y_t) &=\sum_{t=1}^T\left[\ell(\En_{G^{(t)}}\hat f_t(X_t),Y_t) -\ell(\En_{G^{(t)}}\tilde f_t(X_t),Y_t)\right] \\
&\qquad +\left[\sum_{t=1}^T\ell(\En_{G^{(t)}}\tilde f_t(X_t),Y_t) -\inf_{f\in\cF}\sum_{t=1}^T\ell(f(X_t),Y_t)\right].
\end{align*}
Since the loss is $1$-Lipschitz, the first sum is bounded by~\cref{lemma:real-reduction} in expectation.
\Cref{lemma:clipped-reference-regret} bounds the expectation of the second bracket.
Plugging these bounds into~\eqref{eq:real-averaging-cost}, and using~\eqref{eq:real-cover-size} to bound $\sqrt{\log(2|\cG|)}\lesssim\sqrt{1+\fat_{c\eps}(\cF)}\log(CT/(\sigma\eps))$ and $\log(2T/\sigma)\le\log(CT/(\sigma\eps))$, yields
\begin{align*}
\En\reg_T &\lesssim\sqrt T+\left[\left(1+\fat_{c\eps}(\cF)\right)\sqrt{\frac T\sigma} +\frac{T\eps}{\sqrt\sigma}\right] \log^3\frac{CT}{\sigma\eps} +\frac{\log(2T)}\sigma \\
&\qquad +\fat_{c\eps}(\cF)\log^2\frac{CT}{\sigma\eps}+T\eps \\
&\qquad +\sqrt{\frac T\sigma} \left[\sqrt{1+\fat_{c\eps}(\cF)}\log^{3}\frac{CT}{\sigma\eps}+\sqrt{T\log(2T)}\eps\right].
\end{align*}
To obtain~\eqref{eq:main-rate-real-valued}, note that each term on the right-hand side is at most a constant times
\[
\left(1+\fat_{c\eps}(\cF)+\sqrt T\eps\right) \sqrt{\frac T\sigma}\log^{3}\frac{CT}{\sigma\eps}.
\]
For the term $\log(2T)/\sigma$, this uses $T\ge1/\sigma$, which gives $1/\sigma\le\sqrt{T/\sigma}$; for the remaining terms, it uses $\sigma\le1$ and $\log(CT/(\sigma\eps))\ge1$.
Moreover, $T/(\sigma\eps)\ge16$ gives $\log(CT/(\sigma\eps))\lesssim\log(2T/(\sigma\eps))$.
The learner does not depend on $\eps$, so taking the infimum over $\eps$ proves~\eqref{eq:main-rate-real-valued}.
\end{proof}

\section{\texorpdfstring{$L^\star$-dependent regret}{L-star-dependent regret}}\label{app:lstar}

In this section, we demonstrate that Gaussian FTPL is capable of achieving a regret bound that adapts to the loss of the best comparator in binary classification. In particular, we derive bound similar to~\cref{thm:main} in terms of the best comparator's number of mistakes, denoted by: 
\[
L^\star=\min_{f\in\cF}\sum_{t=1}^T\ind\{f(X_t)\ne Y_t\}.
\]
\begin{theorem}\label[theorem]{thm:lstar}
Consider a binary class $\cF \colon \cX \to \{\pm1\}$ of VC dimension $d\ge1$, and let $\ell(\hat y,y)=-\hat yy$ be the binary loss. For every horizon $T\ge1$ and every covariate process that is $\sigma$-smooth with respect to a fixed $\mu$, the learner in~\eqref{eq:ftpl} satisfies
\begin{equation}\label{eq:lstar-rate}
\En\reg_T \lesssim \sqrt{\frac T\sigma} \log^2(2T/\sigma)\left(\sqrt{d^2 \land \En L^\star} +\sqrt d\right).
\end{equation}
\end{theorem}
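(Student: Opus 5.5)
The plan is to reuse the two-step decomposition behind \cref{thm:main}, but to make both the FTPL/exponential-weights discrepancy (\cref{lemma:reduction}) and the exponential-weights regret (\cref{lemma:ew}) sensitive to $L^\star$. The term $\sqrt{d}\sqrt{T/\sigma}\log^2(2T/\sigma)$ already appears in \cref{lemma:ew}, so only the reduction term $d\log^2(2T/\sigma)\sqrt{T/\sigma}$ has to be improved to $\sqrt{\En L^\star}\,\sqrt{T/\sigma}\log^2(2T/\sigma)$ whenever $\En L^\star\le d^2$. When $\En L^\star\ge d^2$ there is nothing to do: the minimum in~\eqref{eq:lstar-rate} equals $d^2$ and the bound is exactly \cref{thm:main}. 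So I assume $\En L^\star< d^2$ from now on.

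The key step is to sharpen \cref{lemma:empirical-prediction}. In its proof I would bound $\var_G(\Phi_t-\widetilde\Phi_t)$ directly, rather than going through the cruder estimate $\En_G(\Phi_t-\widetilde\Phi_t)^2$, since the entropy gap and other deterministic offsets cancel in the variance. I would use the Gaussian Poincaré inequality:
\[
\var_G(\Phi_t-\widetilde\Phi_t)\le \En_G\|\nabla(\Phi_t-\widetilde\Phi_t)\|_2^2=\En_G\sum_{s<t}(\hat f_t(X_s)-\tilde f_t(X_s))^2 .
\]
It then suffices to show that, in expectation over $G$, both $\hat f_t$ and $\tilde f_t$ disagree with $Y_s$ on past points on the order of $L^\star_{t-1}+d\log(2T/\sigma)$ times, where $L^\star_{t-1}$ is the best comparator's mistake count up to round $t-1$. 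Each squared difference is at most $4\,\ind\{\hat f_t(X_s)\ne Y_s\}+$ (the corresponding term for $\tilde f_t$). The empirical mistake count of a perturbed leader can be controlled through its optimality: $\hat f_t$ beats $f^\star$ on the perturbed objective, so
\[
2\#\text{mistakes}(\hat f_t)\le 2L^\star_{t-1}+\sum_s G_s(f^\star-\hat f_t)(X_s).
\]
The Gaussian term is a supremum over differences $f^\star-f$ supported on at most $k$ points, and by Sauer--Shelah it is at most $\sqrt{k\,d\log t}$. Solving this self-bounding inequality gives $k\lesssim L^\star_{t-1}+d\log(2T)$. The same argument applies to $\tilde f_t$, with an extra $\log|\cG|$ term from the entropy and the cover approximation from \cref{prop:cover}. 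This yields
\[
\En\sum_{s<t}(\En_G\hat f_t-\En_G\tilde f_t)^2(X_s)\lesssim \En L^\star+d\log(2T/\sigma).
\]
Feeding this into \cref{lemma:surprise}, exactly as in the proof of \cref{lemma:reduction}, gives a reduction cost of $\sqrt{(T\log(2T)/\sigma)(\En L^\star+d\log(2T/\sigma))}$ plus $\sqrt{dT/\sigma}\log^2(2T/\sigma)$. That matches the target.

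I expect the main obstacle to be the self-bounding step for $\tilde f_t$ in expectation. Its weights are soft, so I would bound $\En_{f\sim w_t}\#\text{mistakes}(f)$ through the variational form of $\widetilde\Phi_t$: its value is at least the perturbed score of $f^\star$'s cover element, and at most the expected score plus the entropy $\log|\cG|$. A second point needing care is that the Poincaré bound involves $\En_G$ of the squared differences, not the difference of the means. This is harmless, because it only makes the bound weaker in the right direction. Finally, adding \cref{lemma:ew} and collecting logarithmic factors gives~\eqref{eq:lstar-rate}.
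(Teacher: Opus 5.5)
Your argument is correct, but it takes a longer route than the paper's. The paper never returns to the exponential-weights comparator. It proves exactly your self-bounding estimate for FTPL in \cref{lemma:lstar-loss}, namely $\En_{G}\sum_{s<t}\ind\{\hat f_t(X_s)\ne Y_s\}\le 2L^\star_{t-1}+2d\log(2t)$, via the basic inequality plus an offset Gaussian complexity bound. That bound also cleanly handles the random support size that your ``at most $k$ points'' glosses over: the disagreement set of $f^\star$ and $\hat f_t$ can have size up to $k+L^\star_{t-1}$.

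The paper then applies \cref{lemma:surprise} directly to the mistake indicator $h_t(x,y)=\En_{G^{(t)}}\ind\{\hat f_t(x)\ne y\}$ on the pair process $(X_t,Y_t)$. This is legitimate because $(X_t,Y_t)$ is $\sigma/2$-smooth with respect to $\mu\otimes\unif(\{\pm1\})$, and $h_t$ lies in the signed convex hull of $\{(x,y)\mapsto yf(x)\}\cup\{1\}$, a class of VC dimension $O(d)$. Since $\reg_T$ is at most twice the mistake count, the bound follows with no cover, no Stein's lemma, and no appeal to \cref{lemma:ew}.

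Your route stays on the covariate process and instead shows that the FTPL/exponential-weights discrepancy in \cref{lemma:reduction} is itself $L^\star$-adaptive. This costs more. You need a second mistake bound for the soft weights $w_t$, which you handle correctly through the variational form, with an entropy term and a cover-approximation term. You then still need \cref{lemma:ew} on top.

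Your Poincar\'e step is also redundant. The inequality $\sum_{s<t}(\En_G\hat f_t(X_s)-\En_G\tilde f_t(X_s))^2\le\En_G\sum_{s<t}(\hat f_t(X_s)-\tilde f_t(X_s))^2$ is just Jensen's inequality, so the Stein/potential machinery does no work in your version of the argument.
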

The learner does not require knowledge of $L^\star$.
In particular, in the realizable case, $L^\star = 0$ almost surely, thus, Gaussian FTPL recovers the bound \[
\En\reg_T\lesssim\sqrt{\frac{dT}\sigma}\log^2(2T/\sigma),
\]
which, up to logarithmic factors, is optimal in the realizable case~\citep{blanchard2025agnostic}. Note that, in order to establish~\cref{thm:lstar}, it suffices to prove: 
\[
\En\reg_T \lesssim \sqrt{\frac T\sigma} \log^2(2T/\sigma) (\sqrt{\En L^\star} +\sqrt d),
\]
since the second point in the minimum is given by~\cref{thm:main}. The proof of the above statement is shorter than~\cref{thm:main} and is more reminiscent of the proof for ERM~\citep{block2024performance}.
We first bound the expected number of mistakes that Gaussian FTPL makes on the past covariates.

\begin{lemma}\label[lemma]{lemma:lstar-loss}
Let $\cF\subseteq\{\pm1\}^{\cX}$ have VC dimension $d\ge1$.
Fix a history $(X_{1:t-1},Y_{1:t-1})$, and write $G=G^{(t)}$.
The forecaster in~\eqref{eq:ftpl} satisfies
\begin{equation}\label{eq:lstar-historical-loss}
\En_G\sum_{s<t}\ind\{\hat f_t(X_s)\ne Y_s\} \le 2\min_{f\in\cF}\sum_{s<t}\ind\{f(X_s)\ne Y_s\}+2d\log(2t).
\end{equation}
\end{lemma}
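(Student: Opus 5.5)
The plan is to avoid comparing $\hat f_t$ with a fixed minimizer $f^\star$ through a self-normalized Gaussian bound on the disagreement set. Optimizing over $\lambda$ in that route seems to lose a factor $9/4$ in front of $\log|\cF|_{X_{1:t-1}}|$, which is too weak for $2d\log(2t)$. Instead I would exploit \emph{convexity of the FTPL potential along the ray $\theta\mapsto\theta G$}.

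\textbf{Setup and two identities.} Fix the history, let $n=t-1$, and write $M(f)=\sum_{s<t}\ind\{f(X_s)\ne Y_s\}$, $\hat M=M(\hat f_t)$ and $M^\star=\min_{f\in\cF}M(f)$. Writing $\langle G,f\rangle:=\sum_{s<t}G_sf(X_s)$, we have $\sum_{s<t}Y_sf(X_s)=n-2M(f)$, so the potential from the proof of \cref{lemma:empirical-prediction} is
\[
\Phi_t(G)=\max_{f\in\cF}\bigl\{n-2M(f)-\langle G,f\rangle\bigr\}.
\]
Its maximizer is exactly $\hat f_t$, so $\Phi_t(G)=n-2\hat M-\langle G,\hat f_t\rangle$, that is,
\[
2\hat M=n-\Phi_t(G)-\langle G,\hat f_t\rangle.
\]
Define $h(\theta)=\En_G\Phi_t(\theta G)$ for $\theta\ge0$. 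Since $\Phi_t$ is a maximum of affine functions, $h$ is convex. By Danskin's theorem (maximizers agree on $X_{1:t-1}$ almost surely, as in the paper), $\frac{\dd}{\dd\theta}\Phi_t(\theta G)\big|_{\theta=1}=-\langle G,\hat f_t\rangle$. The Lipschitz bound $|\Phi_t(\theta G)-\Phi_t(\theta' G)|\le|\theta-\theta'|\|G\|_1$ justifies exchanging derivative and expectation, so $h'(1)=-\En_G\langle G,\hat f_t\rangle$. Taking expectations in the display above gives
\[
2\En_G\hat M=n-h(1)+h'(1).
\]

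\textbf{Convexity step.} Convexity gives $h'(1)\le h(2)-h(1)$. Jensen's inequality gives $h(1)\ge\max_f\En_G[n-2M(f)-\langle G,f\rangle]=n-2M^\star$. Therefore
\[
2\En_G\hat M\le n-2h(1)+h(2)\le 4M^\star-n+\En_G\max_{f\in\cF}\bigl\{n-2M(f)-2\langle G,f\rangle\bigr\},
\]
which simplifies to
\[
\En_G\hat M\le 2M^\star+\En_G\max_{f\in\cF}\bigl\{-\langle G,f\rangle-M(f)\bigr\}.
\]

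\textbf{Gaussian maximal inequality.} Write $-f(X_s)=-Y_s+2Y_s\ind\{f(X_s)\ne Y_s\}$. The term $-\sum_s G_sY_s$ does not depend on $f$ and has mean zero, so it drops out in expectation. Setting $w_f(s)=2Y_s\ind\{f(X_s)\ne Y_s\}$, which satisfies $\|w_f\|_2^2=4M(f)$, it remains to bound $\En_G\max_f\{\langle G,w_f\rangle-M(f)\}$. The vectors $w_f$ depend only on the restriction of $f$ to $X_{1:t-1}$. By Sauer--Shelah (\cref{prop:classical-covers}(i)) there are at most $N\le\sum_{i\le d}\binom{t-1}{i}\le t^d\le(2t)^d$ distinct such vectors. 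For each $f$, $\En\exp(\eta\langle G,w_f\rangle-\tfrac{\eta^2}{2}\|w_f\|_2^2)=1$. Taking $\eta=1/2$ makes the compensator $\tfrac{\eta^2}{2}\cdot4M(f)=\tfrac{\eta}{2}\cdot4M(f)\cdot\eta=\eta M(f)$, so
\[
\En_G\max_f\{\langle G,w_f\rangle-M(f)\}=\frac1\eta\En_G\max_f\Bigl\{\eta\langle G,w_f\rangle-\tfrac{\eta^2}{2}\|w_f\|_2^2\Bigr\}\le\frac{\log N}{\eta}\le 2d\log(2t),
\]
where the middle inequality is Jensen applied to the log-sum-exp over the $N$ patterns. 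Combining with the previous display yields~\eqref{eq:lstar-historical-loss} exactly.

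The main obstacle is the convexity step: one must see that the perturbation term $-\En\langle G,\hat f_t\rangle$ equals $h'(1)$. Bounding it by $h(2)-h(1)$ trades the unknown derivative for a doubled-noise potential. That potential is controlled using the mistakes themselves as the variance proxy, and this is what produces the sharp constants $2$ and $2d\log(2t)$. The remaining technical care is justifying differentiation under the expectation, via the Lipschitz bound, and the almost-sure uniqueness of predictions on $X_{1:t-1}$.
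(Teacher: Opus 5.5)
Your proof is correct and, once unpacked, is the paper's argument in different language. Pointwise, your convexity step $h'(1)\le h(2)-h(1)$ is just the inequality $\Phi_t(2G)\ge n-2\hat M-2\langle G,\hat f_t\rangle$, obtained by plugging $\hat f_t$ into the doubled-noise maximum. That is exactly the paper's step of subtracting half the learner's squared error and passing to $\max_{f\in\cF}$. Your Jensen bound $h(1)\ge n-2M^\star$ is the paper's basic inequality against a fixed comparator, whose noise term has mean zero. Both routes end at the identical offset Gaussian complexity $\En_G\max_{f}\{2\sum_{s<t}G_s(Y_s-f(X_s))-\tfrac12\sum_{s<t}(f(X_s)-Y_s)^2\}\le 4d\log(2t)$.

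This pointwise reading also makes your Danskin and differentiation-under-the-integral justifications unnecessary. The vector $-\langle G,\hat f_t\rangle$ is always a subgradient of $\theta\mapsto\Phi_t(\theta G)$ at $\theta=1$, whether or not there are ties.
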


\begin{proof}
The claim is immediate for $t=1$.
Fix $t\ge2$ and $g\in\cF$.
All expectations are over $G$, with the history held fixed.
Since $(f(X_s)-Y_s)^2=2(1-Y_sf(X_s))$, optimality of $\hat f_t$ in~\eqref{eq:ftpl} gives the basic inequality
\begin{align*}
\sum_{s<t}(\hat f_t(X_s)-Y_s)^2 &\le \sum_{s<t}(g(X_s)-Y_s)^2 +2\sum_{s<t}G_s(g(X_s)-\hat f_t(X_s)).
\end{align*}
Split $g(X_s)-\hat f_t(X_s)=(g(X_s)-Y_s)+(Y_s-\hat f_t(X_s))$ and subtract half the learner's squared error.
Taking expectations yields: 
\begin{align*}
\frac12\En_G\sum_{s<t}(\hat f_t(X_s)-Y_s)^2 &\le \sum_{s<t}(g(X_s)-Y_s)^2\\&\quad+\En_G\max_{f\in\cF}\left\{2\sum_{s<t}G_s(Y_s-f(X_s))-\frac12\sum_{s<t}(f(X_s)-Y_s)^2\right\}.
\end{align*}
The last term can be interpreted as an offset Gaussian complexity~\citep{liang2015learning}. From Gaussian MGF, we have:
\[
\En_G\exp\left(\frac12\sum_{s<t}G_s(Y_s-f(X_s))-\frac18\sum_{s<t}(f(X_s)-Y_s)^2\right)=1.
\]
Applying Jensen's inequality conditional on the values $X_{1:t-1}$ and summing over the distinct prediction vectors on these covariates gives
\begin{align*}
&\En_G\max_{f\in\cF}\left\{2\sum_{s<t}G_s(Y_s-f(X_s))-\frac12\sum_{s<t}(f(X_s)-Y_s)^2\right\} \\
&\quad \le 4\log \En_G \exp\left(\max_{f\in\cF}\left\{\frac12\sum_{s<t}G_s(Y_s-f(X_s))- \frac18\sum_{s<t}(f(X_s)-Y_s)^2\right\}\right) \\
&\quad \le 4\log \En_G \sum_{v\in\left.\cF\right|_{X_{1:t-1}}} \exp\left\{\frac12\sum_{s<t}G_s(Y_s-v_s)-\frac18\sum_{s<t}(v_s-Y_s)^2\right\} \\
&\quad\le 4\log\left|\left.\cF\right|_{X_{1:t-1}}\right|
\le 4d\log(2t),
\end{align*}
where the last inequality follows from Sauer--Shelah (part~(i) of~\cref{prop:classical-covers}).
Substituting this bound and using $(f(X_s)-Y_s)^2=4\ind\{f(X_s)\ne Y_s\}$ gives
\[
\En_G\sum_{s<t}\ind\{\hat f_t(X_s)\ne Y_s\}
\le 2\sum_{s<t}\ind\{g(X_s)\ne Y_s\}+2d\log(2t).
\]
Minimizing over $g\in\cF$ proves the claim.
\end{proof}

The proof of the regret bound is concluded by applying the surprise lemma~\cref{lemma:surprise} to translate~\cref{lemma:lstar-loss} into a guarantee on the \emph{current} prediction loss.

\begin{proof}[Proof of~\cref{thm:lstar}]
The idea is to apply~\cref{lemma:surprise} to the process $(X_t,Y_t)$. First, note that conditional law of $(X_t,Y_t)$ given $\cH_{t-1}$ is $\sigma/2$-smooth with respect to $\mu\otimes\unif(\{\pm1\})$. Let
\[
h_t(x,y) := \En_{G^{(t)}} \frac{1- y \hat f_t(x)}{2} = \En_{G^{(t)}} \ind\{\hat f_t(x) \neq y\}.
\]
Note that $h_t$ is a predictable test function which belongs to the signed convex hull of the class
\[
\{(x,y)\mapsto yf(x):f\in\cF\} \cup\{1\},
\]
whose VC dimension is in $O(d)$.
Since $h_t^2\le h_t$, we have
\begin{equation}\label{eq:lstar-historical-moments}
\begin{aligned}
\En\sum_{t=1}^T\sum_{s<t}h_t(X_s,Y_s)^2 &\le\En\sum_{t=1}^T\En_{G^{(t)}}\sum_{s<t}\ind\{\hat f_t(X_s)\ne Y_s\}\\
&\lesssim T\En L^\star+dT\log(2T),
\end{aligned}
\end{equation}
where the second transition used~\cref{lemma:lstar-loss}. Therefore, using~\cref{lemma:surprise}, we have
\begin{align*}
\En \sum_{t=1}^T \ind \{\hat f_t(X_t) \neq Y_t\} &=
\En\sum_{t=1}^T h_t(X_t,Y_t) \\ &\lesssim\sqrt{\frac{T\log(2T)}\sigma\left(\En L^\star+d\log(2T)\right)}+\sqrt{\frac{dT}\sigma}\log^2(2T/\sigma)\\
&\lesssim\sqrt{\frac{T\log(2T)}\sigma\En L^\star}+\sqrt{\frac{dT}\sigma}\log^2(2T/\sigma).
\end{align*}
The first equality uses the conditional independence of $(X_t,Y_t)$ and $G^{(t)}$ given $\cH_{t-1}$.
Since $\reg_T$ is at most a factor of $2$ times number of mistakes, the inequality above concludes the proof.
\end{proof}

\end{document}